\documentclass{article}

\usepackage{microtype}
\usepackage{graphicx}
\usepackage{subcaption}
\usepackage{booktabs} 
\usepackage{multirow}
\usepackage{array}
\usepackage{pifont}
\usepackage{float}

\newcolumntype{L}[1]{>{\raggedright\arraybackslash}p{#1}}
\newcolumntype{C}[1]{>{\centering\arraybackslash}p{#1}}

\usepackage{hyperref}

\usepackage[accepted]{icml2026}

\usepackage{amsmath}
\usepackage{amssymb}
\usepackage{mathtools}
\usepackage{amsthm}
\usepackage{xcolor}
\usepackage{tikz}
\usetikzlibrary{fit, shapes.geometric, backgrounds, positioning, arrows.meta, calc}

\theoremstyle{plain}
\newtheorem{theorem}{Theorem}[section]

\newtheorem{lemma}[theorem]{Lemma}
\newtheorem{corollary}[theorem]{Corollary}
\theoremstyle{definition}
\newtheorem{definition}[theorem]{Definition}
\newtheorem{assumption}[theorem]{Assumption}
\theoremstyle{remark}

\DeclareMathOperator*{\argmin}{arg\,min}
\DeclareMathOperator*{\argmax}{arg\,max}
\newcommand{\cmark}{\ding{51}} 
\newcommand{\xmark}{\ding{55}} 

\usepackage[textsize=tiny]{todonotes}

\icmltitlerunning{Convex Distance Operator Transport}

\begin{document}
	
\twocolumn[
\icmltitle{Convex Distance Operator Transport: \\ A Convex and Geometry-Preserving Formulation}



\icmlsetsymbol{equal}{*}

\begin{icmlauthorlist}
	\icmlauthor{Junhyoung Chung}{krafton}
	\icmlauthor{Euijong Song}{snu}
	\icmlauthor{Won Hwa Kim}{postecai,posteccs} 
	\icmlauthor{Gunwoong Park}{snu,snuai,snulamp}
\end{icmlauthorlist}

\icmlaffiliation{krafton}{KRAFTON}
\icmlaffiliation{snu}{Department of Statistics, Seoul National University, Seoul, South Korea}
\icmlaffiliation{snuai}{IPAI, Seoul National University}
\icmlaffiliation{snulamp}{IDIS, Seoul National University}
\icmlaffiliation{postecai}{Graduate School of AI, POSTECH, Pohang, South Korea}
\icmlaffiliation{posteccs}{Computer Science and Engineering, POSTECH}

\icmlcorrespondingauthor{Gunwoong Park}{gwpark23@snu.ac.kr}

\icmlkeywords{convex optimization, dispersion gap, Gromov--Wasserstein, metric-measure space, optimal transport, structured data}

\vskip 0.3in
]



\printAffiliationsAndNotice{}  

\begin{abstract}
	We introduce Convex Distance Operator Transport (CDOT), the first convex optimal transport framework that aligns distributions across heterogeneous domains by jointly preserving feature correspondence and intrinsic geometric structure. Specifically, CDOT employs an operator-based regularization that aligns aggregated distance structures by introducing distance and conditional expectation operators. Consequently, the proposed regularization improves the robustness to local geometric variations.
	We further prove that the resulting CDOT discrepancy is a valid pseudometric on the space of attributed compact metric-measure spaces.
    In addition, we characterize the relationship between CDOT and Gromov--Wasserstein (GW) through a new notion of dispersion gap, formally elucidating the geometric source of non-convexity in GW compared to the convexity of CDOT.
	In the finite-sample regime, we derive a non-asymptotic risk bound decomposed into optimization and statistical errors, establishing risk consistency under a globally convergent Frank--Wolfe algorithm.
	Experiments on synthetic point clouds, brain connectomes, and graph classification benchmarks demonstrate better performance over existing methods, with stable and reliable behavior in practice.
\end{abstract}

\section{Introduction}

Optimal transport (OT) provides a fundamental framework for comparing probability measures by quantifying the minimal cost required to transport mass across spaces~\citep{villani2008optimal}. As OT induces meaningful correspondences between samples, it has become a powerful tool in statistics and machine learning with various applications ranging from generative modeling to transfer learning~\citep{rubner2000earth,courty2014domain,yuan2025optimal}.

However, the classical OT based on the Wasserstein distance necessitates that data distributions reside in the same space, which poses a restriction in many practical settings where observations come from heterogeneous domains. The Gromov--Wasserstein (GW) framework overcomes this limitation by comparing distributions through their associated metric-measure (mm) spaces, aligning intrinsic structure without requiring an explicit cross-domain cost~\citep{memoli2011gromov}. The resulting GW discrepancy induces a pseudometric on the space of compact mm spaces, vanishing if and only if the spaces are measure-preserving isometric. More recently, \citet{vayer2020fused} propose the fused Gromov--Wasserstein (FGW) approach, which integrates the structural GW cost with a feature-based OT cost to utilize auxiliary information such as node attributes in graphs.

Despite its versatility, FGW inherits the non-convexity of GW arising from its formulation involving pairwise distance comparisons, thereby standard optimization algorithms typically converge only to local stationary points. This limitation calls for a convex objective that retains the metric property while producing a meaningful transport plan.


\begin{figure}[t]
	\centering
	\tikzstyle{node_style} = [circle, draw=black, fill=gray!10, inner sep=2pt, minimum size=5pt]
	\tikzstyle{edge_style} = [draw=gray!50, thin]
	\tikzstyle{highlight_edge} = [draw=red, thick]
	\tikzstyle{highlight_node} = [circle, draw=blue, fill=blue!20, inner sep=2pt, minimum size=5pt]
	\tikzstyle{arrow_style} = [<->, >=stealth, thick, dashed]
	
	\begin{subfigure}[b]{0.49\linewidth}
		\centering
		\begin{tikzpicture}[scale=1.0, every node/.style={transform shape}]
			\def\r{1.2}
			
			\foreach \i in {1,...,8}{
				\node[node_style] (x\i) at ({360/8*(\i-1)}:\r) {};
			}
			
			\foreach \i [evaluate=\i as \j using {int(mod(\i,8)+1)}] in {1,...,8}{
				\draw[edge_style] (x\i) -- (x\j);
			}
			
			\node[node_style, color=blue, scale=0.3] (x9) at (0,0) {};
			
			\draw[highlight_edge] (x1) -- node[midway] (ex1) {} (x2); 
			\node[text=red, above right] at (ex1) {$d_{xx'}$};
			
			\draw[edge_style, color=blue] (x1) -- node[midway] (ex2) {} (x9);
			\node[text=blue, below] at (ex2) {$\mathbb{E}[d]$};
		\end{tikzpicture}
		\caption{Cyclic graph with 8 nodes.}
		\label{fig:cycle-c8}
	\end{subfigure}
	\hfill
	\begin{subfigure}[b]{0.49\linewidth}
		\centering
		\begin{tikzpicture}[scale=1.0, every node/.style={transform shape}]
			\def\r{1.2}
			
			\foreach \i in {1,...,12}{
				\node[node_style] (y\i) at ({360/12*(\i-1)}:\r) {};
			}
			
			\foreach \i [evaluate=\i as \j using {int(mod(\i,12)+1)}] in {1,...,12}{
				\draw[edge_style] (y\i) -- (y\j);
			}
			
			\node[node_style, color=blue, scale=0.3] (y9) at (0,0) {};
			
			\draw[highlight_edge] (y1) -- node[midway] (ey1) {} (y2); 
			\node[text=red, above right] at (ey1) {$d_{yy'}$};
			
			\draw[edge_style, color=blue] (y1) -- node[midway] (ey2) {} (y9);
			\node[text=blue, below] at (ey2) {$\mathbb{E}[d']$};
		\end{tikzpicture}
		\caption{Cyclic graph with 12 nodes.}
		\label{fig:cycle-c12}
	\end{subfigure}
	
	\caption{Cyclic graphs with different numbers of nodes. While FGW compares distances edge by edge ($|d - d'|^2$), CDOT aligns the aggregated distance profiles ($|\mathbb{E}[d]-\mathbb{E}[d']|^2$), comparing how each node relates to its geometric structure. 
	} 
	\label{fig:cycle-example}\vspace{-5mm}
\end{figure}

\begin{table*}[t]
	\centering
	\caption{Comparison of CDOT to existing methods. \textbf{Domain} groups methods by the underlying space of interest. \textbf{Plan} denotes whether the algorithm explicitly outputs a coupling/matching. \textbf{Theoretical Guarantees} summarize (pseudo-)metricity, convexity of the underlying optimization, and whether asymptotic consistency and finite-sample guarantees are available on the stated domain.}
	\label{tab:foot-baselines}
	\footnotesize
	\setlength{\tabcolsep}{3.5pt}
	\renewcommand{\arraystretch}{1.15}
	
	\newcommand{\domMM}{\shortstack[l]{Metric-measure\\(mm) spaces}}
	\newcommand{\domDMM}{\shortstack[l]{Discrete\\mm spaces}}
	\newcommand{\domEMM}{\shortstack[l]{Euclidean}}
	\newcommand{\domG}{\shortstack[l]{Graphs}}
	
	\begin{tabular}{L{2.0cm} L{4.9cm} C{0.6cm} C{0.8cm} C{1.2cm} C{0.9cm} C{1.5cm} C{1.5cm} C{1.5cm}}
		\toprule
		\multirow{2}{*}[-0.6ex]{\textbf{Domain}} &
		\multirow{2}{*}[-0.6ex]{\textbf{Method}} &
		\multirow{2}{*}[-0.6ex]{\textbf{Plan}} &
		\multirow{2}{*}[-0.6ex]{\textbf{\shortstack[c]{Feature\\Info.}}} &
		\multirow{2}{*}[-0.6ex]{\textbf{Time}} &
		\multicolumn{4}{c}{\textbf{Theoretical Guarantees}} \\
		\cmidrule(lr){6-9}
		& & & & & \textbf{Metric} & \textbf{Convexity} & \textbf{Consistency} & \textbf{Finite} \\
		\midrule
		
		\multirow{4}{*}{\domMM}
		& \textbf{CDOT (Ours)}
		& \cmark & \cmark & $\mathcal{O}(n^3)$
		& \cmark & \cmark & \cmark & \cmark \\
		
		& GW \citep{memoli2011gromov}
		& \cmark & \xmark & $\mathcal{O}(n^3)$
		& \cmark & \xmark & $\triangle$ & \xmark \\
		
		& FGW \citep{vayer2020fused}
		& \cmark & \cmark & $\mathcal{O}(n^3)$
		& $\triangle$ & \xmark & $\triangle$ & $\triangle$ \\
		
		& Entropic GW \citep{peyre2016gromov}
		& \cmark & \xmark & $\mathcal{O}(n^2)$
		& \xmark & $\triangle$ & \xmark & \xmark \\
		
		\midrule
		
		\domEMM
		& Sliced GW \citep{titouan2019sliced}
		& \xmark & \xmark & $\mathcal{O}(n \log n)$
		& \cmark & \xmark & \xmark & \xmark \\
		
		\midrule
		
		\multirow{3}{*}{\domDMM}
		& Low-rank GW \citep{scetbon2022linear}
		& \cmark & \xmark & $\mathcal{O}(n)$
		& \xmark & \xmark & \xmark & \xmark \\
		
		& GW-SDP \citep{chen2024semidefinite}
		& \cmark & \xmark & $\mathcal{O}(n^6)$
		& \xmark & \cmark & \xmark & \xmark \\
		
		& Sliced FGW \citep{piening2025novel}
		& \xmark & \cmark & $\mathcal{O}(n^2\log n)$
		& \cmark & \xmark & \xmark & \xmark \\
		
		\midrule
		
		\multirow{3}{*}{\domG}
		& Spectral \citep{leordeanu2005spectral}
		& $\triangle$ & \xmark & $\mathcal{O}(n^3)$
		& \xmark & \xmark & \xmark & \xmark \\
		
		& IsoRank \citep{singh2008global}
		& \cmark & \cmark & $\mathcal{O}(n^3)$
		& \xmark & \xmark & \xmark & \xmark \\
		
		& COPT \citep{dong2020copt}
		& \cmark & \xmark & $\mathcal{O}(n^{2.373})$
		& \cmark & \xmark & \xmark & \xmark \\
		
		\bottomrule
	\end{tabular}
	
	\vspace{4pt}
	\scriptsize
	\noindent\textit{Legend:}
	\cmark: established guarantee on the stated domain. 
		$\triangle$: related statistical results exist, but they control the empirical risk/cost relative to its population counterpart rather than the population risk attained by the returned plan/estimator itself. 
	\xmark: no explicit guarantee in the cited reference. 
	\textbf{Domain}:
	Metric-measure (mm) spaces: population-level mm spaces. 
	Euclidean: standard Euclidean spaces $\mathbb{R}^d$. 
	Discrete mm spaces: discrete/empirical mm spaces. 
	Graphs: finite graphs.
	\textbf{Plan}: outputs an explicit coupling/matching. 
	\textbf{Feature Info.}: uses feature information as data. 
	\textbf{Time}: computational complexity. 
	\textbf{Metric}: refers to (pseudo-)metricity on the stated domain. 
	\textbf{Convexity}: convexity of the objective. 
		\textbf{Consistency}: existence of an explicit asymptotic guarantee for the population risk attained by the returned plan/estimator on the stated domain; for CDOT, this is the risk-consistency result in Corollary~\ref{cor:convergence}. 
		\textbf{Finite}: existence of an explicit finite-sample bound for the population risk attained by the returned plan/estimator on the stated domain.\vspace{-4mm}
\end{table*}

In this study, we propose \emph{Convex Distance Operator Transport (CDOT)}, the first convex geometry-aware OT framework for aligning heterogeneous mm spaces via an operator-level formulation that encodes intrinsic geometric structure. Figure~\ref{fig:cycle-example} contrasts two cyclic graphs, showing that (F)GW distinguishes them by enforcing rigid consistency between individual pairwise distances, whereas CDOT aligns aggregated distance profiles through operator-based regularization and thus identifies the two structures as equivalent despite differing node cardinalities. This relaxation attenuates structural noise and yields a convex optimization landscape.

The advantages of the proposed framework, as summarized in Table~\ref{tab:foot-baselines}, can be highlighted as follows: \vspace{-2mm}
\begin{itemize}
	\item \textbf{Transport plan.} 
	Unlike slicing-based approaches that focus on scalable discrepancy estimation, CDOT explicitly recovers an optimal transport plan, enabling direct node correspondences between domains. \vspace{-1mm}
	\item \textbf{Convexity.}
	In contrast to existing approaches defined on population-level mm spaces, CDOT admits a convex optimization problem in its original, unregularized formulation, so global optimization does not depend on introducing an additional smoothing parameter such as entropic regularization. \vspace{-1mm}
	\item \textbf{Theoretical guarantees.} 
	We establish that CDOT induces a pseudometric on the space of attributed compact mm spaces and provide a statistical theory, including non-asymptotic risk bounds and consistency, properties often absent in existing works. \vspace{-1mm}
	\item \textbf{Computational tractability.} 
	CDOT admits an efficient implementation with complexity $\mathcal{O}(n^3)$, which is comparable to standard FGW and substantially lower than semidefinite relaxations $\mathcal{O}(n^6)$.
\end{itemize}  \vspace{-2mm}

In addition, we rigorously compare CDOT with the GW objective through a \emph{dispersion gap} analysis. Specifically, we decompose the non-convex GW cost into our convex CDOT objective and a concave dispersion penalty. This analysis reveals that the non-convexity of GW stems from this dispersion term which favors deterministic plans, whereas our approach achieves convexity by filtering out this effect.

We also substantiate our theoretical results through synthetic 2D point clouds and demonstrate the practicality of CDOT on real brain connectome and graph classification datasets.

The remainder of this paper is organized as follows.
Section~\ref{sec:preliminaries} fixes notation and reviews related work.
Section~\ref{sec:CDOT} introduces the CDOT objective, establishes its convexity and pseudometric property, and quantifies its gap to the GW objective via a dispersion decomposition.
Section~\ref{sec:Estimation} presents the empirical quadratic program and its optimization algorithm.
Section~\ref{sec:Consistency} establishes the risk consistency of our estimator.
Section~\ref{sec:Empirical} validates our algorithm empirically using both synthetic and real-world data.
Finally, Section~\ref{sec:Discussion} concludes with a summary of our findings and discusses future directions.

\paragraph{Conflict of Interest Disclosure.}
The authors declare that they have no competing interests.

\vspace{-2mm}
\section{Preliminaries}\label{sec:preliminaries}
\subsection{Notations}
Let $(\Omega,\mathcal{F},\mathbb{P})$ be a probability space, and let $(\mathcal{X},d_{\mathcal{X}})$, $(\mathcal{Y},d_{\mathcal{Y}})$ be compact metric spaces. This guarantees that both spaces are Polish so that regular conditional distributions exist (Lemma~\ref{lem:disintegration}).
Measurable maps $X:\Omega\to \mathcal{X}$ and $Y:\Omega\to \mathcal{Y}$ are called random elements, with distributions $\mathbb{P}_X\coloneqq\mathbb{P}\circ X^{-1}$ and $\mathbb{P}_Y\coloneqq\mathbb{P}\circ Y^{-1}$, respectively. For simplicity, $\mathbb{P}_X$ and $\mathbb{P}_Y$ are fully supported. Then $(\mathcal{X},d_{\mathcal{X}},\mathbb{P}_X)$ and $(\mathcal{Y},d_{\mathcal{Y}},\mathbb{P}_Y)$ form compact metric-measure (mm) spaces.
Denote by $L^2(\mathcal{X},\mathbb{P}_X)$ the Hilbert space of real-valued, square-integrable functions on $\mathcal{X}$ with respect to $\mathbb{P}_X$. We define $L^2(\mathcal{Y},\mathbb{P}_Y)$ analogously.
We further introduce a compact feature space $\mathcal{M}\subset\mathbb{R}^k$ and define continuous mappings $f_{\mathcal{X}}:\mathcal{X}\to \mathcal{M}$ and $f_{\mathcal{Y}}:\mathcal{Y}\to \mathcal{M}$, referred to as feature functions. 
We define an attributed compact mm space as a tuple $\mathfrak{X} \coloneqq (\mathcal{X}, d_{\mathcal{X}}, \mathbb{P}_X, f_{\mathcal{X}})$ and define $\mathfrak{Y} \coloneqq (\mathcal{Y}, d_{\mathcal{Y}}, \mathbb{P}_Y, f_{\mathcal{Y}})$ similarly.
For brevity, we assume that $\mathrm{diam}(\mathcal{X}) = \mathrm{diam}(\mathcal{Y}) = \mathrm{diam}(\mathcal{M}) = 1$, where $\mathrm{diam}(\mathcal{A}) \coloneqq \sup_{a,a'\in \mathcal{A}} d_{\mathcal{A}}(a,a')$ denotes the diameter. For probability measures $\mu$ and $\nu$, denote by $\Pi(\mu,\nu)$ the set of couplings whose marginals are $\mu$ and $\nu$.

\subsection{Related Work}

The GW discrepancy \citep{memoli2011gromov} and its attributed variant, FGW \citep{vayer2020fused}, serve as canonical frameworks for aligning heterogeneous structures:
\begin{align*}
	\min_{\pi} \; (1 - \alpha) \mathbb{E}_{\pi} [ c_f(X,Y) ] + \alpha \mathcal{R}_{\mathrm{GW},2}(\pi),
\end{align*}
where the minimum is taken over the set of valid couplings $\Pi(\mathbb{P}_X,\mathbb{P}_Y)$, $c_f(x,y) \coloneqq \| f_{\mathcal{X}}(x) - f_{\mathcal{Y}}(y)\|_2^2$ is the squared Euclidean feature cost, and the quadratic GW cost is given by $\mathcal{R}_{\mathrm{GW},2}(\pi) \coloneqq \mathbb{E}_{\pi \otimes \pi}[ | d_{\mathcal{X}}(X,X') - d_{\mathcal{Y}}(Y,Y') |^2 ]$. 

A fundamental limitation of (F)GW is the non-convexity arising from the tensor product $\pi \otimes \pi$ in $\mathcal{R}_{\mathrm{GW},2}$. As shown in Table~\ref{tab:foot-baselines}, several works have attempted to tackle this issue, but few works have fully resolved the non-convexity and provided rigorous metric properties on mm spaces. A more comprehensive review is provided in Appendix~\ref{App:related-work}.

\section{Proposed Method}\label{sec:CDOT}

\subsection{CDOT: Convex Distance Operator Transport}

The goal of CDOT is to align distributions on heterogeneous compact mm spaces by jointly matching features and intrinsic structures through a transport plan $\pi \in \Pi(\mathbb{P}_X,\mathbb{P}_Y)$. A fundamental challenge arises from the distinct metrics equipped on each space, which precludes direct comparison across domains. CDOT addresses this issue by lifting (i) the geometric structure of each space and (ii) the transport plan into linear operators acting on $L^2$ spaces, so that structural alignment can be measured through operator compositions.

\begin{definition}[Distance operator]
	\label{def:distance-operator}
	Let $(\mathcal{X},d_{\mathcal{X}},\mathbb{P}_X)$ be a compact mm space. The \emph{distance operator} $D_{\mathbb{P}_X}$ is an integral operator defined on $L^2(\mathcal{X},\mathbb{P}_X)$ by
	\begin{align*}
		(D_{\mathbb{P}_X}f)(x) \coloneqq \int_{\mathcal{X}} d_{\mathcal{X}}(x,x') &f(x') \mathbb{P}_X(dx') , \\ &\forall f \in L^2(\mathcal{X},\mathbb{P}_X), \forall x \in \mathcal{X} .
	\end{align*}
\end{definition}

The distance operator can be viewed as a continuous analogue of the pairwise distance matrix. For an empirical measure $\hat{\mathbb{P}}_X$ supported on $n$ distinct samples, the operator admits a matrix representation $[D_{\hat{\mathbb{P}}_X}]_{ij} = d_{\mathcal{X}}(X_i, X_j)/n$ (see Appendix~\ref{pf:matrix-representation} for the derivation).

\begin{definition}[Conditional expectation operator]
	\label{def:conditional-expectation-operator}
	For a coupling $\pi \in \Pi(\mathbb{P}_X,\mathbb{P}_Y)$, the \emph{conditional expectation operator} $T_{\pi}: L^2(\mathcal{Y},\mathbb{P}_Y) \to L^2(\mathcal{X},\mathbb{P}_X)$ is defined as
	\begin{align*}
		(T_{\pi}g)(x) \coloneqq \int_{\mathcal{Y}} g(y) \pi(dy \mid x) , \; \forall g \in L^2(\mathcal{Y},\mathbb{P}_Y), \forall x \in \mathcal{X},
	\end{align*}
	where $\pi(dy \mid x)$ denotes a Markov kernel. 
\end{definition}

Analogously, in finite-sample settings, $T_\pi$ reduces to the scaled transport matrix $[T_\pi]_{ij} = n \pi_{ij}$.

Leveraging these operator representations, the CDOT objective seeks a coupling that harmonizes both the feature representations and the geometric structures of the two spaces.

\begin{definition}[CDOT]
	\label{def:foot-problem}
	Given two attributed compact mm spaces $\mathfrak{X}$, $\mathfrak{Y}$, and a fusion weight parameter $0 \leq \alpha \leq 1$,
	CDOT finds a transport plan $\pi \in \Pi(\mathbb{P}_X,\mathbb{P}_Y)$ by balancing the following two components:
	\begin{description}
		\item[Feature:]
		$\mathcal{F}(\pi)\coloneqq \mathbb{E}_{\pi}[c_f(X,Y)]$,
		\item[Structure:]
		$\mathcal{R}(\pi) \coloneqq \| D_{\mathbb{P}_X} T_\pi - T_\pi D_{\mathbb{P}_Y} \|_{\mathrm{HS}}^2$,
		\item[CDOT objective:]
		$\mathcal{L}_{\alpha}(\pi) \coloneqq (1-\alpha) \mathcal{F}(\pi) + \alpha/2 \mathcal{R}(\pi)$,
	\end{description}
	where $\|\cdot\|_{\mathrm{HS}}$ is the Hilbert--Schmidt (HS) norm.
	The \emph{CDOT problem} is then
	\begin{align*}
		\inf_{\pi \in \Pi(\mathbb{P}_X,\mathbb{P}_Y)} \mathcal{L}_{\alpha}(\pi; \mathfrak{X},\mathfrak{Y}).
	\end{align*}
\end{definition}

The feature alignment term $\mathcal{F}(\pi)$ encourages element-wise similarity between features, whereas the structural regularization term $\mathcal{R}(\pi)$ penalizes structural discrepancy. Specifically, $\mathcal{R}(\pi)$ measures the extent to which the conditional expectation operator $T_\pi$ fails to intertwine the intrinsic distance operators of the two spaces (i.e., the failure of the commutativity relation $D_{\mathbb{P}_X} T_\pi \approx T_\pi D_{\mathbb{P}_Y}$).

The well-definedness of $\mathcal{R}(\pi)$ can be verified through its integral representation (see Appendix~\ref{pf:HS-operator} for more details):
\begin{align}
	\label{eq:structural-penalty}
	\mathcal{R}(\pi; \mathfrak{X},\mathfrak{Y}) = \int_{\mathcal{Y}}\int_{\mathcal{X}} \Gamma_\pi(x,y)^2 \,\mathbb{P}_X(dx)\mathbb{P}_Y(dy),
\end{align}
where
$\Gamma_\pi^{(1)}(x,y) \coloneqq \mathbb{E}_\pi[d_{\mathcal{X}}(x,X)\mid Y=y]$,
$\Gamma_\pi^{(2)}(x,y) \coloneqq \mathbb{E}_\pi[d_{\mathcal{Y}}(y,Y)\mid X=x]$,
and $\Gamma_\pi=\Gamma_\pi^{(1)}-\Gamma_\pi^{(2)}$. Since the underlying spaces are compact, $|\Gamma_\pi(x,y)|$ is bounded, ensuring the finiteness of the integral.

\begin{theorem}[Convex optimization]
	\label{thm:wellposed-convex}
	Given two attributed compact mm spaces $\mathfrak{X}$, $\mathfrak{Y}$, and a fusion weight parameter $0 \leq \alpha \leq 1$, the CDOT problem satisfies the following:
	\begin{description}
		\item[(Existence).] There exists $\pi^* \in \Pi(\mathbb{P}_X,\mathbb{P}_Y)$ such that
		\begin{align*}
			\mathcal{L}_{\alpha}(\pi^*; \mathfrak{X}, \mathfrak{Y}) = \inf_{\pi \in \Pi(\mathbb{P}_X,\mathbb{P}_Y)} \mathcal{L}_{\alpha}(\pi; \mathfrak{X},\mathfrak{Y}) .
		\end{align*}
		\item[(Convexity).] The CDOT objective is convex.
	\end{description}
\end{theorem}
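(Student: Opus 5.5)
The plan is to prove convexity first, because the existence argument reuses the same structure. The starting point is the integral representation \eqref{eq:structural-penalty}. By disintegration, $\Gamma^{(1)}_\pi(x,y)\,p_Y(dy)$-type expressions can be written as linear functionals of $\pi$. Concretely, for fixed $x$, the quantity $\Gamma_\pi^{(1)}(x,y)$ is the density of the measure $B\mapsto\int d_{\mathcal X}(x,x')\mathbf 1_B(y')\,\pi(dx',dy')$ with respect to $\mathbb{P}_Y$. Since the $Y$-marginal of every $\pi\in\Pi(\mathbb P_X,\mathbb P_Y)$ is the fixed measure $\mathbb P_Y$, this density depends linearly on $\pi$: for $\pi_t=t\pi_0+(1-t)\pi_1$ we get $\Gamma^{(1)}_{\pi_t}=t\Gamma^{(1)}_{\pi_0}+(1-t)\Gamma^{(1)}_{\pi_1}$ $\mathbb P_X\otimes\mathbb P_Y$-a.e. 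The symmetric argument, using the fixed $X$-marginal, gives the same identity for $\Gamma^{(2)}$.

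It is cleaner to state this at the operator level. $T_\pi$ is characterized by $\langle f,T_\pi g\rangle_{L^2(\mathbb P_X)}=\int f(x)g(y)\,\pi(dx,dy)$, which is linear in $\pi$. Hence $\pi\mapsto T_\pi$ is affine on $\Pi(\mathbb P_X,\mathbb P_Y)$, and so is $\pi\mapsto D_{\mathbb P_X}T_\pi-T_\pi D_{\mathbb P_Y}$. Composing with the convex map $A\mapsto\|A\|_{\mathrm{HS}}^2$ shows that $\mathcal R$ is convex. $\mathcal F$ is linear in $\pi$, and $\alpha\in[0,1]$ makes both weights nonnegative, so $\mathcal L_\alpha$ is convex.

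For existence I would use the direct method. $\Pi(\mathbb P_X,\mathbb P_Y)$ is weakly compact by Prokhorov, since $\mathcal X\times\mathcal Y$ is compact and $\Pi$ is closed. $\mathcal F$ is weakly continuous because $c_f$ is continuous and bounded. The main obstacle is lower semicontinuity of $\mathcal R$, since the conditional kernels $\pi(dy\mid x)$ do not vary continuously under weak convergence. To get around this, I would write
\[
\mathcal R(\pi)=\sup_{\|h\|_{L^2(\mathbb P_X\otimes\mathbb P_Y)}\le1}\Big(2\langle h,\Gamma_\pi\rangle-\|h\|^2\Big),
\]
restricting $h$ to continuous functions, which are dense. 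For continuous $h$, Fubini gives
\[
\langle h,\Gamma^{(1)}_\pi\rangle=\int\!\!\int\Big(\int h(x,y')d_{\mathcal X}(x,x')\mathbb P_X(dx)\Big)\pi(dx',dy'),
\]
and the inner function is continuous in $(x',y')$. The term $\Gamma^{(2)}$ is handled analogously. So $\pi\mapsto\langle h,\Gamma_\pi\rangle$ is weakly continuous, and $\mathcal R$, being a supremum of continuous functions, is weakly l.s.c.

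Hence $\mathcal L_\alpha$ is l.s.c. on a compact set and attains its infimum at some $\pi^*$. Equivalently, $\pi\mapsto T_\pi$ is weak-to-weak-operator continuous and the HS norm is weakly l.s.c.; I would check the density/Fubini step carefully, since that is the crux.
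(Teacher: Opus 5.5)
Your proposal is correct. Its skeleton matches the paper's: existence comes from weak compactness of $\Pi(\mathbb{P}_X,\mathbb{P}_Y)$, continuity of $\mathcal{F}$, lower semicontinuity of $\mathcal{R}$ and the generalized Weierstrass theorem, and convexity comes from affinity of $\pi\mapsto T_\pi$ composed with the convex map $\|\cdot\|_{\mathrm{HS}}^2$.

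The genuine difference is the lower-semicontinuity step. The paper stays at the operator level and uses exactly the ``equivalent'' route you mention at the end:
\begin{itemize}
\item it shows $T_{\pi_n}\to T_\pi$ in the weak operator topology, by testing against continuous $f,g$ and then extending via density and the uniform bound $\|T_{\pi_n}\|_{\mathrm{op}}\le 1$;
\item it passes this convergence through the bounded operators $D_{\mathbb{P}_X}$ and $D_{\mathbb{P}_Y}$;
\item it invokes a separate lemma that $\|\cdot\|_{\mathrm{HS}}^2$ is WOT-lower semicontinuous, proved by Fatou over matrix coefficients in orthonormal bases.
\end{itemize}

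Your main route works directly with the kernel $\Gamma_\pi$ from \eqref{eq:structural-penalty}. You write $\mathcal{R}$ as a supremum of affine functionals indexed by continuous test kernels $h$. Your Fubini computation checks out for both $\Gamma^{(1)}$ and $\Gamma^{(2)}$, and it turns each functional into integration of a continuous function against $\pi$. Lower semicontinuity then follows at once, with no density-extension argument and no HS-in-WOT lemma. The ball constraint $\|h\|\le 1$ is harmless only because $|\Gamma_\pi|\le 1$ gives $\|\Gamma_\pi\|\le 1$; you could simply drop it.

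The price of your route is its reliance on the kernel representation. The paper's operator-level argument is more modular, since it reuses the contraction property of $T_\pi$ that also drives the triangle inequality in the pseudometric proof.

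On convexity, your characterization via the bilinear form $\langle f,T_\pi g\rangle=\int f(x)g(y)\,\pi(dx,dy)$ determines $T_\pi$ uniquely as a bounded operator. It therefore sidesteps the paper's need to choose compatible versions of the disintegrations of $t\pi_1+(1-t)\pi_2$.
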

\begin{proof}
	See Appendix~\ref{pf:thm:wellposed-convex}.
\end{proof}

Theorem~\ref{thm:wellposed-convex} confirms that the CDOT problem is a well-posed convex optimization problem. The existence of an optimal coupling is guaranteed by the weak compactness of the transport polytope $\Pi(\mathbb{P}_X,\mathbb{P}_Y)$, the lower semicontinuity of the objective, and the generalized Weierstrass theorem (Lemma~\ref{lem:extreme-value-theorem}). Convexity arises from the linearity of $\mathcal{F}(\pi)$ and the convexity of the squared HS norm in $\mathcal{R}(\pi)$.

\subsection{Geometric Interpretation: Pseudometricity}

Beyond its optimization-theoretic properties, CDOT also serves as a discrepancy between attributed compact mm spaces. We define the CDOT discrepancy as the square root of the objective, which satisfies the pseudometric property on the space of attributed compact mm spaces.

\begin{theorem}[Pseudometric]
	\label{thm:foot-metric}
	For each $0 \leq \alpha \leq 1$, define the CDOT discrepancy between two attributed compact mm spaces $\mathfrak{X}$ and $\mathfrak{Y}$ as
	\begin{align*}
		d_{\mathrm{CT}}^{(\alpha)}(\mathfrak{X}, \mathfrak{Y}) \coloneqq \left( \min_{\pi \in \Pi(\mathbb{P}_X,\mathbb{P}_Y)} \mathcal{L}_{\alpha}(\pi; \mathfrak{X},\mathfrak{Y}) \right)^{1/2}.
	\end{align*}
	Then, $d_{\mathrm{CT}}^{(\alpha)}$ is a pseudometric on the space of attributed compact mm spaces.
\end{theorem}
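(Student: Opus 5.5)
The plan is to verify the pseudometric axioms one at a time: nonnegativity, vanishing on the diagonal, symmetry, and the triangle inequality. Finiteness follows from boundedness of $\Gamma_\pi$ in \eqref{eq:structural-penalty}, and the minimum is attained by the existence part of Theorem~\ref{thm:wellposed-convex}. Nonnegativity is immediate, since $\mathcal{F}\ge 0$ and $\mathcal{R}\ge 0$. Throughout I will use the norm form of the objective:
\[
\mathcal{L}_\alpha(\pi)^{1/2} = \bigl\| \bigl(\sqrt{1-\alpha}\,\|f_{\mathcal{X}}(X)-f_{\mathcal{Y}}(Y)\|_{L^2(\pi)},\ \sqrt{\alpha/2}\,\|D_{\mathbb{P}_X}T_\pi - T_\pi D_{\mathbb{P}_Y}\|_{\mathrm{HS}}\bigr)\bigr\|_{2}.
\]
This expresses $\mathcal{L}_\alpha^{1/2}$ as the Euclidean norm of a vector in $\mathbb{R}^2$.

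\textbf{Diagonal.} For $\mathfrak{X}=\mathfrak{Y}$, take the diagonal coupling $\pi=(\mathrm{id},\mathrm{id})_\#\mathbb{P}_X$. Its kernel is $\delta_x$, so $T_\pi=I$. Hence $\mathcal{R}(\pi)=\|D_{\mathbb{P}_X}-D_{\mathbb{P}_X}\|_{\mathrm{HS}}^2=0$ and $\mathcal{F}(\pi)=0$, so $d_{\mathrm{CT}}^{(\alpha)}(\mathfrak{X},\mathfrak{X})=0$.

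\textbf{Symmetry.} For $\pi$ let $\pi^\top$ be its transpose coupling in $\Pi(\mathbb{P}_Y,\mathbb{P}_X)$. The identity $\langle T_\pi g,h\rangle_{L^2(\mathbb{P}_X)}=\mathbb{E}_\pi[g(Y)h(X)]$ shows that $T_{\pi^\top}=T_\pi^*$. The distance operators are self-adjoint because $d$ is symmetric. Therefore
\[
(D_{\mathbb{P}_X}T_\pi - T_\pi D_{\mathbb{P}_Y})^* = T_{\pi^\top}D_{\mathbb{P}_X} - D_{\mathbb{P}_Y}T_{\pi^\top},
\]
and the HS norm is invariant under adjoints. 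The feature cost is symmetric, so $\mathcal{L}_\alpha(\pi;\mathfrak{X},\mathfrak{Y})=\mathcal{L}_\alpha(\pi^\top;\mathfrak{Y},\mathfrak{X})$. Since transposition is a bijection between the two coupling sets, the minima agree.

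\textbf{Triangle inequality.} Let $\pi_{12}$ and $\pi_{23}$ be optimal couplings for $(\mathfrak{X},\mathfrak{Y})$ and $(\mathfrak{Y},\mathfrak{Z})$. Using the disintegration Lemma~\ref{lem:disintegration}, glue them into
\[
\gamma(dx,dy,dz)=\mathbb{P}_Y(dy)\,\pi_{12}(dx\mid y)\,\pi_{23}(dz\mid y),
\]
under which $X$ and $Z$ are conditionally independent given $Y$. Let $\pi_{13}$ be its $(X,Z)$-marginal.

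The feature part follows from Minkowski in $L^2(\gamma)$:
\[
\|f_{\mathcal{X}}(X)-f_{\mathcal{Z}}(Z)\|_{L^2(\gamma)} \le \|f_{\mathcal{X}}(X)-f_{\mathcal{Y}}(Y)\|_{L^2(\gamma)} + \|f_{\mathcal{Y}}(Y)-f_{\mathcal{Z}}(Z)\|_{L^2(\gamma)}.
\]

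The key structural step is the composition identity $T_{\pi_{13}}=T_{\pi_{12}}T_{\pi_{23}}$. It follows from the tower property and the Markov structure: $\mathbb{E}[h(Z)\mid X]=\mathbb{E}[\mathbb{E}[h(Z)\mid Y]\mid X]$. Granting this, write
\[
D_{\mathbb{P}_X}T_{13}-T_{13}D_{\mathbb{P}_Z} = (D_{\mathbb{P}_X}T_{12}-T_{12}D_{\mathbb{P}_Y})\,T_{23} + T_{12}\,(D_{\mathbb{P}_Y}T_{23}-T_{23}D_{\mathbb{P}_Z}).
\]
Apply the ideal property $\|AB\|_{\mathrm{HS}}\le\|A\|_{\mathrm{HS}}\|B\|_{\mathrm{op}}$ together with $\|T_\pi\|_{\mathrm{op}}\le 1$. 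The latter holds by conditional Jensen, $\mathbb{E}[(T_\pi g)(X)^2]\le\mathbb{E}[g(Y)^2]$, using the marginal constraints. This bounds the structural term for $\pi_{13}$ by the sum of the two structural terms.

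Combining the feature and structural bounds with Minkowski in $\mathbb{R}^2$ gives $\mathcal{L}_\alpha(\pi_{13})^{1/2}\le d_{\mathrm{CT}}^{(\alpha)}(\mathfrak{X},\mathfrak{Y})+d_{\mathrm{CT}}^{(\alpha)}(\mathfrak{Y},\mathfrak{Z})$. Minimizing over couplings of $(\mathbb{P}_X,\mathbb{P}_Z)$ then gives the triangle inequality.

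I expect the main obstacle to be the measure-theoretic justification of the composition identity. This requires checking that the kernel of $\pi_{13}$ given $x$ equals $\int \pi_{23}(dz\mid y)\,\pi_{12}^\top(dy\mid x)$ for $\mathbb{P}_X$-a.e.\ $x$. It also requires care with versions of the regular conditional distributions, so that $T_{\pi_{13}}$ is well defined as an operator on $L^2$ classes.
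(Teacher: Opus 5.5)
Your proposal is correct and follows essentially the same route as the paper. Symmetry goes through the transpose coupling with $T_{\pi^\top}=T_\pi^*$, and the triangle inequality goes through gluing, the composition identity $T_{\pi_{13}}=T_{\pi_{12}}T_{\pi_{23}}$, the contraction $\|T_\pi\|_{\mathrm{op}}\le 1$, Minkowski in $L^2(\gamma)$ for the feature term, and Minkowski in $\mathbb{R}^2$; your gluing $\mathbb{P}_Y(dy)\,\pi_{12}(dx\mid y)\,\pi_{23}(dz\mid y)$ is the same measure as the paper's $\mathbb{P}_X(dx)\,\pi_{XY}(dy\mid x)\,\pi_{YZ}(dz\mid y)$, and your explicit diagonal coupling with $T_\pi=I$ simply fills in the step the paper calls immediate.
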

\begin{proof}
	See Appendix~\ref{pf:thm:foot-metric}.
\end{proof}

Theorem~\ref{thm:foot-metric} confirms that $d_{\mathrm{CT}}^{(\alpha)}$ is a valid pseudometric. The failure to be a genuine metric stems from the fact that $d_{\mathrm{CT}}^{(\alpha)}(\mathfrak{X},\mathfrak{Y}) = 0$ does not necessarily imply $\mathfrak{X} = \mathfrak{Y}$; instead, it indicates the existence of a coupling that preserves both feature mappings and intrinsic distance operators. This operator-level equivalence is strictly weaker than the measure-preserving isometric identification underlying GW, and is conceptually closer to a fractional notion of structural equivalence~\citep{grebik2022fractional}. A complete characterization of the zero-discrepancy class of CDOT in terms of such relaxed equivalences is left for future work.

The advantage of CDOT compared to GW-type methods is illustrated in Figure~\ref{fig:cycle-example}. While FGW enforces consistency at the level of individual pairwise distances and thus depends critically on pointwise correspondences between nodes, CDOT compares geometric structures through aggregated distance profiles, formalized as conditional expectations of distances. In this example, although the two spaces differ in the cardinality of nodes, their aggregated distance profiles coincide under a suitable coupling. As a result, CDOT assigns zero discrepancy, reflecting the alignment of local geometric structure, whereas GW-type methods incur a positive cost due to unavoidable pairwise mismatches.

\subsection{CDOT vs. Gromov--Wasserstein}\label{sec:relation-to-GW}

This section clarifies the structural difference between CDOT and FGW by isolating their geometry terms. Since the feature-alignment component is shared by CDOT and FGW, we focus exclusively on comparing our $\mathcal{R}(\pi)$ with the GW objective. To facilitate this comparison, we introduce a new functional, termed \emph{dispersion}, which quantifies the spatial uncertainty induced by a coupling.

\begin{definition}[Dispersion]
	\label{def:dispersion}
	The \emph{dispersion} functional $\mathcal{V} : \Pi(\mathbb{P}_X,\mathbb{P}_Y) \to \mathbb{R}$ is defined by
	\begin{align*}
		\mathcal{V}(\pi) \coloneqq \iint &\Big(\mathrm{Var}_{\pi}[d_{\mathcal{X}}(x,X) \mid Y=y] \\
		&+ \mathrm{Var}_{\pi}[d_{\mathcal{Y}}(y,Y) \mid X=x] \Big) \,\mathbb{P}_X(dx)\mathbb{P}_Y(dy).
	\end{align*}
\end{definition}

\begin{theorem}[Dispersion gap]
	\label{thm:dispersion-gap}
	For any $\pi \in \Pi(\mathbb{P}_X,\mathbb{P}_Y)$,
	\begin{align*}
		\mathcal{R}_{\mathrm{GW},2}(\pi) - \mathcal{R}(\pi) = \mathcal{V}(\pi).
	\end{align*}
\end{theorem}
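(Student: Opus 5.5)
The plan is a direct expansion of both sides into second moments plus a single cross term. Throughout, I use the disintegration $\pi(dx,dy)=\pi(dy\mid x)\,\mathbb{P}_X(dx)=\pi(dx\mid y)\,\mathbb{P}_Y(dy)$ guaranteed by Lemma~\ref{lem:disintegration}. All integrands are bounded by $1$, since every diameter equals $1$, so Fubini--Tonelli applies freely.

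\emph{Step 1 (GW side).} Expand the square in $\mathcal{R}_{\mathrm{GW},2}(\pi)$. Because both copies in $\pi\otimes\pi$ have marginals $\mathbb{P}_X$ and $\mathbb{P}_Y$, this gives
\[
\mathcal{R}_{\mathrm{GW},2}(\pi)=\mathbb{E}_{\mathbb{P}_X\otimes\mathbb{P}_X}[d_{\mathcal{X}}^2]+\mathbb{E}_{\mathbb{P}_Y\otimes\mathbb{P}_Y}[d_{\mathcal{Y}}^2]-2\,C(\pi),
\]
with cross term
\[
C(\pi)\coloneqq\iint d_{\mathcal{X}}(x,x')\,d_{\mathcal{Y}}(y,y')\,\pi(dx,dy)\,\pi(dx',dy').
\]

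\emph{Step 2 (CDOT side).} Start from the integral representation \eqref{eq:structural-penalty} and expand $\Gamma_\pi^2=(\Gamma^{(1)}_\pi)^2+(\Gamma^{(2)}_\pi)^2-2\Gamma^{(1)}_\pi\Gamma^{(2)}_\pi$. For each square, use the conditional variance identity
\[
(\Gamma^{(1)}_\pi(x,y))^2=\mathbb{E}_\pi[d_{\mathcal{X}}(x,X)^2\mid Y=y]-\mathrm{Var}_\pi[d_{\mathcal{X}}(x,X)\mid Y=y].
\]
Integrating against $\mathbb{P}_X(dx)\mathbb{P}_Y(dy)$, the first piece becomes $\iint d_{\mathcal{X}}(x,x'')^2\,\mathbb{P}_X(dx)\,\pi(dx''\mid y)\mathbb{P}_Y(dy)$. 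The kernel then recombines, $\pi(dx''\mid y)\mathbb{P}_Y(dy)=\pi(dx'',dy)$, whose $x$-marginal is $\mathbb{P}_X$. So this piece equals $\mathbb{E}_{\mathbb{P}_X\otimes\mathbb{P}_X}[d_{\mathcal{X}}^2]$. The $\Gamma^{(2)}$ square is handled symmetrically and gives $\mathbb{E}_{\mathbb{P}_Y\otimes\mathbb{P}_Y}[d_{\mathcal{Y}}^2]$. The two variance pieces integrate to exactly $-\mathcal{V}(\pi)$.

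\emph{Step 3 (cross term, the only real point).} I need to show
\[
\iint\Gamma^{(1)}_\pi\Gamma^{(2)}_\pi\,d\mathbb{P}_X\,d\mathbb{P}_Y=C(\pi).
\]
Write the left side as
\[
\int d_{\mathcal{X}}(x,x'')\,d_{\mathcal{Y}}(y,y'')\,\pi(dx''\mid y)\mathbb{P}_Y(dy)\,\pi(dy''\mid x)\mathbb{P}_X(dx).
\]
The two kernels recombine into $\pi(dx'',dy)$ and $\pi(dx,dy'')$. Setting the pairs $(x,y'')$ and $(x'',y)$ as two independent $\pi$-draws, the integrand is $d_{\mathcal{X}}$ between their first coordinates times $d_{\mathcal{Y}}$ between their second coordinates. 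This is precisely $C(\pi)$.

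Collecting terms, $\mathcal{R}(\pi)=\mathcal{R}_{\mathrm{GW},2}(\pi)-\mathcal{V}(\pi)$, which is the claim. The main care needed is this recombination in Step~3: one must check that the pairing of variables is the correct one, and that measurability of the kernels (Lemma~\ref{lem:disintegration}) justifies the Fubini manipulations.
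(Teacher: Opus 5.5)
Your proof is correct and uses the same two ingredients as the paper's. The first is the kernel recombination $\pi(dx''\mid y)\,\mathbb{P}_Y(dy)\,\pi(dy''\mid x)\,\mathbb{P}_X(dx)=\pi(dx'',dy)\,\pi(dx,dy'')$ together with symmetry of the metrics, and the second is the identity $\mathbb{E}[Z^2]=(\mathbb{E}[Z])^2+\mathrm{Var}[Z]$. The difference is only bookkeeping: the paper applies the recombination to the whole GW integrand, writing $\mathcal{R}_{\mathrm{GW},2}(\pi)$ as a $\mathbb{P}_X\otimes\mathbb{P}_Y$-average of a second moment under $\pi_y\otimes\pi_x$ and splitting the variance by independence, whereas you expand both sides and need the recombination only for the cross term (and it is the $x''$-marginal, not the $x$-marginal, of $\pi(dx'',dy)$ that equals $\mathbb{P}_X$, a harmless slip).
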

\begin{proof}
	See Appendix~\ref{pf:thm:dispersion-gap}.
\end{proof}

Theorem~\ref{thm:dispersion-gap} reveals that the GW objective penalizes not only structural distortion, as captured by $\mathcal{R}(\pi)$, but also the spatial uncertainty of the coupling through the dispersion term $\mathcal{V}(\pi)$. In contrast, the CDOT formulation deliberately isolates the structural component by discarding this additional penalty, a design choice that underlies the differing geometric and optimization behavior of the two methods. As a side effect, the dispersion term in GW favors concentrated, nearly deterministic couplings, whereas the absence of this term in CDOT tends to admit more diffuse optimal plans, so applications that require a hard correspondence may proceed via the post-processing step in Section~\ref{sec:Estimation}.

\begin{figure*}[t]
	\centering
	\begin{subfigure}{0.40\textwidth}
		\centering
		\includegraphics[width=\textwidth]{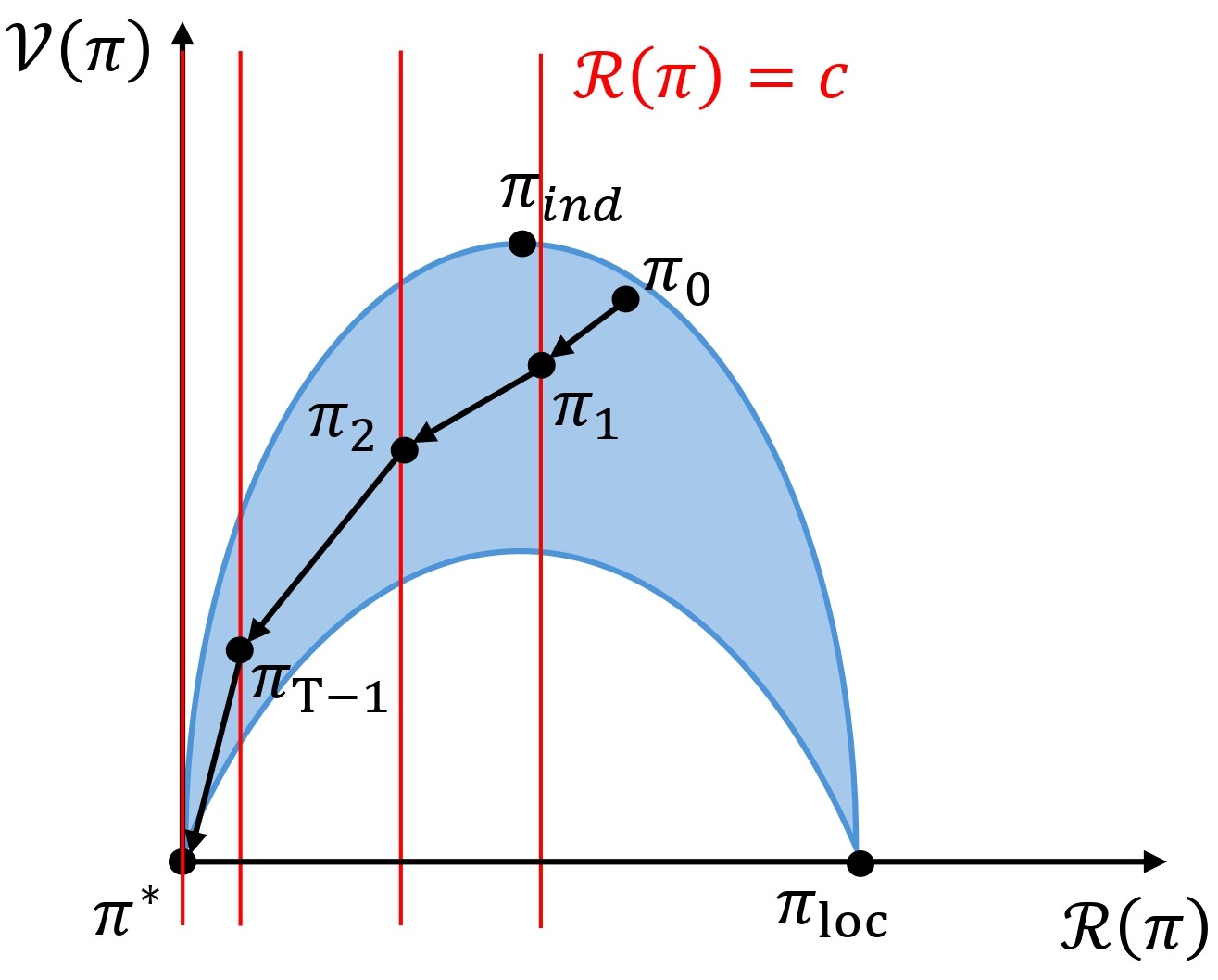}
		\caption{Level sets of $\mathcal{R}(\pi)$.}
	\end{subfigure}
	\hspace{1.5cm}
	\begin{subfigure}{0.40\textwidth}
		\centering
		\includegraphics[width=\textwidth]{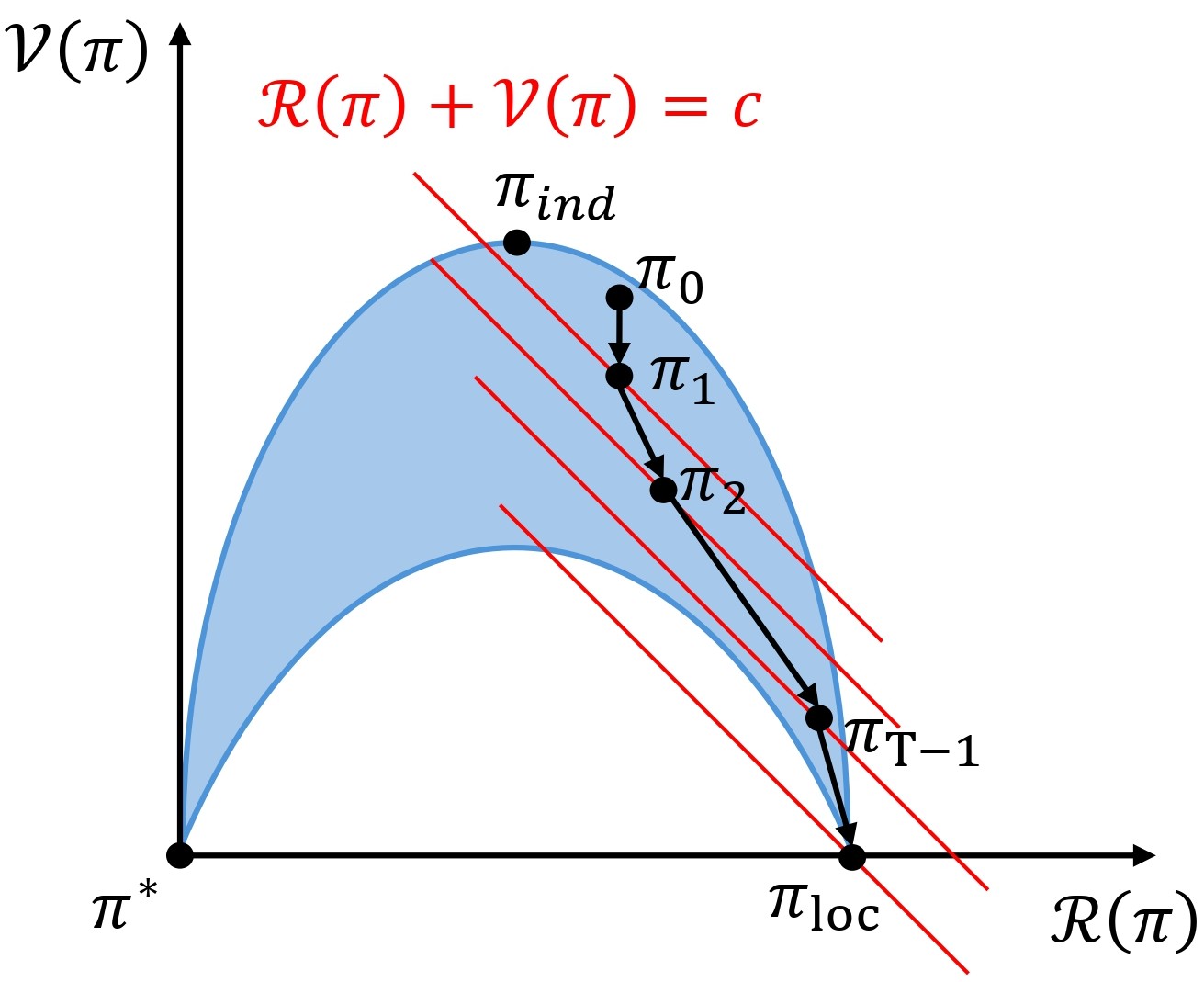}
		\caption{Level sets of $\mathcal{R}_{\mathrm{GW},2}(\pi)$.}
	\end{subfigure}
	\caption{Comparison of level sets. The blue region represents the range of $\pi \mapsto (\mathcal{R}(\pi),\mathcal{V}(\pi))$, and red lines indicate the level sets of the respective objectives. $\pi_{\mathrm{ind}} \coloneqq \mathbb{P}_X \otimes \mathbb{P}_Y$ denotes the independent coupling. (a) Our formulation yields vertical level sets, guiding the optimization trajectory (black arrows) from initialization $\pi_0$ directly toward the global optimum $\pi^*$. (b) In contrast, the GW objective induces diagonal level sets, causing the optimization trapped in a local optimum $\pi_{\mathrm{loc}}$.} \label{fig:comparison-ours-gw} \vspace{-5mm}
\end{figure*}

Figure~\ref{fig:comparison-ours-gw} compares the level sets of $\mathcal{R}(\pi)$ and $\mathcal{R}_{\mathrm{GW},2}(\pi)$ on the joint range of the mapping $\pi \mapsto (\mathcal{R}(\pi), \mathcal{V}(\pi))$. The independent coupling $\pi_{\mathrm{ind}} = \mathbb{P}_X \otimes \mathbb{P}_Y$ exhibits maximal dispersion, while bi-deterministic couplings lie on the zero-dispersion axis. The attainable range typically forms a non-convex, boomerang-like region, reflecting the trade-off between structural distortion and dispersion. Since our objective minimizes only $\mathcal{R}(\pi)$, its level sets are vertical ($x=c$), guiding descent-type methods toward the global optimum $\pi^*$ regardless of initialization. In contrast, the GW objective minimizes $\mathcal{R}(\pi)+\mathcal{V}(\pi)$, yielding diagonal level sets ($x+y=c$). These level sets can interact with the non-convex region and lead to suboptimal points. A numerical realization of the blue region is provided in Appendix~\ref{App:optimization-landscape}.

\section{Finite-Sample Estimation}\label{sec:Estimation}

This section presents the empirical formulation of the CDOT objective, leading to a convex quadratic program over a polytope. We solve this problem using the Frank--Wolfe algorithm, with an optional post-processing step to extract a hard matching from the resulting soft transport plan.

Let $X_1,...,X_{n_X}\in\mathcal{X}$ and $Y_1,...,Y_{n_Y}\in\mathcal{Y}$ be distinct samples, with associated features $f_{\mathcal{X}}(X_i)$ and $f_{\mathcal{Y}}(Y_j)$. We define the empirical probability measures by \vspace{-2mm}
\begin{align*}
	\hat{\mathbb{P}}_X \coloneqq \frac{1}{n_X} \sum_{i=1}^{n_X} \delta_{X_i}, \quad \hat{\mathbb{P}}_Y \coloneqq \frac{1}{n_Y}\sum_{j=1}^{n_Y} \delta_{Y_j}, 
\end{align*} 
where $\delta_x$ is the Dirac measure at $x$. With a slight abuse of notation, we identify the set of couplings $\Pi(\hat{\mathbb{P}}_X, \hat{\mathbb{P}}_Y)$ with the polytope of $n_X \times n_Y$ transport matrices. Under this identification, any coupling $\pi$ is treated as a matrix with entries $\pi_{ij} = \pi(\{X_i\} \times \{Y_j\})$.

The empirical counterparts of the components in Definition~\ref{def:foot-problem} can be naturally derived by plugging in the empirical probability measures. To this end, we first define the cost and distance matrices. Let $C_f \in \mathbb{R}^{n_X \times n_Y}$ be the feature cost matrix with $[C_f]_{ij} = \|f_{\mathcal{X}}(X_i) - f_{\mathcal{Y}}(Y_j)\|_2^2$. Similarly, let $D_{\hat{\mathbb{P}}_X} \in \mathbb{R}^{n_X \times n_X}$ and $D_{\hat{\mathbb{P}}_Y} \in \mathbb{R}^{n_Y \times n_Y}$ be the normalized distance matrices defined by $[D_{\hat{\mathbb{P}}_X}]_{ij} = d_{\mathcal{X}}(X_i,X_j)/n_X$ and $[D_{\hat{\mathbb{P}}_Y}]_{ij} = d_{\mathcal{Y}}(Y_i,Y_j)/n_Y$. Substituting these into Definition~\ref{def:foot-problem}, we obtain the following empirical counterparts: 
\begin{description} \vspace{-3mm}
	\item[Empirical feature:] $\hat{\mathcal{F}}(\pi) \coloneqq \langle C_f, \pi \rangle_F = \mathrm{Tr}(C_f^{\top}\pi)$,
	\item[Empirical structure:] $\hat{\mathcal{R}}(\pi) \coloneqq n_X n_Y \| D_{\hat{\mathbb{P}}_X}\pi - \pi D_{\hat{\mathbb{P}}_Y} \|_F^2$,
	\item[Empirical objective:] $\hat{\mathcal{L}}_{\alpha}(\pi) \coloneqq (1-\alpha)\hat{\mathcal{F}}(\pi) + \alpha/2\hat{\mathcal{R}}(\pi)$.
\end{description} \vspace{-2mm}

Consequently, the empirical CDOT problem is a convex quadratic program over the transport polytope:

\vspace{-3mm}
\begin{align}  
	\label{eq:empirical-foot-problem}
	\min_{\pi \in \Pi(\hat{\mathbb{P}}_X,\hat{\mathbb{P}}_Y)} \hat{\mathcal{L}}_{\alpha}(\pi; \mathfrak{X},\mathfrak{Y}) .
\end{align} 
\vspace{-3mm}

Since \eqref{eq:empirical-foot-problem} is a convex program, a wide range of convex solvers can be applied~\citep{boyd2004convex}. In this paper, we adopt the Frank--Wolfe (FW) algorithm~\citep{frank1956algorithm}. The FW algorithm is well-suited to this setting due to its projection-free nature. At each iteration, it requires solving a linear minimization problem over $\Pi(\hat{\mathbb{P}}_X,\hat{\mathbb{P}}_Y)$, avoiding projections onto the transport polytope. This subproblem is often computationally simpler than the full projection, making the FW algorithm an attractive choice.

Appendix~\ref{App:algorithms} presents two algorithms: the standard FW procedure (Algorithm~\ref{alg:proposed-algorithm}) and an efficient variant termed lazy gradient FW (Algorithm~\ref{alg:foot-lazy}). The latter leverages the quadratic structure of the objective to efficiently update gradients, reducing the cost from cubic to quadratic time. Although the overall per-iteration complexity remains $\mathcal{O}(n^3)$ dominated by the linear minimization problem, this strategy significantly lowers the practical computational overhead.

The estimated coupling $\hat{\pi}$ is generally a soft transport plan, i.e., a fractional coupling over $\Pi(\hat{\mathbb{P}}_X,\hat{\mathbb{P}}_Y)$. 
In applications that require a deterministic matching, we project the soft plan onto the set of permutation matrices via the linear assignment problem (LAP):
\begin{align}
	\label{eq:finite-projection-method}
	\hat{P} \coloneqq \argmax_{P \in \mathcal{P}_n} \mathrm{Tr}(P^\top\hat{\pi}) ,
\end{align}
where $\mathcal{P}_n$ is the set of permutation matrices when $n_X \!=\! n_Y \!=\! n$. This LAP can be efficiently solved by the Hungarian algorithm~\citep{kuhn1955hungarian}. When $n_X \neq n_Y$, an analogous rectangular assignment problem can be solved instead.

\section{Risk Consistency}\label{sec:Consistency}
This section establishes the risk consistency of the estimator $\hat{\pi}$. Our analysis relies on Wasserstein-$1$ distances to control the discrepancy between empirical and population measures.

\begin{definition}[Wasserstein-$p$ distance]
	\label{def:Wasserstein-p}
	Let $(\mathcal{S},d_{\mathcal{S}})$ be a compact metric space. For Borel probability measures $\mu$ and $\nu$ on $\mathcal{S}$, the Wasserstein-$p$ distance ($1 \leq p < \infty$) is
	\begin{align*}
		W_p^{d_{\mathcal{S}}}(\mu,\nu) \coloneqq \left(\inf_{\pi \in \Pi(\mu,\nu)} \int_{\mathcal{S} \times \mathcal{S}} d_{\mathcal{S}}(x,y)^p \, \pi(dx,dy)\right)^{1/p}.
	\end{align*}
\end{definition}

Since $d_{\mathcal{S}}^p$ is continuous and $\mathcal{S}$ is compact, the infimum is always attained (see Appendix~\ref{App:optimal-transport} for details).

Below are some technical assumptions required to derive the precise convergence rate for our proposed method.
\begin{assumption}[No atoms]
	\label{ass:non-atomic}
	$\mathbb{P}_X$ and $\mathbb{P}_Y$ are non-atomic; $\mathbb{P}_X(\{x\}) = \mathbb{P}_Y(\{y\}) = 0$ for all $x \in \mathcal{X}$ and $y \in \mathcal{Y}$.
\end{assumption}
\begin{assumption}[No ties]
	\label{ass:no-tie}
	For any distinct $x_1,x_2 \in \mathcal{X}$, $y_1,y_2 \in \mathcal{Y}$, and for any $c \in \mathbb{R}$,
	\begin{align*}
		&\mathbb{P}_X(\{x \in \mathcal{X}: d_{\mathcal{X}}(x,x_1) - d_{\mathcal{X}}(x,x_2) = c\}) = 0 ,\\
		&\mathbb{P}_Y(\{y \in \mathcal{Y}: d_{\mathcal{Y}}(y,y_1) - d_{\mathcal{Y}}(y,y_2) = c\}) = 0 .
	\end{align*}
\end{assumption}
Assumptions~\ref{ass:non-atomic} and \ref{ass:no-tie} rule out degeneracies such as ties and mass concentration. In particular, these conditions prevent ambiguous assignments and discontinuities caused by boundary events, which allows empirical and population-level objectives to be compared in a stable manner.

\begin{assumption}[Lipschitzness]
	\label{ass:lipschitz-feature}
	The feature functions $f_{\mathcal{X}}$ and $f_{\mathcal{Y}}$ are $L_f$-Lipschitz: for all $x_1,x_2 \in \mathcal{X}$ and $y_1,y_2 \in \mathcal{Y}$,
	\begin{align*}
		&\| f_{\mathcal{X}}(x_1) - f_{\mathcal{X}}(x_2) \|_2 \leq L_f \; d_{\mathcal{X}}(x_1,x_2) ,\\
		&\| f_{\mathcal{Y}}(y_1) - f_{\mathcal{Y}}(y_2) \|_2 \leq L_f \; d_{\mathcal{Y}}(y_1,y_2) .
	\end{align*}
\end{assumption}
Assumption~\ref{ass:lipschitz-feature} ensures that variations in the feature space $\mathcal{M}$ are controlled by the metric structures of $\mathcal{X}$ and $\mathcal{Y}$.

\begin{assumption}[Stability]
    \label{ass:stability}
    For each $0 \leq \alpha \leq 1$, there exists an optimal coupling $\pi_\alpha^* \in \argmin_{\pi \in \Pi(\mathbb{P}_X,\mathbb{P}_Y)}\mathcal{L}_{\alpha}(\pi)$ such that for some $L_W > 0$,
    \begin{align*}
        &W_1^{d_{\mathcal{X}}}(\pi_\alpha^*(\cdot \mid y), \pi_\alpha^*(\cdot \mid y')) \leq L_W \; d_{\mathcal{Y}}(y,y') , \,\, \forall y,y' \in \mathcal{Y},\\
        &W_1^{d_{\mathcal{Y}}}(\pi_\alpha^*(\cdot \mid x), \pi_\alpha^*(\cdot \mid x')) \leq L_W \; d_{\mathcal{X}}(x,x') , \,\, \forall x,x' \in \mathcal{X}.
    \end{align*}
\end{assumption}
Assumption~\ref{ass:stability} is a kernel-regularity condition that prevents the conditional distributions of an optimal coupling from oscillating severely; analogous stability assumptions have been widely adopted in the OT literature~\citep{deb2021rates,hutter2021minimax,manole2024plugin}. A simple geometric example is the case where the optimal coupling is induced by a bi-Lipschitz Monge map. In addition, the assumption holds whenever the conditional laws of $\pi^*_{\alpha}$ are Lipschitz in total variation, via the standard bound $W_1 \leq \mathrm{diam}(\cdot)\,\|\cdot\|_{\mathrm{TV}}$; in the discrete setting, this condition is directly checkable from the rows and columns of the optimal coupling matrix.

We are now ready to present the main consistency results. 
\begin{theorem}[Deterministic error bound]
	\label{thm:consistency}
	Consider the fixed empirical measures $\hat{\mathbb{P}}_X$ and $\hat{\mathbb{P}}_Y$ supported on distinct points. Let $\hat{\pi}$ be the output of FW Algorithm~\ref{alg:proposed-algorithm} with $0 \leq \alpha \leq 1$, the iteration count $T$, and step size $\gamma_t = 2/(t+2)$. In addition, suppose that Assumptions~\ref{ass:non-atomic} to \ref{ass:stability} hold.
	Let $Q_X \in \Pi(\mathbb{P}_X,\hat{\mathbb{P}}_X)$ and $Q_Y \in \Pi(\mathbb{P}_Y,\hat{\mathbb{P}}_Y)$ denote the optimal couplings minimizing $W_1^{d_{\mathcal{X}}}$ and $W_1^{d_{\mathcal{Y}}}$.
	We define the glued measure $\Phi_n: \Pi(\hat{\mathbb{P}}_X,\hat{\mathbb{P}}_Y) \to \Pi(\mathbb{P}_X,\mathbb{P}_Y)$ by
	\begin{align*}
		\Phi_n&(\pi)(dx,dy) \coloneqq \int Q_X(dx \mid \hat{x}) \, Q_Y(dy \mid \hat{y}) \, \pi(d\hat{x},d\hat{y}).
	\end{align*}
	Then, there exists a constant $C > 0$ such that
	\begin{align*}
		&\left\vert \mathcal{L}_{\alpha}(\Phi_n(\hat{\pi});\mathfrak{X},\mathfrak{Y}) - \min_{\pi \in \Pi(\mathbb{P}_X,\mathbb{P}_Y)} \mathcal{L}_{\alpha}(\pi;\mathfrak{X},\mathfrak{Y}) \right\vert \\
		&\leq \underbrace{\frac{32\alpha \, n_{\min}}{T+3}}_{\text{\rm Optimization error}} 
		+ C \underbrace{\left( W_1^{d_{\mathcal{X}}}(\mathbb{P}_X,\hat{\mathbb{P}}_X) + W_1^{d_{\mathcal{Y}}}(\mathbb{P}_Y,\hat{\mathbb{P}}_Y) \right)}_{\text{\rm Statistical error}},
	\end{align*}
	where $n_{\min} = \min\{n_X,n_Y\}$.
\end{theorem}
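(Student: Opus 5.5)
The plan is to sandwich the population risk of the lifted plan by a three-term decomposition. Let $\hat{\pi}^\star \in \argmin_{\Pi(\hat{\mathbb{P}}_X,\hat{\mathbb{P}}_Y)} \hat{\mathcal{L}}_\alpha$ and let $\pi^*_\alpha$ be the stable population optimizer of Assumption~\ref{ass:stability}. Since $\Phi_n(\hat{\pi}) \in \Pi(\mathbb{P}_X,\mathbb{P}_Y)$, the quantity inside the absolute value is nonnegative. I would write it as
\begin{align*}
\mathcal{L}_\alpha(\Phi_n(\hat{\pi})) - \mathcal{L}_\alpha(\pi^*_\alpha)
&= \bigl[\mathcal{L}_\alpha(\Phi_n(\hat{\pi})) - \hat{\mathcal{L}}_\alpha(\hat{\pi})\bigr]
+ \bigl[\hat{\mathcal{L}}_\alpha(\hat{\pi}) - \hat{\mathcal{L}}_\alpha(\hat{\pi}^\star)\bigr] \\
&\quad + \bigl[\hat{\mathcal{L}}_\alpha(\hat{\pi}^\star) - \mathcal{L}_\alpha(\pi^*_\alpha)\bigr].
\end{align*}
The middle term is the optimization error. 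The first term is a \emph{lifting} error, and the third is a \emph{projection} error; each should be bounded by $C(W_1^{d_{\mathcal{X}}}(\mathbb{P}_X,\hat{\mathbb{P}}_X)+W_1^{d_{\mathcal{Y}}}(\mathbb{P}_Y,\hat{\mathbb{P}}_Y))$.

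\textbf{Optimization error.} I would invoke the standard Frank--Wolfe rate with step $\gamma_t=2/(t+2)$, namely $\hat{\mathcal{L}}_\alpha(\pi_T)-\hat{\mathcal{L}}_\alpha(\hat{\pi}^\star)\le 2C_{\hat{\mathcal{L}}}/(T+2)$ (up to the index shift to $T+3$). Here $C_{\hat{\mathcal{L}}}$ is the curvature constant, which is at most the Hessian norm of $\tfrac{\alpha}{2}\hat{\mathcal{R}}$ times the squared Frobenius diameter of the transport polytope. The Hessian is $\alpha n_Xn_Y (I\otimes D_{\hat{\mathbb{P}}_X} - D_{\hat{\mathbb{P}}_Y}^\top\otimes I)^\top(\cdots)$. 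Each normalized distance matrix has operator norm at most $1$ because the diameter is $1$. Any two couplings differ in Frobenius norm by at most $2/\sqrt{n_{\max}}$ squared-scaled, since entries are bounded by $1/n_{\max}$ with row sums $1/n_X$. Combining these should yield a constant of order $16\alpha n_{\min}$, and hence the bound $32\alpha n_{\min}/(T+3)$. The linear feature term contributes no curvature.

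\textbf{Lifting and projection errors.} By Assumptions~\ref{ass:non-atomic}--\ref{ass:no-tie}, semi-discrete OT theory gives that $Q_X,Q_Y$ are induced by measurable maps $S_X:\mathcal{X}\to\{X_i\}$ and $S_Y:\mathcal{Y}\to\{Y_j\}$, namely Laguerre cells whose boundaries are null by the no-ties condition. Hence $Q_X(\cdot\mid\hat{x})=\mathbb{P}_X(\cdot\mid S_X^{-1}(\hat{x}))$ is well defined and all conditional kernels are explicit.

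For the lifting term, I would bound $|\mathcal{L}_\alpha(\Phi_n(\pi))-\hat{\mathcal{L}}_\alpha(\pi)|$ uniformly in $\pi$. For the feature part, $c_f$ is $4L_f$-Lipschitz in each argument (since $\mathrm{diam}\,\mathcal{M}=1$), giving a bound of $4L_f(W_1^{d_\mathcal{X}}+W_1^{d_\mathcal{Y}})$. For the structural part, I would use representation~\eqref{eq:structural-penalty}. Under $\Phi_n(\pi)$, one has $\Gamma^{(2)}_{\Phi_n(\pi)}(x,y)=\int d_{\mathcal{Y}}(y,y')\,Q_Y(dy'\mid\hat{y}')\,\pi(d\hat{y}'\mid S_X(x))$. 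This differs from the empirical $\hat{\Gamma}^{(2)}(S_X(x),S_Y(y))$ by at most $d_\mathcal{Y}(y,S_Y(y))+\int d_\mathcal{Y}(y',S_Y(y'))\,(\cdots)$, by the triangle inequality; $\Gamma^{(1)}$ is handled symmetrically. Since $|\Gamma|\le 2$, we have $|a^2-b^2|\le 4|a-b|$. Integrating against $\mathbb{P}_X\otimes\mathbb{P}_Y$, and using that the mixture of $Q_Y$-cells under a coupling with marginal $\hat{\mathbb{P}}_Y$ reproduces $\mathbb{P}_Y$, gives a bound of $C(W_1^{d_\mathcal{X}}+W_1^{d_\mathcal{Y}})$.

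For the projection term, I would push $\pi^*_\alpha$ down via $\Psi_n(\pi^*_\alpha)(d\hat{x},d\hat{y})=\int\delta_{S_X(x)}\delta_{S_Y(y)}\pi^*_\alpha(dx,dy)$, which is feasible for the empirical problem. This gives $\hat{\mathcal{L}}_\alpha(\hat{\pi}^\star)\le\hat{\mathcal{L}}_\alpha(\Psi_n(\pi^*_\alpha))$, and it then suffices to compare with $\mathcal{L}_\alpha(\pi^*_\alpha)$. The conditional law of $\Psi_n(\pi^*_\alpha)$ given $\hat{y}$ averages $\pi^*_\alpha(\cdot\mid y)$ over the cell $S_Y^{-1}(\hat{y})$. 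Here Assumption~\ref{ass:stability} is essential: $W_1^{d_\mathcal{X}}(\pi^*_\alpha(\cdot\mid y),\pi^*_\alpha(\cdot\mid y'))\le L_W d_\mathcal{Y}(y,y')$ bounds the error from replacing $\pi^*_\alpha(\cdot\mid y)$ by this cell average. Since $x'\mapsto d_\mathcal{X}(x,x')$ is $1$-Lipschitz, the $\Gamma$ perturbation is at most $L_W\int d_\mathcal{Y}(y,S_Y(y))\,\mathbb{P}_Y(dy)=L_W W_1^{d_\mathcal{Y}}$, plus the analogous $\mathcal{X}$ term.

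\textbf{Main obstacle.} I expect the main difficulty to be the structural comparisons. The functionals $\Gamma_\pi$ depend on $\pi$ through conditional kernels, not linearly, so the disintegrations of glued and pushed-forward measures must be tracked carefully. The projection direction in particular needs the stability assumption, without which cell-averaging could destroy the conditional profiles. I would first carry this step out in the map-induced (Laguerre-cell) form, where all kernels are explicit, and then collect all constants into $C=C(L_f,L_W)$.
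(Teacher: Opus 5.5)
Your proposal is correct. It uses the same ingredients as the paper: the curvature form of the Frank--Wolfe rate, the Laguerre-cell maps from Lemma~\ref{lem:laguerre}, Lipschitz control of $c_f$, and Assumption~\ref{ass:stability} to control cell-averaging of the conditional laws of $\pi^*_\alpha$. What differs is how you assemble them.

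\textbf{The paper's route.} It bounds the error by $E_1+E_2+E_3$, where $E_2=2\sup_{\pi}|\hat{\mathcal{L}}_\alpha(\pi)-\mathcal{L}_\alpha(\Phi_n(\pi))|$, so the lifting error is paid twice. It then controls $E_3=\inf_{\pi}\mathcal{L}_\alpha(\Phi_n(\pi))-\mathcal{L}_\alpha(\pi^*)$ by comparing two \emph{population} objectives, at $\pi^*$ and at $\tau=\Phi_n(\Psi_n(\pi^*))$. This round trip forces it to handle the composed kernel $K_X=Q_X\circ Q_X$ and a three-way coupling to produce the bias term.

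\textbf{Your route.} Your one-sided telescoping is legitimate because $\Phi_n(\hat\pi)$ is population-feasible. It pays the lifting error only once and compares $\hat{\mathcal{L}}_\alpha(\Psi_n(\pi^*_\alpha))$ directly with $\mathcal{L}_\alpha(\pi^*_\alpha)$, so no composed kernel appears. Likewise, you write $\hat{\mathcal{R}}(\pi)=\iint\hat\Gamma_\pi(S_X(x),S_Y(y))^2\,\mathbb{P}_X(dx)\,\mathbb{P}_Y(dy)$ and compare integrands pointwise under a single measure. The paper instead splits into two terms:
(A) evaluates the a.e.-defined population kernels $\Gamma_{\Phi_n(\pi)}$ at the sample points, which is why it adds $X_i$ to $V_i$ in Lemma~\ref{lem:laguerre};
(B) is a separate change-of-measure term.

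\textbf{What each buys.} The paper's organization isolates a standalone fact: the lifted polytope $\Phi_n(\Pi(\hat{\mathbb{P}}_X,\hat{\mathbb{P}}_Y))$ contains a near-optimal population plan, and there is a uniform two-sided lifting bound. Your ingredients give the same downstream consequences, including the remark that $\hat{\mathcal{L}}_\alpha(\hat\pi)$ is itself consistent. Your route is shorter and avoids point evaluations of conditional kernels.

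\textbf{Bookkeeping.} Three points, none of which affects the existence of $C$.
\begin{itemize}
\item In the projection step, the conditional law of $\Psi_n(\pi^*_\alpha)$ given $\hat y$ is the cell average \emph{pushed forward by} $S_X$. Besides the stability term, you therefore also pick up $2W_1^{d_{\mathcal{X}}}(\mathbb{P}_X,\hat{\mathbb{P}}_X)$ from replacing $d_{\mathcal{X}}(x,x')$ by $d_{\mathcal{X}}(S_X(x),S_X(x'))$. This is handled exactly as in your lifting step.
\item The within-cell quantity $n_Y\sum_j\int_{W_j}\int_{W_j}d_{\mathcal{Y}}(y,u)\,\mathbb{P}_Y(du)\,\mathbb{P}_Y(dy)$ is bounded by $2W_1^{d_{\mathcal{Y}}}(\mathbb{P}_Y,\hat{\mathbb{P}}_Y)$, since you must route through $Y_j$. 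The stability contribution is therefore $2L_W W_1^{d_{\mathcal{Y}}}$, not $L_W W_1^{d_{\mathcal{Y}}}$.
\item The $T+3$ needs no index shift. With curvature constant $16\alpha n_{\min}$, the recursion $h_{t+1}\le\frac{t}{t+2}h_t+\frac{32\alpha n_{\min}}{(t+2)^2}$ together with $h_1\le 8\alpha n_{\min}$ gives $h_t\le 32\alpha n_{\min}/(t+3)$ by induction.
\end{itemize}
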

\begin{proof}
	See Appendix~\ref{pf:thm:consistency}.
\end{proof}

Theorem~\ref{thm:consistency} demonstrates that the total error naturally decomposes into two distinct sources. The first is the \textit{optimization error}, arising from the finite number of iterations in the FW algorithm; as expected, this term decays at the rate of $O(1/T)$. 
The second is the \textit{statistical error}, which reflects the intrinsic uncertainty incurred by replacing the true distributions $\mathbb{P}_X$ and $\mathbb{P}_Y$ with their empirical counterparts. We note that the construction of the glued measure $\Phi_n$ relies on the disintegration of optimal couplings $Q_X$ and $Q_Y$; we refer the reader to Lemma~\ref{lem:laguerre} for details. Consequently, the statistical bound is governed by the convergence rates of the empirical measures, depending essentially on the geometry of the underlying sample spaces.

The glued measure $\Phi_n$ lifts the discrete solution $\hat{\pi}$ to the continuous population space with minimal transport cost. We can quantify the spatial displacement incurred by this lifting using Markov's inequality:
\begin{align*}
	\mathbb{P}_{(X,\hat X)\sim Q_X}\!(d_{\mathcal{X}}(X,\hat{X}) > \varepsilon) \leq \varepsilon^{-1}W_1^{d_{\mathcal{X}}}(\mathbb{P}_X,\hat{\mathbb{P}}_X) .
\end{align*}
Therefore, provided that the empirical measures converge in $W_1$, the blurring effect introduced by lifting $\hat{\pi}$ to the continuous space $\Pi(\mathbb{P}_X,\mathbb{P}_Y)$ vanishes asymptotically.

\begin{corollary}[Risk consistency]
	\label{cor:convergence}
	Suppose the assumptions in Theorem~\ref{thm:consistency} hold. Assume further that the datasets consist of i.i.d. samples drawn from $\mathbb{P}_X$ and $\mathbb{P}_Y$, respectively. Consider sequences of sample sizes $n_X, n_Y \to \infty$ and iteration count $T_n$ such that $n_{\min}T_n^{-1} \to 0$.
	Then, $\Phi_n(\hat{\pi})$ asymptotically achieves a global optimum:
	\begin{align*}
		\mathcal{L}_{\alpha}(\Phi_n(\hat{\pi});\mathfrak{X},\mathfrak{Y}) \xrightarrow{a.s.} \min_{\pi \in \Pi(\mathbb{P}_X,\mathbb{P}_Y)} \mathcal{L}_{\alpha}(\pi;\mathfrak{X},\mathfrak{Y}) .
	\end{align*}
\end{corollary}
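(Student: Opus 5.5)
We derive the corollary from the deterministic bound in Theorem~\ref{thm:consistency}, applied pathwise. For each $n$, the empirical measures $\hat{\mathbb{P}}_X$ and $\hat{\mathbb{P}}_Y$ are formed from i.i.d.\ samples. By Assumption~\ref{ass:non-atomic}, the samples are almost surely distinct. Hence, with probability one, the hypotheses of Theorem~\ref{thm:consistency} hold for every $n$ simultaneously, since this is a countable intersection of probability-one events. On that event, running FW with $T_n$ iterations gives
\begin{align*}
	0 \leq \mathcal{L}_{\alpha}(\Phi_n(\hat{\pi})) - \min_{\pi} \mathcal{L}_{\alpha}(\pi) \leq \frac{32\alpha\, n_{\min}}{T_n+3} + C\left( W_1^{d_{\mathcal{X}}}(\mathbb{P}_X,\hat{\mathbb{P}}_X) + W_1^{d_{\mathcal{Y}}}(\mathbb{P}_Y,\hat{\mathbb{P}}_Y)\right).
\end{align*}
The lower bound $0$ holds because $\Phi_n(\hat{\pi}) \in \Pi(\mathbb{P}_X,\mathbb{P}_Y)$. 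The constant $C$ depends only on $\alpha$, $L_f$, $L_W$ and the unit diameters, not on $n$ or on the samples. I would check this explicitly in the proof of the theorem, because it is essential for passing to the limit.

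The optimization term is deterministic. Since $n_{\min}/(T_n+3) \leq n_{\min}T_n^{-1} \to 0$ by hypothesis, it vanishes.

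For the statistical term, I would use that on a compact metric space $W_1$ metrizes weak convergence. Varadarajan's theorem states that, for a separable metric space, the empirical measure of i.i.d.\ samples converges weakly to the true law almost surely. Compactness of $\mathcal{X}$ gives separability, so $W_1^{d_{\mathcal{X}}}(\mathbb{P}_X,\hat{\mathbb{P}}_X)\to 0$ a.s. The same argument gives $W_1^{d_{\mathcal{Y}}}(\mathbb{P}_Y,\hat{\mathbb{P}}_Y)\to 0$ a.s.

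Intersecting these probability-one events, the right-hand side tends to $0$, and the claimed almost sure convergence follows.

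The main subtlety is making sure the almost sure statement is legitimate along a sequence in which $n_X$ and $n_Y$ both diverge. If each sample sequence is a prefix of a single infinite i.i.d.\ sequence, Varadarajan applies directly along any subsequence $n_X\to\infty$. Otherwise, for triangular arrays, I would instead invoke the concentration bound of Fournier--Guillin type for $W_1$ on bounded spaces together with Borel--Cantelli. That approach requires summability, which holds given any polynomial growth of the sample sizes.
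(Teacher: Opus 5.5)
Your proposal is correct and follows the paper's intended route. The paper gives no separate proof, and the corollary follows from Theorem~\ref{thm:consistency}, whose constant (explicitly $4(2L_f+2L_W+4)$ in its proof) does not depend on $n$: the optimization term vanishes because $n_{\min}/T_n \to 0$, and the statistical term vanishes because empirical measures on a compact metric space converge in $W_1$ almost surely. Your added care, namely the almost-sure distinctness of the samples via Assumption~\ref{ass:non-atomic} and the distinction between nested and triangular-array sampling, fills in details the paper leaves unstated.
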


The established error bound remains valid up to a constant factor when the population objective $\mathcal{L}_{\alpha}(\Phi_n(\hat{\pi});\mathfrak{X},\mathfrak{Y})$ is replaced by the empirical objective $\hat{\mathcal{L}}_{\alpha}(\hat{\pi};\mathfrak{X},\mathfrak{Y})$ in Theorem~\ref{thm:consistency} and
Corollary~\ref{cor:convergence}. Therefore, the finite-sample optimal transport cost itself is a consistent estimator of the population optimal cost, i.e., $\hat{\mathcal{L}}_{\alpha}(\hat{\pi}) \to \min_{\pi \in \Pi(\mathbb{P}_X,\mathbb{P}_Y)} \mathcal{L}_{\alpha}(\pi)$.

The deterministic bound in Theorem~\ref{thm:consistency} holds on arbitrary compact mm spaces, but the resulting sample-size rate depends on the geometry of the underlying domain through the empirical Wasserstein terms. For i.i.d.\ samples on a compact subset of $\mathbb{R}^d$, the statistical error scales at the standard rate $O(n_{\min}^{-1/d})$ for $d \geq 3$~\citep{fournier2015rate,weed2019sharp}, with faster rates in low dimensions. In infinite-dimensional settings, however, such polynomial rates are not guaranteed without further structural assumptions on the underlying measures.

Finally, the choice of iteration count $T$ is briefly discussed. While the optimization error bound in Theorem~\ref{thm:consistency} depends on $n_{\min}$ due to worst-case control of operator norms, a moderate number of iterations (e.g., $T=50,100,200$) is sufficient to reduce the error to a negligible level in practice.

\section{Empirical Validation}\label{sec:Empirical}

\subsection{Synthetic Data Analysis}\label{sec:Synthetic}
\begin{table*}[t]
	\centering
	\setlength{\tabcolsep}{7.2pt}
	\caption{Comparison of MSE (mean $\pm$ std) over 100 trials. CDOT is compared against FGW, Entropic FGW (EFGW), IsoRank, Spectral Matching (Spectral), and COPT. The best MSE for each $n$ is highlighted in bold.}
\label{tab:simulation-results}
\renewcommand{\arraystretch}{1.15}
\begin{tabular}{ccccccc}
	\toprule
	$n = N/4$
	& \textbf{CDOT (Ours)}
	& \textbf{FGW}
	& \textbf{EFGW}
	& \textbf{IsoRank}
	& \textbf{Spectral}
	& \textbf{COPT} \\
	\midrule
	100 & \textbf{0.0077} $\pm$ 0.00 & 0.0146 $\pm$ 0.00 & 0.0098 $\pm$ 0.00 & 0.0141 $\pm$ 0.00 & 1.3447 $\pm$ 0.07 & 0.6664 $\pm$ 0.02 \\
	200 & \textbf{0.0040} $\pm$ 0.00 & 0.0081 $\pm$ 0.00 & 0.0054 $\pm$ 0.00 & 0.0078 $\pm$ 0.00 & 1.3429 $\pm$ 0.06 & 0.6691 $\pm$ 0.01 \\
	300 & \textbf{0.0027} $\pm$ 0.00 & 0.0055 $\pm$ 0.00 & 0.0038 $\pm$ 0.00 & 0.0053 $\pm$ 0.00 & 1.3276 $\pm$ 0.04 & 0.6670 $\pm$ 0.01 \\
	400 & \textbf{0.0020} $\pm$ 0.00 & 0.0043 $\pm$ 0.00 & 0.0031 $\pm$ 0.00 & 0.0041 $\pm$ 0.00 & 1.3355 $\pm$ 0.04 & 0.6682 $\pm$ 0.01 \\
	500 & \textbf{0.0016} $\pm$ 0.00 & 0.0034 $\pm$ 0.00 & 0.0025 $\pm$ 0.00 & 0.0033 $\pm$ 0.00 & 1.3373 $\pm$ 0.03 & 0.6670 $\pm$ 0.01 \\
	\bottomrule
\end{tabular}\vspace{-4mm}
\end{table*}

This section empirically validates the theoretical results using synthetic 2D clustered point clouds. Our primary focus is to assess optimal solution recovery and to verify the consistency of the estimated transport plan by increasing the sample size.
We first define the underlying space as $[0,2]^2$, partitioned into four unit-square regions $\{\mathcal{R}_k\}_{k=1}^4$:
\begin{align*}
	&\mathcal{R}_1 = [0,1)^2, \quad \mathcal{R}_2 = [1,2] \times [0,1), \\
	&\mathcal{R}_3 = [0,1) \times [1,2], \quad \mathcal{R}_4 = [1,2]^2.
\end{align*}
We equip this space with the Euclidean metric and a feature function $f: [0,2]^2 \to \{1,...,4\}$ defined by $f(x) = k$ if $x \in \mathcal{R}_k$. This constructs the attributed mm spaces:
\begin{align*}
	\mathfrak{X} = \mathfrak{Y} = \left([0,2]^2, \|\cdot\|_2, U([0,2]^2), f\right),
\end{align*}
where $U([0,2]^2)$ denotes the uniform distribution on $[0,2]^2$.

We sample $n$ points uniformly from each region $\mathcal{R}_k$, resulting in a total sample size of $N=4n$. The feature cost matrix $C_f$ is defined by the 0-1 cluster mismatch penalty: 
\begin{align*}
	[C_f]_{ij} = \min\{1, |f(x_i) - f(y_j)| \} .
\end{align*}

We compare CDOT with five baseline algorithms: 1) FGW, 2) Entropic FGW (EFGW), 3) IsoRank, 4) Spectral Matching (Spectral), and 5) Coordinated Optimal Transport (COPT). Detailed configurations are provided in Appendix~\ref{App:additional-simulation}. Since Spectral and COPT do not incorporate feature information, they are independent of the fusion weight $\alpha$. Distance matrices are normalized by their maximum values in all experiments. For EFGW, we report the best performance obtained at $\varepsilon=10^{-3}$ over $\{10^{-4}, ..., 10^{-1}\}$. We vary the cluster size $n \in \{100, ..., 500\}$ with a fixed $\alpha = 0.5$ for CDOT, FGW, EFGW, and IsoRank. Results are averaged over 100 independent trials with random initialization, and the number of iterations is set to $T=200$. Performance is evaluated using the mean squared error (MSE) between the source $X$ and the transported target $\hat{Y} = N \hat{\pi} Y$.

Table~\ref{tab:simulation-results} summarizes the simulation results. 
CDOT consistently achieves the lowest MSE across all sample sizes, and the error decreases as $n$ increases. These trends empirically support the design of the CDOT objective, suggesting that the operator-based structural term provides a stable and effective geometric alignment signal when combined with feature matching. EFGW yields results 
comparable to CDOT, likely due to the entropic regularization; as the entropic regularization smooths the GW objective and partially mitigates issues arising from the dispersion term. IsoRank also shows comparable performance to CDOT, which is expected given the underlying isomorphism between the spaces. In contrast, Spectral and COPT exhibit higher MSEs, as they rely purely on structural information without leveraging the features.

\subsection{Brain Connectome Analysis}

\begin{figure}[t]
	\centering
	\includegraphics[width=0.5\textwidth]{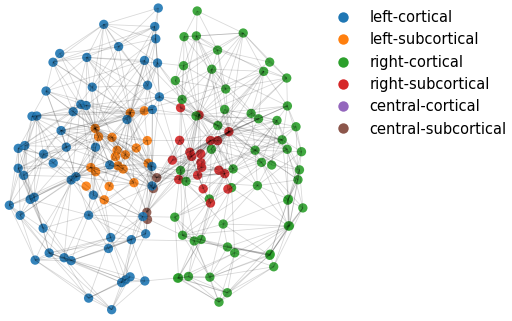}
	\caption{Visualization of an OASIS-3 brain connectome (170 nodes). Nodes are colored according to six anatomical region labels, corresponding to left, right, and central regions, each subdivided into cortical and subcortical areas.}
	\label{fig:brain-connectome}\vspace{-5mm}
\end{figure}

The proposed CDOT is evaluated on the OASIS-3 brain connectome dataset~\citep{kerepesi2017braingraph}, which comprises structural brain networks from $696$ subjects. Each network consists of $170$ nodes annotated by six anatomical labels (see Figure~\ref{fig:brain-connectome}), with weighted edges representing anatomical connectivity. The connectivity matrices are used only to define the graph geometry, while the six anatomical categories serve as auxiliary node features. We perform node correspondence across subjects: since all subjects share a common atlas, node $i$ in one subject and node $i$ in another subject represent the same anatomical region of interest, so the ground-truth matching is the identity permutation. Pairwise matching is performed between the first $100$ subjects ($_{100}C_2$). Each method outputs a soft coupling or similarity matrix, which we convert to a hard one-to-one matching via the linear assignment problem; accuracy is reported as the fraction of nodes matched to their correct atlas index, averaged over subject pairs.

We use categorical labels representing hemispheric and cortical status as features, and we consider two distinct metrics: geodesic distance and diffusion distance. CDOT is compared against OT-based methods (FGW, EFGW) and topology-based baselines (Spectral, IsoRank, and COPT), where topology-based methods utilize graph adjacency matrices as inputs. Analogously to the synthetic setting, we use the fusion weight $\alpha=0.5$ for CDOT, FGW, EFGW, and IsoRank. Further details are provided in Appendix~\ref{App:oasis-3}.

\begin{table}[t]
	\centering
	\setlength{\tabcolsep}{9.6pt}
	\caption{Comparison of matching accuracy on OASIS-3. The best value for each structure representation is highlighted in bold.}
	\label{tab:oasis-accuracy}
	\footnotesize
	\renewcommand{\arraystretch}{1.2}
	\begin{tabular}{lccc}
		\toprule
		& \multicolumn{3}{c}{\textbf{Structure Representation}} \\
		\cmidrule(lr){2-4}
		\textbf{Method} & \textbf{Diffusion} & \textbf{Geodesic} & \textbf{Topology} \\
		\midrule
		\textbf{CDOT (Ours)} & \textbf{0.6136} & 0.4640 & -- \\
		\textbf{FGW} & 0.1853 & \textbf{0.5375} & -- \\
		\textbf{EFGW} & 0.4097 & 0.4583 & -- \\
		\textbf{IsoRank} & -- & -- & \textbf{0.4055} \\
		\textbf{Spectral} & -- & -- & 0.0737 \\
		\textbf{COPT} & -- & -- & 0.0253 \\
		\bottomrule
	\end{tabular}\vspace{-5mm}
\end{table}

The results in Table~\ref{tab:oasis-accuracy} demonstrate CDOT's capacity to utilize robust geometric information. While FGW shows a marginal edge under the geodesic distance due to its direct minimization of pairwise distortions, CDOT achieves a lead when utilizing diffusion distance. This performance disparity stems from the fundamental difference in geometric perception. Unlike the brittle geodesic distance, which relies on single shortest paths vulnerable to noise, diffusion distance averages over all paths to capture robust global connectivity. CDOT effectively leverages this rich geometric information. Conversely, FGW struggles because diffusion smooths out local metric details; this reduced variance diminishes the distinct pairwise contrasts required by the GW objective for precise alignment.

\begin{table*}[t]
	\centering
	\setlength{\tabcolsep}{3.9pt}
	\caption{Comparison of classification accuracy (mean $\pm$ std) across different graph datasets. We compare CDOT against FGW, GW, and COPT methods. The best accuracy for each dataset is highlighted in bold.}
	\label{tab:graph-classification-results}
	\renewcommand{\arraystretch}{1.2}
	\begin{tabular}{lcccccc}
		\toprule
		& & & \multicolumn{1}{c}{\textbf{CDOT (Ours)}} & \multicolumn{1}{c}{\textbf{FGW}} & \multicolumn{1}{c}{\textbf{GW}} & \multicolumn{1}{c}{\textbf{COPT}} \\
		Dataset & Avg. Nodes & Node Info. & Acc. (mean $\pm$ std) & Acc. (mean $\pm$ std) & Acc. (mean $\pm$ std) & Acc. (mean $\pm$ std) \\
		\midrule
		MUTAG    & 17.93 & Label & \textbf{0.8617} $\pm$ 0.07 & 0.8249 $\pm$ 0.08 & 0.7175 $\pm$ 0.10 & 0.6330 $\pm$ 0.04 \\
		IMDB-B   & 19.77 & None  & \textbf{0.6420} $\pm$ 0.04 & 0.6020 $\pm$ 0.07 &          --       & 0.6370 $\pm$ 0.03 \\
		PROTEINS & 39.06 & Attr. & \textbf{0.7547} $\pm$ 0.03 & 0.7358 $\pm$ 0.03 & 0.6612 $\pm$ 0.03 & 0.6954 $\pm$ 0.04 \\
		NCI1     & 29.87 & Label & \textbf{0.7477} $\pm$ 0.01 & 0.7302 $\pm$ 0.01 & 0.5708 $\pm$ 0.02 & 0.5993 $\pm$ 0.02 \\
		ENZYMES  & 32.63 & Attr. & \textbf{0.5133} $\pm$ 0.04 & 0.4450 $\pm$ 0.03 & 0.2383 $\pm$ 0.07 & 0.2350 $\pm$ 0.06 \\
		\bottomrule	
	\end{tabular}\vspace{-4mm}
\end{table*}

Performance among topology-based baselines is mixed. Spectral and COPT yield poor accuracy, confirming that relying solely on adjacency structure is insufficient. Conversely, IsoRank achieves a competitive accuracy by incorporating node features. However, it still falls short of OT-based methods. This indicates that while integrating feature signals is crucial, IsoRank's reliance on local neighborhood similarity is less effective than OT frameworks.

\subsection{Graph Classification}

Beyond the matching application, we assess the capability of CDOT to serve as a metric for graph classification tasks. We conduct experiments on standard benchmark datasets from TUDataset~\citep{Morris+2020}, covering both bioinformatics (MUTAG, PROTEINS, NCI1, ENZYMES) and social network (IMDB-BINARY). Specifically, the bioinformatics tasks involve predicting molecular properties or biological functions, whereas the social network task classifies actor collaboration graphs by movie genre.

We adopt a kernel-based classification framework using a support vector machine (SVM). For each method, we compute the pairwise distance matrix $D$ between all graphs in the dataset to construct a Gaussian RBF kernel $K = \exp(-\gamma D^2)$. We employ a nested cross-validation scheme to ensure a robust evaluation. The outer loop performs 10-fold stratified cross-validation to estimate accuracy. The inner loop (5-fold) performs a grid search to select optimal hyperparameters: the SVM regularization parameter $C \in \{10^{-1}, ..., 10^2\}$ and the kernel bandwidth $\gamma \in \{10^{-3}, ... , 10\}$. For fused approaches (CDOT, FGW), we also tune the fusion weight $\alpha \in 0.25\times\{0,...,4\}$. Input graphs are represented by their geodesic distance matrices to capture global geometry. For node features, we utilize the provided labels or attributes. However, for IMDB-BINARY, lacking node features, we rely solely on structural geometry; in such cases, the fusion weight is fixed at $\alpha = 1$, and it is excluded from the hyperparameter tuning procedure.

Table~\ref{tab:graph-classification-results} summarizes the classification performance. CDOT consistently outperforms the baseline methods across diverse datasets. The performance gap is particularly notable in datasets rich in feature information, such as MUTAG and ENZYMES, where CDOT achieves significantly higher accuracy than purely structural methods (GW and COPT). This highlights the efficacy of our method in fusing feature signals with geometric structure. Notably, even in IMDB-BINARY where no node features are available ($\alpha=1$), CDOT maintains the best performance solely based on structural information. This suggests that the proposed operator geometry captures topological distinctions more effectively than the standard GW approaches. We emphasize that while specialized neural architectures may achieve state-of-the-art results on these benchmarks, CDOT demonstrates superior performance compared to existing OT baselines.

\section{Discussion}\label{sec:Discussion}

This work introduces \emph{Convex Distance Operator Transport (CDOT)}, a novel geometry-aware convex framework that aligns heterogeneous mm spaces via an operator-based formulation. By aggregating geometric information at the operator level, CDOT filters out the geometric dispersion term, the source of non-convexity in GW approaches, which ensures a convex optimization landscape while effectively capturing geometric information. Theoretically, we show that the resulting discrepancy serves as a valid pseudometric between attributed compact mm spaces and establish the risk consistency of our estimator through non-asymptotic risk bounds. Experiments on synthetic and real-world data empirically demonstrate the efficacy of our method.

We also note two limitations. First, CDOT is designed for operator-level alignment rather than nearly one-to-one matching, since zero discrepancy does not necessarily imply exact measure-preserving isometry; GW and FGW may therefore be more appropriate when rigid pointwise matching is the primary goal. Second, although CDOT admits a convex formulation, each Frank--Wolfe iteration is still dominated by a standard OT subproblem, so the current implementation is better suited to moderate problem sizes than to very large-scale settings.

Several avenues remain for future investigation. A particularly promising direction is the development of a statistical inference framework for mm spaces.
Since the CDOT discrepancy constitutes a valid pseudometric, it naturally lends itself to statistical inference on mm spaces. Developing hypothesis testing frameworks, such as isomorphism tests based on the CDOT discrepancy, would be an interesting future work.
Furthermore, while this study requires that the metric structure (e.g., geodesic or diffusion) be predefined, one could jointly learn the optimal ground metric or feature representation that minimizes the CDOT discrepancy by integrating the CDOT objective as a differentiable layer within deep neural networks. We expect that this direction bridges the gap between geometric optimal transport and representation learning.

\section*{Impact Statement}\label{sec:Impact}
This paper presents a methodological contribution to optimal transport and structured data analysis. Its potential benefits include more reliable alignment and comparison of heterogeneous graphs or metric-measure spaces, with applications in scientific domains such as biology, neuroscience, and network analysis. Since CDOT is a general-purpose tool, its societal impact depends on the downstream setting; possible risks include privacy-sensitive graph matching or biased conclusions when applied to human-related network data. We do not foresee direct negative societal impacts beyond those common to machine learning methods, but deployment in sensitive domains should be accompanied by appropriate validation, privacy safeguards, and domain-specific oversight.

\section*{Acknowledgments}\label{sec:Acknowledgments}
This work was supported by the National Research Foundation of Korea (NRF) grant funded by the Korea government (MSIT) (No. RS-2026-25473737), the Institute of Information \& Communications Technology Planning \& Evaluation (IITP) grant funded by the Korea government (MSIT) (No. RS-2021-II211343), the Global-LAMP Program of the National Research Foundation of Korea (NRF) grant funded by the Ministry of Education (No. RS-2023-00301976), and the National Research Foundation of Korea (NRF) grant funded by the Korea government (MSIT) (No. RS-2026-25494850).

\bibliography{CDOT_references}
\bibliographystyle{icml2026}

\newpage
\appendix
\onecolumn

\section{Notations}
We first introduce some notations used throughout the Appendix.
Let $(\Omega,\mathcal{F},\mathbb{P})$ be a probability space, and let $(\mathcal{X},d_{\mathcal{X}})$ and $(\mathcal{Y},d_{\mathcal{Y}})$ be compact metric spaces. We denote by $\mathcal{B}(\mathcal{X})$ the Borel $\sigma$-algebra on $\mathcal{X}$. The compactness guarantees that both spaces are Polish, ensuring that regular conditional distributions exist (Lemma~\ref{lem:disintegration}).

For a set $\mathcal{S}$, we denote by $\mathbb{I}_A$ the indicator function of a subset $A \subset \mathcal{S}$. For a metric space $(\mathcal{S}, d_{\mathcal{S}})$, let $C(\mathcal{S})$ be the space of continuous functions and $C_b(\mathcal{S})$ be the space of bounded continuous functions from $\mathcal{S}$ to $\mathbb{R}$. Note that since $\mathcal{X}$ and $\mathcal{Y}$ are compact, $C(\mathcal{X}) = C_b(\mathcal{X})$ and $C(\mathcal{Y}) = C_b(\mathcal{Y})$.
For a measure $\mu$ on $\mathcal{S}$ and $p \in [1, \infty)$, let $L^p(\mathcal{S}, \mu)$ denote the Banach space of measurable functions $f: \mathcal{S} \to \mathbb{R}$ such that $\int_{\mathcal{S}} |f|^p d\mu < \infty$. When the domain is clear from the context, we abbreviate this as $L^p(\mu)$. Specifically, $L^2(\mu)$ forms a Hilbert space equipped with the inner product $\langle f, g \rangle_{L^2(\mu)} = \int f g d\mu$.

We denote by $\mathcal{P}(\mathcal{X})$ the set of all Borel probability measures on $\mathcal{X}$. For $x \in \mathcal{X}$, $\delta_x$ denotes the Dirac measure concentrated at $x$.
Measurable maps $X:\Omega\to \mathcal{X}$ and $Y:\Omega\to \mathcal{Y}$ are called random elements taking values in $\mathcal{X}$ and $\mathcal{Y}$, with distributions $\mathbb{P}_X\coloneqq\mathbb{P}\circ X^{-1}$ and $\mathbb{P}_Y\coloneqq\mathbb{P}\circ Y^{-1}$, respectively. For simplicity, we assume $\mathbb{P}_X$ and $\mathbb{P}_Y$ are fully supported. Then $(\mathcal{X},d_{\mathcal{X}},\mathbb{P}_X)$ and $(\mathcal{Y},d_{\mathcal{Y}},\mathbb{P}_Y)$ form compact metric-measure (mm) spaces.
We say a measurable map $T: \mathcal{X} \to \mathcal{Y}$ pushes forward $\mu$ to $\nu$ if $\mu(T^{-1}(A)) = \nu(A)$ for all $A \in \mathcal{B}(\mathcal{Y})$. We write $T_{\#}\mu = \nu$ if $T$ pushes forward $\mu$ to $\nu$.
For a subset $A \in \mathcal{B}(\mathcal{X})$, the restriction of $\mu$ to $A$ is denoted by $\mu\lfloor_A$, defined as $\mu\lfloor_A(B) = \mu(A \cap B)$.

For probability measures $\mu_1,...,\mu_m$, denote by
\begin{align*}
	\Pi(\mu_1,...,\mu_m) \coloneqq \{\pi : \text{the marginals of $\pi$ are $\mu_1,...,\mu_m$}\}
\end{align*}
the set of all couplings among them.
We further introduce a compact feature space $\mathcal{M}\subset\mathbb{R}^k$ and define continuous mappings $f_{\mathcal{X}}:\mathcal{X}\to \mathcal{M}$ and $f_{\mathcal{Y}}:\mathcal{Y}\to \mathcal{M}$, referred to as feature functions. With these components, we define an attributed compact mm space as a tuple $\mathfrak{X} \coloneqq (\mathcal{X}, d_{\mathcal{X}}, \mathbb{P}_X, f_{\mathcal{X}})$. We define $\mathfrak{Y} \coloneqq (\mathcal{Y}, d_{\mathcal{Y}}, \mathbb{P}_Y, f_{\mathcal{Y}})$ analogously. Throughout this paper, we assume that $\mathrm{diam}(\mathcal{X}) = \mathrm{diam}(\mathcal{Y}) = \mathrm{diam}(\mathcal{M}) = 1$, where $\mathrm{diam}(\mathcal{A}) \coloneqq \sup_{a,a'\in \mathcal{A}} d_{\mathcal{A}}(a,a')$ denotes the diameter.

\section{Related Work}\label{App:related-work}

This section reviews representative approaches for aligning structured objects modeled as mm spaces and graphs. We begin with the definition of the Gromov--Wasserstein (GW) discrepancy, a fundamental framework for comparing metric structures. This is immediately followed by the fused Gromov--Wasserstein (FGW) formulation, which extends GW to incorporate auxiliary feature information alongside structural data. Given that both objectives involve a quadratic dependence on the coupling, we subsequently discuss strategies to address the inherent non-convexity of these problems, including entropic regularization schemes and convex relaxations. The review then proceeds to scalability-oriented approximations designed for large-scale applications, such as sliced and low-rank variants. Finally, we cover graph-specific optimal transport formulations, specifically Coordinated Optimal Transport (COPT), and summarize classical non-OT baselines such as spectral and random-walk-based network alignment methods.

For two compact mm spaces $(\mathcal{X},d_{\mathcal{X}},\mathbb{P}_X)$ and $(\mathcal{Y},d_{\mathcal{Y}},\mathbb{P}_Y)$, the $p$-th order GW (GW-$p$) objective $\mathcal{R}_{\mathrm{GW},p} : \Pi(\mathbb{P}_X,\mathbb{P}_Y) \to \mathbb{R}$ with $1 \leq p < \infty$ is defined by
\begin{align*}
	\mathcal{R}_{\mathrm{GW},p}(\pi)
	\coloneqq
	\mathbb{E}_{\pi \otimes \pi}\!\left[\big|d_{\mathcal{X}}(X,X') - d_{\mathcal{Y}}(Y,Y')\big|^p\right] = \iint \big| d_{\mathcal{X}}(x,x') - d_{\mathcal{Y}}(y,y') \big|^p \pi(dx,dy)\pi(dx',dy') .
\end{align*}
The GW-$p$ discrepancy is obtained by minimizing $(\mathcal{R}_{\mathrm{GW},p}(\pi))^{1/p}$ over $\pi \in \Pi(\mathbb{P}_X,\mathbb{P}_Y)$, vanishing if and only if the two spaces are measure-preserving isometric. However, this formulation involves a tensor product of the coupling (i.e., $\pi \otimes \pi$), rendering the optimization landscape inherently non-convex due to its quadratic dependence on $\pi$. While GW-$p$ is defined for general $p$, most works focus on the quadratic case ($p=2$)~\citep{memoli2011gromov, peyre2016gromov, rioux2024entropic, chen2024semidefinite} due to its theoretical clarity. In the sequel, references to the GW objective imply $p=2$ unless otherwise specified.

In many applications, points in mm spaces are equipped with auxiliary attributes alongside their structural information. The FGW framework~\citep{vayer2020fused} addresses this by interpolating between a feature-based transport cost and the structural GW discrepancy. Given two attributed compact mm spaces $\mathfrak{X}$ and $\mathfrak{Y}$ coupled with a feature cost $c(\cdot,\cdot)$, the FGW objective $\mathcal{L}_{\mathrm{FGW},\alpha} : \Pi(\mathbb{P}_X,\mathbb{P}_Y) \to \mathbb{R}$ is defined by
\begin{align*}
	\mathcal{L}_{\mathrm{FGW},\alpha} (\pi ; \mathfrak{X},\mathfrak{Y}) \coloneqq (1-\alpha)\,\mathbb{E}_{\pi}\!\left[c(X,Y)\right] + \alpha\,\mathcal{R}_{\mathrm{GW},2}(\pi),
\end{align*}
where the parameter $0 \leq \alpha \leq 1$ regulates the trade-off between feature matching and structural alignment. Although \citet{vayer2020fused} define FGW in a more general setting and establish its pseudometric properties in special cases, we focus here on the quadratic formulation. Consequently, the FGW objective inherits the non-convex nature of the GW objective $\mathcal{R}_{\mathrm{GW},2}$.

Weak optimal transport provides another nonlinear variant of optimal transport, where the transportation cost may depend on the conditional law induced by a coupling~\citep{backhoff2022stability}. This perspective is related to our formulation in that CDOT also depends on conditional transport mechanisms rather than only on a pointwise cost. However, standard weak OT typically considers costs depending on the forward conditional law $\pi(\cdot \mid x)$, whereas our structural term involves both directions of the coupling through conditional expectations with respect to $\pi(\cdot \mid x)$ and $\pi(\cdot \mid y)$. Thus, we regard weak OT as a conceptually related but distinct framework, rather than as a direct baseline for structured alignment.

A substantial literature aims to mitigate the non-convexity of (F)GW, either by proposing scalable solvers with convergence to stationary points or by constructing convex surrogates. For instance, \citet{peyre2016gromov} and \citet{solomon2016entropic} introduce entropic regularization together with projected-gradient or Sinkhorn-type schemes, which partially mitigates the non-convexity issue and improves numerical stability. Notably, \citet{rioux2024entropic} demonstrate that under sufficiently strong regularization, the dual formulation of the entropic GW objective becomes globally convex, admitting theoretical guarantees for global optimality. However, outside this specific high-regularization regime, the optimization landscape remains generally non-convex, with solvers typically converging only to stationary points. Furthermore, a limitation of these entropic formulations is that they generally fail to preserve the pseudometric properties on the space of attributed compact mm spaces. Finally, its performance is often highly sensitive to the choice of the regularization parameter, which necessitates careful tuning to balance numerical stability with approximation fidelity.

To mitigate the computational burden of GW-type objectives, several scalable approximations have been proposed. A prominent direction is a slicing-based approach, exemplified by Sliced GW~\citep{titouan2019sliced}, which approximates the discrepancy by averaging costs computed over randomized one-dimensional projections. While this formulation offers significant scalability for large-scale Euclidean point clouds, it inherently requires the underlying spaces to possess a Euclidean structure and typically estimates the distance value rather than explicitly retrieving the transport coupling. 

Recently, \citet{piening2025novel} extend this paradigm to the fused setting (Sliced FGW), incorporating attribute information to handle structured domains efficiently. However, like its predecessors, it primarily targets fast discrepancy estimation via sliced projections and therefore does not directly output a globally consistent coupling or permutation, often necessitating auxiliary post-processing when explicit node correspondences are required. Moreover, its fidelity to the underlying FGW objective depends on approximation choices (e.g., the number and placement of quantile samples and random projections), introducing a trade-off between computational efficiency and the tightness of the approximation. 

Alternatively, structural restrictions on the optimization variables offer another path to scalability. \citet{scetbon2022linear} introduce a low-rank GW framework that restricts the admissible couplings and cost matrices to low-rank factorizations. This approach achieves linear time and memory complexity, making it applicable to massive datasets with millions of points. Nevertheless, its effectiveness is contingent upon the assumption that the intrinsic geometry of the data and their optimal alignment can be faithfully captured by a low-rank representation, which may not hold for all complex topological structures.

\citet{chen2024semidefinite} propose a semidefinite programming relaxation of the GW objective (GW-SDP), yielding a convex lower bound that can be optimized in a lifted space and providing a certificate to assess near-optimality. Despite these appealing guarantees, the relaxation can incur substantial computational overhead, limiting applicability to small-sized problems. Moreover, while the relaxation is provably tight in several structured settings, its broader theoretical behavior remains largely unexplored.

Beyond GW/FGW, COPT~\citep{dong2020copt} provides an OT-based approach to graph alignment by coupling vertex matching with the alignment of graph-induced random signal distributions. COPT equips each graph with a Gaussian measure over graph signals and seeks a vertex coupling that is consistent with a transport mechanism acting on these signal distributions. This perspective emphasizes global structural alignment through signal statistics, rather than relying on an explicit feature cost matrix. However, this framework requires the assumption of Gaussian signal distributions and lacks a direct mechanism to incorporate node attributes. A formal definition of the COPT objective is provided in Appendix~\ref{App:additional-simulation}.

When the objects are finite graphs, matching is represented by a permutation matrix $P$ (or a soft assignment matrix $\pi$ after relaxation). In this discrete regime, matching by comparing adjacency matrices naturally leads to quadratic assignment--type objectives, linking GW-style structure matching to the classical graph matching literature~\citep{zaslavskiy2008path,ravikumar2006quadratic,aflalo2015convex,schellewald2007evaluation,schellewald2005probabilistic,swoboda2019convex}. For example, given adjacency matrices $A_1$ and $A_2$ of graphs $G_1$ and $G_2$, the classical graph matching problem seeks a permutation matrix $P$ that minimizes
\begin{align*}
	\| A_1 - P A_2 P^\top \|_F^2,
	\end{align*}
which is equivalent to $\|A_1P - PA_2 \|_F^2$. A common convex relaxation replaces $P$ by a soft assignment matrix $\pi$ constrained to lie in the Birkhoff polytope (i.e., the set of doubly stochastic matrices), yielding the convex quadratic program
\begin{align*}
	\min_{\pi \in \mathcal{B}} \ \|A_1\pi - \pi A_2 \|_F^2,
	\end{align*}
where $\mathcal{B}$ denotes the Birkhoff polytope. Such relaxations are widely used to obtain tractable surrogates of combinatorial matching objectives and to provide principled initializations for non-convex solvers.

Classical baselines for matching and network alignment include spectral methods and random-walk-based similarity propagation. Spectral Matching (Spectral)~\citep{leordeanu2005spectral} embeds nodes using eigenvectors of adjacency or Laplacian-derived matrices and then solve an assignment problem based on distances between spectral embeddings. This typically yields a continuous score first, and a hard matching is obtained only after an additional discretization or assignment step (e.g., solving an LAP). In network alignment, IsoRank~\citep{singh2008global} iteratively propagates similarity scores over the Kronecker product of two graphs, based on the principle that two nodes are similar if their neighbors are similar. These approaches are typically computationally efficient and conceptually simple, but their performance can be sensitive to symmetries, spectral ambiguities, or local neighborhood distortions, depending on the underlying graph geometry.

\section{Numerical Realization of $\pi \mapsto (\mathcal{R}(\pi),\mathcal{V}(\pi))$}\label{App:optimization-landscape}
We empirically project the coupling space $\Pi(\mathbb{P}_X,\mathbb{P}_Y)$ onto the two-dimensional plane spanned by the structural regularization $\mathcal{R}(\pi)$ and the dispersion penalty $\mathcal{V}(\pi)$. We consider a synthetic matching problem between two Euclidean spaces with $n=4$ points, constructed to be isomorphic (measure-preserving isometric) up to a rigid transformation.

\begin{figure}[t]
	\centering
	\includegraphics[width=\textwidth]{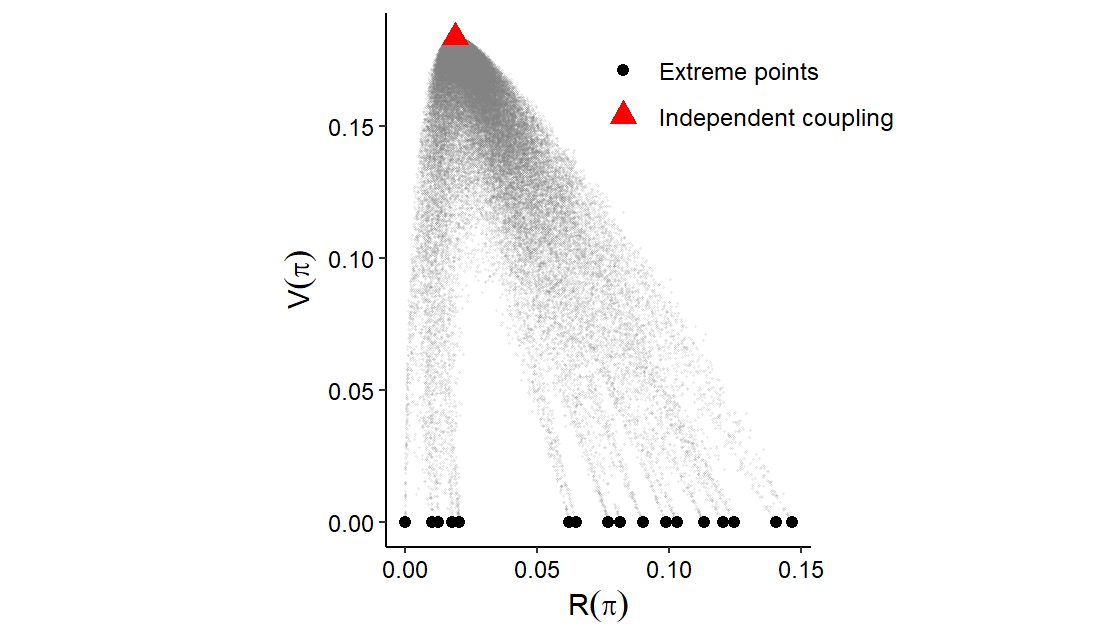}
	\caption{Empirical visualization of the optimization landscape. The scatter plot illustrates the image of the transport polytope projected onto the plane spanned by the structural regularization $\mathcal{R}(\pi)$ ($x$-axis) and the dispersion penalty $\mathcal{V}(\pi)$ ($y$-axis) for an isomorphic matching problem with $n=4$. Black circles represent permutation matrices (extreme points), the red triangle marks the independent coupling ($\mathbb{P}_X \otimes \mathbb{P}_Y$), and gray points denote random transport plans sampled from the interior of the polytope.}
	\label{fig:numerical-polytope-region}\vspace{-2mm}
\end{figure}

The resulting landscape is illustrated in Figure~\ref{fig:numerical-polytope-region}. The horizontal and vertical axes correspond to the structural regularization $\mathcal{R}(\pi)$ and the dispersion penalty $\mathcal{V}(\pi)$, respectively. The gray scatter points represent transport plans $\pi$ randomly sampled from the interior of the transport polytope ($\Pi(\mathbb{P}_X,\mathbb{P}_Y)$) using Dirichlet mixtures. The black circular points denote the permutation matrices (extreme points), while the red triangle marks the independent coupling ($\mathbb{P}_X \otimes \mathbb{P}_Y$). Notably, the projected points form a distinctive non-convex, boomerang-like region, empirically validating the schematic illustration discussed in Section~\ref{sec:relation-to-GW}. This shape highlights the geometry of the coupling space: the independent coupling sits at the peak with maximal dispersion, while the feasible region bifurcates and descends towards the zero-dispersion axis where the bi-deterministic permutation matrices reside.

\section{Algorithms}\label{App:algorithms}
In this appendix, we provide detailed pseudocodes for the algorithms discussed in Section~\ref{sec:Estimation} and present a comparative analysis of their computational complexity. Importantly, we note that these algorithms are mathematically equivalent.

\subsection{Standard FW Algorithm}
Algorithm~\ref{alg:proposed-algorithm} outlines the standard Frank--Wolfe (FW) procedure. In each iteration, it computes the full gradient of the objective function and solves a linear minimization subproblem over the transport polytope.

\begin{algorithm}[t]
	\caption{Convex Distance Operator Transport Plan via FW and Optional LAP}
	\label{alg:proposed-algorithm}
	\begin{algorithmic}[1]
		\STATE {\bfseries Input:} Source data $\{(X_i, f_{\mathcal{X}}(X_i))\}_{i=1}^{n_X}$, target data $\{(Y_j, f_{\mathcal{Y}}(Y_j))\}_{j=1}^{n_Y}$, fusion weight parameter $0 \leq \alpha \leq 1$, iteration count $T$, initial coupling $\pi^{(0)}$
		
		\STATE Construct matrices:
		\STATE $[C_f]_{ij} \gets \|f_{\mathcal{X}}(X_i)-f_{\mathcal{Y}}(Y_j)\|_2^2$
		\STATE $[D_{\hat{\mathbb{P}}_X}]_{ii'} \gets d_{\mathcal{X}}(X_i,X_{i'})/n_X$
		\STATE $[D_{\hat{\mathbb{P}}_Y}]_{jj'} \gets d_{\mathcal{Y}}(Y_j,Y_{j'})/n_Y$
		
		\FOR{$t = 0$ {\bfseries to} $T-1$}
		\STATE Calculate the gradient $\nabla \hat{\mathcal{L}}_{\alpha}(\pi^{(t)})$
		\STATE $\mu^{(t)} \gets \argmin_{\mu \in \Pi(\hat{\mathbb{P}}_X,\hat{\mathbb{P}}_Y)} \mathrm{Tr}\big(\nabla \hat{\mathcal{L}}_{\alpha}(\pi^{(t)})^\top \mu\big)$
		\STATE $\pi^{(t+1)} \gets (1-\gamma_t)\pi^{(t)} + \gamma_t \mu^{(t)}$ for some $0 < \gamma_t \leq 1$
		\ENDFOR
		
		\STATE $\hat\pi \gets \pi^{(T)}$
		\STATE {\bfseries Optional (LAP):} $\hat{P} \gets \argmax_{P \in \mathcal{P}} \mathrm{Tr}(P^\top \hat\pi)$
		\STATE {\bfseries Return:} $\hat\pi$ (and optionally $\hat{P}$)
	\end{algorithmic}
\end{algorithm}

The procedure begins by constructing the feature cost matrix $C_f$ and the normalized distance matrices $D_{\hat{\mathbb{P}}_X}$ and $D_{\hat{\mathbb{P}}_Y}$ based on the input data (lines 3--5). The main optimization loop (lines 6--10) iteratively refines the transport plan $\pi$. In each iteration $t$, we first compute the full gradient of the objective function, $\nabla \hat{\mathcal{L}}_{\alpha}(\pi^{(t)})$, at the current iterate $\pi^{(t)}$ (line 7). Next, we solve the linear minimization oracle (LMO) in line 8; this step corresponds to a standard linear optimal transport problem where the cost matrix is given by the computed gradient. This yields the update direction $\mu^{(t)}$, which is a vertex of the transport polytope. The new iterate $\pi^{(t+1)}$ is then obtained by taking a convex combination of the current plan $\pi^{(t)}$ and the direction $\mu^{(t)}$ using a step size $\gamma_t$ (line 9). After $T$ iterations, the algorithm outputs the final soft coupling $\hat{\pi}$ (line 11). Finally, line 12 describes the optional post-processing step, which solves the linear assignment problem (LAP) to project the soft coupling $\hat{\pi}$ onto the set of deterministic assignments $\mathcal{P}$, yielding the hard assignment $\hat{P}$.

\subsection{Lazy Gradient FW Algorithm}
The standard FW algorithm typically requires recomputing the full gradient $\nabla \hat{\mathcal{L}}_{\alpha}(\pi^{(t)})$ at every iteration $t$:
\begin{align*}
	\nabla \hat{\mathcal{L}}_{\alpha}(\pi^{(t)}) = (1 - \alpha) C_f + \alpha n_Xn_Y \left(D_{\hat{\mathbb{P}}_X}^{\top}(D_{\hat{\mathbb{P}}_X}\pi^{(t)} - \pi^{(t)}D_{\hat{\mathbb{P}}_Y}) - (D_{\hat{\mathbb{P}}_X}\pi^{(t)} - \pi^{(t)}D_{\hat{\mathbb{P}}_Y})D_{\hat{\mathbb{P}}_Y}^{\top}\right) .
\end{align*}
For the proposed objective function, the structural term involves matrix multiplications (e.g., $D_{\hat{\mathbb{P}}_X}^\top (D_{\hat{\mathbb{P}}_X} \pi^{(t)} - \pi^{(t)} D_{\hat{\mathbb{P}}_Y})$), which can be computationally intensive to recalculate from scratch.

However, we observe that the proposed objective function $\hat{\mathcal{L}}_{\alpha}(\pi)$ is quadratic with respect to the transport plan $\pi$. Consequently, the gradient mapping $\pi \mapsto \nabla \hat{\mathcal{L}}_{\alpha}(\pi)$ is affine. Another key observation is that the FW algorithm updates the transport plan via a convex combination, i.e.,
\begin{align*}
	\pi^{(t+1)} = (1-\gamma_t)\pi^{(t)} + \gamma_t \mu^{(t)} , \quad \text{where} \quad \mu^{(t)} = \argmin_{\mu \in \Pi(\hat{\mathbb{P}}_X,\hat{\mathbb{P}}_Y)} \mathrm{Tr}\Big(\nabla \hat{\mathcal{L}}_{\alpha}(\pi^{(t)})^{\top}\mu\Big) .
\end{align*}
Thus, the gradient at the next iteration satisfies the linearity property:
\begin{align*}
	\underbrace{\nabla \hat{\mathcal{L}}_{\alpha}(\pi^{(t+1)})}_{= G^{(t+1)}} = \nabla \hat{\mathcal{L}}_{\alpha}\big((1-\gamma_t)\pi^{(t)} + \gamma_t \mu^{(t)}\big) = (1-\gamma_t)\underbrace{\nabla \hat{\mathcal{L}}_{\alpha}(\pi^{(t)})}_{= G^{(t)}} + \gamma_t \nabla \hat{\mathcal{L}}_{\alpha}(\mu^{(t)}) .
\end{align*}

\begin{algorithm}[t]
	\caption{Convex Distance Operator Transport Plan via FW and Optional LAP with Lazy Gradient Updates}
	\label{alg:foot-lazy}
	\begin{algorithmic}[1]
		\STATE {\bfseries Input:} Source data $\{(X_i,f_{\mathcal{X}}(X_i))\}_{i=1}^{n_X}$, target data $\{(Y_j,f_{\mathcal{Y}}(Y_j))\}_{j=1}^{n_Y}$, fusion weight parameter $0 \leq \alpha \leq 1$, iteration count $T$, initial coupling $\pi^{(0)}$
		
		\STATE Construct matrices:
		\STATE $[C_f]_{ij} \gets \| f_{\mathcal{X}}(X_i) - f_{\mathcal{Y}}(Y_j) \|_2^2$
		\STATE $[D_{\hat{\mathbb{P}}_X}]_{ii'} \gets d_{\mathcal{X}}(X_i,X_{i'})/n_X$
		\STATE $[D_{\hat{\mathbb{P}}_Y}]_{jj'} \gets d_{\mathcal{Y}}(Y_j,Y_{j'})/n_Y$
		
		\STATE Calculate the initial gradient $\nabla \hat{\mathcal{L}}_{\alpha}(\pi^{(0)})$
		\STATE $G^{(0)} \gets \nabla \hat{\mathcal{L}}_{\alpha}(\pi^{(0)})$
		
		\FOR{$t = 0$ {\bfseries to} $T-1$}
		\STATE $\mu^{(t)} \gets \argmin_{\mu \in \Pi(\hat{\mathbb{P}}_X,\hat{\mathbb{P}}_Y)} \mathrm{Tr}\big((G^{(t)})^{\top}\mu\big)$
		
		\STATE $\pi^{(t+1)} \gets (1-\gamma_t)\pi^{(t)} + \gamma_t \mu^{(t)}$ for some $0 < \gamma_t \leq 1$
		
		\IF{$t \leq T-2$}
		\STATE Calculate the directional component (lazy term) $\nabla \hat{\mathcal{L}}_{\alpha}(\mu^{(t)})$
		\STATE $G^{(t+1)} \gets (1-\gamma_t)G^{(t)} + \gamma_t \nabla \hat{\mathcal{L}}_{\alpha}(\mu^{(t)})$ 
		\ENDIF
		\ENDFOR
		
		\STATE $\hat{\pi} \gets \pi^{(T)}$
		\STATE {\bfseries Optional (LAP):} $\hat{P} \gets \argmax_{P \in \mathcal{P}} \mathrm{Tr}(P^\top \hat\pi)$
		\STATE {\bfseries Return:} $\hat{\pi}$ (and optionally $\hat{P}$)
	\end{algorithmic}
\end{algorithm}

Algorithm~\ref{alg:foot-lazy} exploits this property by maintaining the gradient matrix $G^{(t)} \coloneqq \nabla \hat{\mathcal{L}}_{\alpha}(\pi^{(t)})$ and performing a lazy update using the gradient at the LMO solution $\nabla \hat{\mathcal{L}}_{\alpha}(\mu^{(t)})$. This approach allows for efficient in-place updates of the gradient matrix without redundant matrix operations involving the dense coupling $\pi^{(t+1)}$.

\subsection{Computational Complexity}
We compare the computational complexity of the two algorithms. Assuming each distance and feature evaluation is $\mathcal{O}(1)$, both algorithms share the same initialization cost of $\mathcal{O}(n_X^2 + n_Y^2 + n_X n_Y)$ to construct $C_f$, $D_{\hat{\mathbb{P}}_X}$, and $D_{\hat{\mathbb{P}}_Y}$. By defining $n_{\max} \coloneqq \max\{n_X,n_Y\}$, the initialization cost scales as $\mathcal{O}(n_{\max}^2)$.

Both Algorithm~\ref{alg:proposed-algorithm} (line 8) and Algorithm~\ref{alg:foot-lazy} (line 9) require solving the same linear minimization problem over the transport polytope at each iteration. This subproblem is an optimal transport problem and its complexity depends on the chosen solver. For example, using an exact network-simplex solver (e.g., \texttt{emd} from \citep{flamary2021pot}) incurs a worst-case complexity $\mathcal{O}(n_{\max}^3)$. Since this step is identical for both methods, it dictates the worst-case bound for the entire procedure $\mathcal{O}(Tn_{\max}^3)$.

The advantage of Algorithm~\ref{alg:foot-lazy} becomes evident in the gradient computation step. 
\begin{itemize}
	\item \textbf{Standard FW (Algorithm~\ref{alg:proposed-algorithm}):} Computing the gradient in line 7 requires dense matrix multiplications involving the current coupling $\pi^{(t)}$ (e.g., $D_{\hat{\mathbb{P}}_X} \pi^{(t)}$). Since $\pi^{(t)}$ is generally dense, these operations incur a complexity of $\mathcal{O}(n_{\max}^3)$.
	\item \textbf{Lazy FW (Algorithm~\ref{alg:foot-lazy}):} Instead of recomputing the gradient from the dense $\pi^{(t)}$, the algorithm computes lazy terms involving the LMO solution $\mu^{(t)}$ in line 12 (e.g., $D_{\hat{\mathbb{P}}_X} \mu^{(t)}$). Since $\mu^{(t)}$ is a vertex of the transport polytope, it is extremely sparse with at most $n_X + n_Y - 1$ non-zero entries. Consequently, these matrix products involve only sparse operations, reducing the cost to $\mathcal{O}(n_X(n_X + n_Y) + n_Y(n_X + n_Y)) = \mathcal{O}((n_X + n_Y)^2)$, which scales as $\mathcal{O}(n_{\max}^2)$. The gradient matrix $G^{(t)}$ is then updated via a simple linear combination (line 13), which is also $\mathcal{O}(n_{\max}^2)$.
\end{itemize}
In summary, while the solver complexity makes the asymptotic bound equal, the lazy algorithm reduces the gradient update cost from cubic $O(n_{\max}^3)$ to quadratic $O(n_{\max}^2)$. In practice, where dense matrix multiplication has a large constant factor, this reduction leads to significant improvements in total runtime.

\subsection{Empirical Convergence and Wall-Clock Runtime}\label{App:convergence-runtime}
We complement the optimization analysis with two empirical studies. The first compares the lazy-gradient Frank--Wolfe (FW) implementation with the standard FW implementation on the CDOT objective, and the second reports wall-clock runtimes of CDOT against FGW and EFGW under a common benchmark setting. Both studies use three families of attributed compact mm spaces: quadrant-labeled point clouds, anisotropic clustered point clouds, and attributed stochastic block model (SBM) graphs. The reported numbers below correspond to the median of three runs at $N=400$; consistent behavior is observed at $N=200$.

\begin{table}[H]
	\centering
	\setlength{\tabcolsep}{5.2pt}
	\caption{Lazy versus standard FW on the CDOT objective. \textbf{Lazy (s)} and \textbf{Standard (s)} are total wall-clock runtimes; \textbf{Max obj.\ diff.} and \textbf{Max gap diff.} denote the maximum absolute differences in objective value and FW duality gap over the run.}
	\label{tab:lazy-vs-standard-fw}
	\renewcommand{\arraystretch}{1.1}
	\footnotesize
	\begin{tabular}{lccccc}
		\toprule
		\textbf{Space} & $N$ & \textbf{Lazy (s)} & \textbf{Standard (s)} & \textbf{Max obj.\ diff.} & \textbf{Max gap diff.} \\
		\midrule
		Quadrant point cloud    & 400 & 2.692 & 2.699 & $1.84 \times 10^{-7}$ & $2.47 \times 10^{-6}$ \\
		Anisotropic point cloud & 400 & 2.764 & 2.865 & $1.33 \times 10^{-7}$ & $8.36 \times 10^{-7}$ \\
		SBM graph metric        & 400 & 2.890 & 2.954 & $7.80 \times 10^{-8}$ & $1.51 \times 10^{-6}$ \\
		\bottomrule
	\end{tabular}
\end{table}

Table~\ref{tab:lazy-vs-standard-fw} reports the convergence comparison. In all three families, the lazy and standard FW variants produce nearly indistinguishable trajectories: the maximum discrepancy is below $1.9 \times 10^{-7}$ in objective value and below $2.5 \times 10^{-6}$ in duality gap. This shows that the lazy update does not alter the numerical optimization path of FW, so it preserves the convergence behavior of the standard method while saving per-iteration gradient cost.

\begin{table}[H]
	\centering
	\setlength{\tabcolsep}{5.0pt}
	\caption{Wall-clock runtime of CDOT, FGW, and EFGW on the three benchmark families ($N=400$), reporting median over three runs with outer-iteration counts in parentheses. All methods use a cap of $200$ outer iterations. CDOT uses \texttt{tol}$=10^{-7}$; FGW and EFGW use the POT defaults \texttt{tol}$=10^{-9}$.}
	\label{tab:wallclock-runtime}
	\renewcommand{\arraystretch}{1.1}
	\footnotesize
	\begin{tabular}{lccccccc}
		\toprule
		\textbf{Family} & $N$ & \textbf{CDOT} & \textbf{FGW} & \textbf{EFGW} & \textbf{CDOT/iter (s)} & \textbf{FGW/iter (s)} & \textbf{EFGW/iter (s)} \\
		\midrule
		Quadrant point cloud    & 400 & 2.768\,s (200) & 0.065\,s (5)  & 0.047\,s (11) & $1.38 \times 10^{-2}$ & $1.30 \times 10^{-2}$ & $4.27 \times 10^{-3}$ \\
		Anisotropic point cloud & 400 & 2.697\,s (200) & 0.061\,s (5)  & 0.048\,s (11) & $1.35 \times 10^{-2}$ & $1.21 \times 10^{-2}$ & $4.39 \times 10^{-3}$ \\
		SBM graph metric        & 400 & 3.063\,s (200) & 0.220\,s (15) & 0.040\,s (11) & $1.53 \times 10^{-2}$ & $1.47 \times 10^{-2}$ & $3.65 \times 10^{-3}$ \\
		\bottomrule
	\end{tabular}
\end{table}

Table~\ref{tab:wallclock-runtime} summarizes the wall-clock comparison. Under the present benchmark setting, CDOT requires more total wall-clock time than FGW and EFGW, while the per-iteration cost of CDOT is on the order of $10^{-2}$\,s. The runtime gap is therefore driven primarily by the stopping rule and the resulting iteration count rather than by a substantially larger per-iteration computational burden. A looser practical stopping rule could reduce the number of CDOT iterations in these benchmark families.

\section{Proof Roadmap}\label{App:proof-roadmap}
This appendix collects the technical machinery that supports our four main theoretical results. Before diving into the lemmas, Figure~\ref{fig:proof-roadmap} provides a bird's-eye view of how the pieces fit together. The disintegration theorem (Lemma~\ref{lem:disintegration}) at the top underlies the conditional expectation operator $T_\pi$ and every coupling-gluing argument used throughout the appendix. The four panels below collect the technical ingredients feeding into each of Theorems~\ref{thm:wellposed-convex},~\ref{thm:foot-metric},~\ref{thm:dispersion-gap}, and~\ref{thm:consistency}, with the supporting lemmas and assumptions cited inline.

\begin{figure*}[t]
	\centering
	\definecolor{cFound}{HTML}{B86A0E}
	\definecolor{cFoundFill}{HTML}{FCF1DC}
	\definecolor{cThm}{HTML}{1B3F7D}
	\definecolor{cThmFill}{HTML}{F0F5FE}
	\definecolor{cTool}{HTML}{C8C8D0}
	\definecolor{cToolFill}{HTML}{FBFBFD}
	\definecolor{cSub}{HTML}{6A6A75}
	\definecolor{cDisc}{HTML}{1F6B6B}
	\definecolor{cArrow}{HTML}{7A7A85}
	\begin{tikzpicture}[
		font=\sffamily,
		>={Latex[length=2.0mm,width=1.5mm]},
		line cap=round,
		foundstyle/.style={
			rectangle, rounded corners=5pt,
			draw=cFound, line width=0.8pt,
			fill=cFoundFill,
			inner sep=8pt, align=center,
			text width=16.3cm,
		},
		toolstyle/.style={
			rectangle, rounded corners=4pt,
			draw=cTool, line width=0.5pt,
			fill=cToolFill,
			inner sep=7pt, align=left,
			text width=3.65cm,
			font=\sffamily\footnotesize,
		},
		thmstyle/.style={
			rectangle, rounded corners=4pt,
			draw=cThm, line width=1.0pt,
			fill=cThmFill,
			inner sep=6pt, align=center,
			text width=3.65cm,
			font=\sffamily,
		},
		discstyle/.style={
			text width=3.75cm, align=left,
			font=\sffamily\scriptsize\itshape,
			text=cDisc,
		},
		edgegray/.style={->, draw=cArrow, line width=0.55pt},
		edgefound/.style={->, draw=cFound, line width=0.65pt},
		]

		\node[foundstyle] (F) at (0,0) {%
			\textbf{Disintegration theorem}\\[2pt]
			\footnotesize $\pi(dx,dy)=\pi(dy|x)\,\mathbb{P}_X(dx)$ on Polish spaces defines $T_\pi$, every coupling-gluing argument, and the lifting $\Phi_n$.
		};

		\node[toolstyle] (B1) at (-6.4,-3.6) {%
			\textbf{Existence}\\[2pt]
			$\bullet$ Compactness of $\Pi(\mathbb{P}_X,\mathbb{P}_Y)$\\
			$\bullet$ Lower-semicontinuity of $\mathcal{F}(\pi) + \mathcal{R}(\pi)$\\[6pt]
			\textbf{Convexity}\\[2pt]
			$\bullet$ $\pi\mapsto T_\pi$ affine\\
			$\bullet$ $\|\!\cdot\!\|_{\mathrm{HS}}^2$ convex
		};
		\node[toolstyle] (B2) at (-2.13,-3.6) {%
			\textbf{Non-negativity}\\[2pt]
			$\bullet$ $d_{\mathrm{CT}}^{(\alpha)}(\mathfrak{X},\mathfrak{Y}) \geq 0$\\[3pt]
			\textbf{Identity}\\[2pt]
			$\bullet$ $d_{\mathrm{CT}}^{(\alpha)}(\mathfrak{X},\mathfrak{X}) = 0$\\[3pt]
			\textbf{Symmetry}\\[2pt]
			$\bullet$ $d_{\mathrm{CT}}^{(\alpha)}(\mathfrak{X},\mathfrak{Y}) = d_{\mathrm{CT}}^{(\alpha)}(\mathfrak{Y},\mathfrak{X})$\\[3pt]
			\textbf{Triangle inequality}\\[2pt]
			$\bullet$ $d_{\mathrm{CT}}^{(\alpha)}(\mathfrak{X},\mathfrak{Z})$ \\ $\leq d_{\mathrm{CT}}^{(\alpha)}(\mathfrak{X},\mathfrak{Y}) + d_{\mathrm{CT}}^{(\alpha)}(\mathfrak{Y},\mathfrak{Z})$
		};
		\node[toolstyle] (B3) at (2.13,-3.6) {%
			\textbf{Kernel representation}\\[2pt]
			$\mathcal{R}(\pi)=\iint\Gamma_{\pi}^2\,d\mathbb{P}_X d\mathbb{P}_Y$\\[6pt]
			\textbf{Variance identity}\\[2pt]
			$\mathbb{E}[Z^2]=\mathbb{E}[Z]^2+\mathrm{Var}[Z]$\\[6pt]
			$\Rightarrow\ \mathcal{R}_{\mathrm{GW},2}=\mathcal{R}+\mathcal{V}$
		};
		\node[toolstyle] (B4) at (6.4,-3.6) {%
			\textbf{Optimization error}\\[2pt]
			$\bullet$ Convexity\\[2pt]
			$\bullet$ FW rate $\mathcal{O}(1/T)$\\[6pt]
			\textbf{Statistical error}\\[2pt]
			$\bullet$ $\mathcal{F}$: Kantorovich--Rubinstein duality\\[2pt]
			$\bullet$ $\mathcal{R}$: Laguerre lemma
		};

		\node[thmstyle] (Th1) at (-6.4,-7.2) {%
			\textbf{Theorem~\ref{thm:wellposed-convex}}\\[1pt]
			\footnotesize Existence \& Convexity
		};
		\node[thmstyle] (Th2) at (-2.13,-7.2) {%
			\textbf{Theorem~\ref{thm:foot-metric}}\\[1pt]
			\footnotesize Pseudometric
		};
		\node[thmstyle] (Th3) at (2.13,-7.2) {%
			\textbf{Theorem~\ref{thm:dispersion-gap}}\\[1pt]
			\footnotesize Dispersion Gap
		};
		\node[thmstyle] (Th4) at (6.4,-7.2) {%
			\textbf{Theorem~\ref{thm:consistency}}\\[1pt]
			\footnotesize Error Bound
		};

		\foreach \tgt in {B1,B2,B3,B4} {
			\draw[edgefound] (F.south -| \tgt.north) -- (\tgt.north);
		}
		\foreach \src/\tgt in {B1/Th1,B2/Th2,B3/Th3,B4/Th4} {
			\draw[edgegray] (\src.south) -- (\tgt.north);
		}

	\end{tikzpicture}
	\caption{Proof roadmap for the four main theoretical results of CDOT. The orange banner names the foundational tool (the disintegration theorem), and the four panels list the technical ingredients feeding into each of the navy theorem boxes.}
	\label{fig:proof-roadmap}
\end{figure*}

The four theorems group naturally into two layers. Theorems~\ref{thm:wellposed-convex},~\ref{thm:foot-metric}, and~\ref{thm:dispersion-gap} are population-level structural results: well-posedness and convexity of the CDOT problem, the pseudometric property of $d_{\mathrm{CT}}^{(\alpha)}$, and the precise gap to the Gromov--Wasserstein objective. Each of them rests directly on the disintegration theorem through the conditional expectation operator $T_\pi$, and uses either lower semicontinuity and convexity of $\|\cdot\|_{\mathrm{HS}}^2$ (Theorem~\ref{thm:wellposed-convex}), the contraction and composition properties of $T_\pi$ that underpin the triangle inequality (Theorem~\ref{thm:foot-metric}), or the Hilbert--Schmidt kernel representation of $\mathcal{R}$ together with the elementary variance identity (Theorem~\ref{thm:dispersion-gap}). Theorem~\ref{thm:consistency} is the empirical-statistical result and splits cleanly into an optimization-error term, controlled by the Frank--Wolfe $\mathcal{O}(1/T)$ rate (which relies on the convexity guaranteed by Theorem~\ref{thm:wellposed-convex}), and a statistical-error term, whose feature component is handled by Kantorovich--Rubinstein duality (Lemma~\ref{lem:kantorovic-rubinstein}) under the Lipschitz Assumption~\ref{ass:lipschitz-feature}, and whose structural component is controlled by the Laguerre representation (Lemma~\ref{lem:laguerre}) under Assumptions~\ref{ass:non-atomic}--\ref{ass:no-tie} together with the stability Assumption~\ref{ass:stability}.

\section{Useful Lemmas}\label{App:useful-lemmas}
This section presents useful lemmas to prove our theoretical results. We begin by reviewing the disintegration theorem, which provides the theoretical basis for decomposing joint couplings into conditional probability measures (Markov kernels). Subsequently, we summarize fundamental results in optimal transport theory, covering the existence of optimal couplings, Kantorovich duality, and specific characterizations for semi-discrete transport problems and the Wasserstein-$1$ distance. Finally, we introduce essential concepts regarding the weak topology of measures and the weak operator topology (WOT), which are requisite for establishing the topological properties of our proposed framework.

\vspace{4mm}
\subsection{Disintegration Theorem}~\label{App:disintegration-theorem}\vspace{-3mm}
\begin{lemma}[Disintegration theorem]
	\label{lem:disintegration}
	Let $(\mathcal{X},\mu)$ and $(\mathcal{Y},\nu)$ be Polish probability spaces, where $\mu$ and $\nu$ are Borel probability measures. Then, for each joint coupling $\pi \in \Pi(\mu,\nu)$, there exists a $\mu$-a.e. uniquely determined family of probability measures $\{\pi_x\}_{x \in \mathcal{X}}$ on $\mathcal{Y}$ such that
	\begin{itemize}
		\item for every Borel-measurable set $B \subset \mathcal{Y}$, the function $x \mapsto \pi_x(B)$ is Borel-measurable;
		\item for every Borel-measurable function $f : \mathcal{X} \times \mathcal{Y} \to [0,\infty]$,
		\begin{align*}
			\int_{\mathcal{X} \times \mathcal{Y}} f(x,y) \pi(dx,dy) = \int_{\mathcal{X}} \int_{\mathcal{Y}} f(x,y) \pi_x(dy) \mu(dx) .
		\end{align*}
	\end{itemize}
\end{lemma}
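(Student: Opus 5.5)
We will prove the lemma via the classical construction of a regular conditional probability. We first reduce to the case where $\mathcal{Y}$ is compact metrizable. Every Polish space embeds homeomorphically onto a $G_\delta$ (hence Borel) subset of the Hilbert cube $K=[0,1]^{\mathbb{N}}$. We push $\pi$ forward to $\mathcal{X}\times K$ and build a kernel $\tilde\pi_x$ on $K$. Afterwards we restrict it back: since
\[
\int_{\mathcal{X}}\tilde\pi_x(K\setminus\mathcal{Y})\,\mu(dx)=\pi(\mathcal{X}\times(K\setminus \mathcal{Y}))=0,
\]
we have $\tilde\pi_x(\mathcal{Y})=1$ for $\mu$-a.e.\ $x$. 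On the remaining $\mu$-null set we redefine $\pi_x$ to be a fixed probability measure on $\mathcal{Y}$. In the setting of this paper $\mathcal{X}$ and $\mathcal{Y}$ are compact, so this step is only needed for the stated generality.

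With $\mathcal{Y}$ compact metric, $C(\mathcal{Y})$ is separable. We fix a countable set $\mathcal{D}\subset C(\mathcal{Y})$ that is dense, contains $1$, and is a vector space over $\mathbb{Q}$. For each $g\in\mathcal{D}$, the signed measure $A\mapsto\int_{A\times\mathcal{Y}} g(y)\,\pi(dx,dy)$ is bounded in absolute value by $\|g\|_\infty\,\mu(A)$, so it is absolutely continuous with respect to $\mu$. Radon--Nikodym then gives a density $\Lambda_x(g)$. Since $\mathcal{D}$ is countable, there is a single $\mu$-null set $N$ off of which the following hold simultaneously for all $g,h\in\mathcal{D}$ and $q\in\mathbb{Q}$: $\Lambda_x$ is $\mathbb{Q}$-linear, $\Lambda_x(1)=1$, $\Lambda_x(g)\ge 0$ whenever $g\ge 0$, and $|\Lambda_x(g)|\le\|g\|_\infty$. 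For $x\notin N$, $\Lambda_x$ therefore extends uniquely by density to a positive, normalized linear functional on $C(\mathcal{Y})$. The Riesz representation theorem then gives a probability measure $\pi_x$ with $\Lambda_x(g)=\int g\,d\pi_x$. For $x\in N$ we set $\pi_x$ to be a fixed probability measure.

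Next we verify measurability and the integration identity. For $g\in C(\mathcal{Y})$, the map $x\mapsto\int g\,d\pi_x$ is a pointwise limit of measurable functions $\Lambda_x(g_k)$ with $g_k\in\mathcal{D}$, so it is measurable. For each $g\in C(\mathcal{Y})$ it also satisfies
\[
\int_A\int g\,d\pi_x\,\mu(dx)=\int_{A\times\mathcal{Y}} g\,d\pi
\]
for all Borel $A$. Indicators of open sets are increasing limits of continuous functions, so measurability of $x\mapsto\pi_x(B)$ and the identity extend from open $B$ to all Borel $B$ by a Dynkin $\pi$--$\lambda$ argument. This yields
\[
\pi(A\times B)=\int_A\pi_x(B)\,\mu(dx)
\]
on measurable rectangles. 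A second $\pi$--$\lambda$ argument extends it to indicators of all Borel sets in $\mathcal{X}\times\mathcal{Y}$. Linearity and monotone convergence then extend it to all nonnegative Borel $f$.

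Uniqueness is the final step. If $\{\pi_x\}$ and $\{\pi'_x\}$ both satisfy the conclusion, then for each $B$ in a countable $\pi$-system generating $\mathcal{B}(\mathcal{Y})$ (for example, finite intersections of balls with rational radii centered at points of a countable dense set) we have $\pi_x(B)=\pi'_x(B)$ $\mu$-a.e. Taking the union of these countably many null sets, the two kernels agree on the whole generating system off a single null set, and hence agree everywhere off that set by the $\pi$--$\lambda$ theorem.

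The main obstacle is the bookkeeping of the exceptional null set. Radon--Nikodym densities are only defined up to null sets, so a kernel genuinely defined for every $x$ exists only because all the required linearity and positivity properties are imposed on a countable family $\mathcal{D}$ before extending by density. Separability of $C(\mathcal{Y})$, which is where the Polish and compact hypotheses enter, is exactly what makes this possible.
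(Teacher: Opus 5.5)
The paper gives no proof of this lemma. It states it as the classical disintegration (regular conditional distribution) theorem and uses it as a black box throughout the appendix. Your proposal is a correct, self-contained proof of that classical result along the standard route: reduction to a compact metrizable $\mathcal{Y}$ via the Hilbert cube, countably many Radon--Nikodym densities on a $\mathbb{Q}$-linear dense subset of $C(\mathcal{Y})$ with a single exceptional null set, the Riesz representation theorem, two $\pi$--$\lambda$ extensions, and uniqueness via a countable generating $\pi$-system.

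Citing the classical theorem buys the paper brevity. Your argument buys transparency about where the hypotheses enter: only $\mathcal{Y}$ needs to be Polish. The space $\mathcal{X}$ enters only through the Radon--Nikodym theorem for $\mu$ and through the identity $\mathcal{B}(\mathcal{X}\times\mathcal{Y})=\mathcal{B}(\mathcal{X})\otimes\mathcal{B}(\mathcal{Y})$. You use that identity tacitly in the second $\pi$--$\lambda$ step, and it holds because both spaces are second countable.

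If you write the proof out, state two small points explicitly. First, the exceptional sets ($N$ and $\{x:\tilde\pi_x(\mathcal{Y})<1\}$) can be taken Borel, since they are defined by countably many conditions on measurable functions. Second, positivity of the extended functional on all of $C(\mathcal{Y})$ follows from $\Lambda_x(1)=1$ and $|\Lambda_x(g)|\le\|g\|_\infty$: for $0\le g\le 1$ one has $\|1-g\|_\infty\le 1$, hence $\Lambda_x(g)\ge 0$. As you note, in the paper's compact setting the embedding step is unnecessary.
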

The disintegration theorem rigorously establishes the existence of a \textit{regular conditional distribution} under the topological assumption that the underlying spaces are Polish. The family $\{\pi_x\}_{x \in \mathcal{X}}$ satisfies the definition of a \textit{Markov kernel} from $\mathcal{X}$ to $\mathcal{Y}$. We will interpret $\pi_x$ as the conditional distribution of $y$ given $x$, and adopt the notation $\pi(dy \mid x) := \pi_x(dy)$.

\subsection{Optimal Transport}\label{App:optimal-transport}
\subsubsection{Existence and Kantorovich Duality}
\begin{lemma}[Existence of an optimal coupling]
	\label{lem:existence-of-optimal-coupling-general-wasserstein}
	Let $(\mathcal{X},\mu)$ and $(\mathcal{Y},\nu)$ be two Polish probability spaces with Borel probability measures $\mu$ and $\nu$. Let $a: \mathcal{X} \to \mathbb{R} \cup \{-\infty\}$ and $b: \mathcal{Y} \to \mathbb{R} \cup \{-\infty\}$ be two upper semicontinuous functions such that $a \in L^1(\mathcal{X},\mu)$ and $b \in L^1(\mathcal{Y},\nu)$. Let $c : \mathcal{X} \times \mathcal{Y} \to \mathbb{R} \cup \{+\infty\}$ be a lower semicontinuous cost function, such that $c(x,y) \geq a(x) + b(y)$ for all $x,y$. Then there exists an optimal coupling $\pi^* \in \Pi(\mu,\nu)$ such that
	\begin{align*}
		\int_{\mathcal{X} \times \mathcal{Y}} c(x,y) \pi^*(dx,dy) = \inf_{\pi \in \Pi(\mu,\nu)} \left(\int_{\mathcal{X} \times \mathcal{Y}} c(x,y) \pi(dx,dy)\right) \eqqcolon C(\mu,\nu) .
	\end{align*}
\end{lemma}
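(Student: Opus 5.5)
This is the classical existence theorem for optimal couplings (Villani, Theorem~4.1). I would prove it by the direct method of the calculus of variations on $\Pi(\mu,\nu)$ equipped with the weak topology, i.e.\ convergence against $C_b(\mathcal{X}\times\mathcal{Y})$.

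\textbf{Step 1: tightness and compactness.} Since $\mathcal{X}$ and $\mathcal{Y}$ are Polish, $\mu$ and $\nu$ are tight by Ulam's theorem. Fix $\varepsilon>0$ and choose compact sets $K_1\subset\mathcal{X}$ and $K_2\subset\mathcal{Y}$ with $\mu(K_1^c),\nu(K_2^c)<\varepsilon$. Then every $\pi\in\Pi(\mu,\nu)$ satisfies
\[
\pi\bigl((K_1\times K_2)^c\bigr)\le \mu(K_1^c)+\nu(K_2^c)<2\varepsilon ,
\]
so $\Pi(\mu,\nu)$ is uniformly tight. Prokhorov's theorem then makes it relatively compact. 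It is also weakly closed, because the marginal maps are weakly continuous (test against functions $\varphi(x)$ and $\psi(y)$ in $C_b$). Hence $\Pi(\mu,\nu)$ is weakly compact.

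\textbf{Step 2: the functional is well defined and lower semicontinuous.} Set $\tilde c(x,y)=c(x,y)-a(x)-b(y)\ge0$. This function is lower semicontinuous, because $a$ and $b$ are upper semicontinuous, so $-a$ and $-b$ are lower semicontinuous. Since $a\in L^1(\mu)$ and $b\in L^1(\nu)$, we have for every coupling
\[
\int c\,d\pi=\int\tilde c\,d\pi+\int a\,d\mu+\int b\,d\nu ,
\]
where the last two terms are finite constants independent of $\pi$. This identity makes $\int c\,d\pi\in\mathbb{R}\cup\{+\infty\}$ unambiguous. It therefore suffices to show that $\pi\mapsto\int\tilde c\,d\pi$ is weakly lower semicontinuous.

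A nonnegative lower semicontinuous function on a metric space is the increasing pointwise limit of bounded continuous (even Lipschitz) functions; for example $\tilde c_k(z)=\inf_w\{\min(\tilde c(w),k)+k\,d(z,w)\}$. If $\pi_n\rightharpoonup\pi$, then for each fixed $k$,
\[
\liminf_n\int\tilde c\,d\pi_n\ \ge\ \lim_n\int\tilde c_k\,d\pi_n=\int\tilde c_k\,d\pi .
\]
Letting $k\to\infty$ and applying monotone convergence gives $\liminf_n\int\tilde c\,d\pi_n\ge\int\tilde c\,d\pi$.

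\textbf{Step 3: conclusion.} If $C(\mu,\nu)=+\infty$, any coupling is optimal; the product coupling $\mu\otimes\nu$ shows $\Pi(\mu,\nu)\neq\emptyset$. Otherwise, take a minimizing sequence $\pi_n$. By Step~1 some subsequence converges weakly to $\pi^*\in\Pi(\mu,\nu)$. By Step~2, $\int c\,d\pi^*\le\liminf_n\int c\,d\pi_n=C(\mu,\nu)$, so $\pi^*$ is optimal. (Alternatively, one can cite the generalized Weierstrass theorem, Lemma~\ref{lem:extreme-value-theorem}.)

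\textbf{Main obstacle.} The only delicate point is Step~2: making sense of $\int c\,d\pi$ when $c$ may be unbounded and may take the value $+\infty$, and obtaining lower semicontinuity without any continuity assumption on $c$. The lower bound $c\ge a+b$ is used exactly here, to reduce to a nonnegative lower semicontinuous cost. The Lipschitz inf-convolution approximation then resolves the rest.
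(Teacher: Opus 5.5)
Your proof is correct and follows the same route as the paper, which simply cites Theorem~4.1 of \citet{villani2008optimal}: tightness plus Prokhorov give weak compactness of $\Pi(\mu,\nu)$, the nonnegative lower semicontinuous cost $c-a-b$ is approximated from below by an increasing sequence of bounded Lipschitz functions (Villani's Lemma~4.3), and the direct method finishes. One point is left implicit: the identity $c=\tilde c+a+b$ may be undefined on $(\{a=-\infty\}\times\mathcal{Y})\cup(\mathcal{X}\times\{b=-\infty\})$, but this set is $\pi$-null for every $\pi\in\Pi(\mu,\nu)$ because $a\in L^1(\mu)$ and $b\in L^1(\nu)$, so the argument is unaffected.
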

\begin{proof}
	The details can be found in Theorem 4.1 of \citet{villani2008optimal}.
\end{proof}

\begin{definition}[$c$-convexity in Definition 5.2 of \citet{villani2008optimal}]
	Let $\mathcal{X},\mathcal{Y}$ be sets, and $c : \mathcal{X} \times \mathcal{Y} \to (-\infty,+\infty]$. A function $\psi: \mathcal{X} \to \mathbb{R} \cup \{+\infty\}$ is said to be \emph{$c$-convex} if it is not identically $+\infty$ and there exists $\zeta: \mathcal{Y} \to \mathbb{R} \cup \{\pm \infty\}$ such that
	\begin{align*}
		\forall x \in \mathcal{X} , \quad \psi(x) = \sup_{y \in \mathcal{Y}}\left(\zeta(y) - c(x,y)\right) .
	\end{align*} 
	Then, its \emph{$c$-transform} is the function $\psi^c$ defined by
	\begin{align*}
		\forall y \in \mathcal{Y} , \quad \psi^c(y) = \inf_{x \in \mathcal{X}} \left(\psi(x) + c(x,y)\right) .
	\end{align*}
\end{definition}

\begin{lemma}[Kantorovich duality]
	\label{lem:kantorovich-dual}
	Let $(\mathcal{X},\mu)$ and $(\mathcal{Y},\nu)$ be two Polish probability spaces with Borel probability measures $\mu$ and $\nu$. Let $c: \mathcal{X} \times \mathcal{Y} \to \mathbb{R}$ be a real-valued lower semicontinuous cost function, such that
	\begin{align*}
		\forall (x,y) \in \mathcal{X} \times \mathcal{Y}, \quad c(x,y) \geq a(x) + b(y) 
	\end{align*}
	for some real-valued upper semicontinuous functions $a \in L^1(\mathcal{X},\mu)$ and $b \in L^1(\mathcal{Y},\nu)$. Then, the following duality holds:
	\begin{align*}
		C(\mu,\nu) &= \sup_{\psi \in L^1(\mu)} \left\{\int_{\mathcal{Y}} \psi^c(y) \, \nu(dy) - \int_{\mathcal{X}} \psi(x) \, \mu(dx) \right\} .
	\end{align*}
	If it is further assumed that the optimal cost $C(\mu,\nu)$ is finite and the cost $c$ has a pointwise upper bound, i.e.,
	\begin{align*}
		c(x,y) \leq c_{\mathcal{X}}(x) + c_{\mathcal{Y}}(y), \quad (c_{\mathcal{X}},c_{\mathcal{Y}}) \in L^1(\mu) \times L^1(\nu) ,
	\end{align*}
	the dual problem has solutions. Moreover, $\pi \in \Pi(\mu, \nu)$ is optimal if and only if there is a $c$-convex $\psi$ such that
	\begin{align*}
		\psi^c(y) - \psi(x) = c(x,y) , \quad \text{for $\pi$-almost every $(x,y)$} .
	\end{align*}
\end{lemma}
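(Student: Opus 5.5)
The plan follows the classical route of Theorem~5.10 in \citet{villani2008optimal}, adapted to the sign convention used here (the dual value is $\int \psi^c\,d\nu - \int \psi\,d\mu$).

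\textbf{Weak duality.} Fix $\pi \in \Pi(\mu,\nu)$ and $\psi \in L^1(\mu)$. By definition of the $c$-transform, $\psi^c(y) - \psi(x) \le c(x,y)$ for all $(x,y)$. Integrating against $\pi$ and using the marginal constraints gives
\[
\int \psi^c\,d\nu - \int \psi\,d\mu \le \int c\,d\pi .
\]
The lower bound $c \ge a+b$ with $a,b$ integrable makes every integral well defined, possibly $+\infty$ on the right. This yields $\sup \le C(\mu,\nu)$.

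\textbf{Strong duality.} I would first prove the equality when $c$ is bounded and uniformly continuous. Two routes are available:
\begin{itemize}
\item Compactly supported $\mu,\nu$ and a minimax or Fenchel--Rockafellar argument on $C_b$ versus measures.
\item The constructive route: show that an optimal $\pi^*$ (which exists by Lemma~\ref{lem:existence-of-optimal-coupling-general-wasserstein}) has $c$-cyclically monotone support, then build a $c$-convex potential via the Rockafellar-type formula
\[
\psi(x) = \sup_{m}\,\sup_{(x_i,y_i) \in \mathrm{supp}\,\pi^*}\Bigl\{ c(x_0,y_0) - c(x_1,y_0) + \cdots - c(x,y_m) \Bigr\}.
\]
Then verify $\psi^c(y) - \psi(x) = c(x,y)$ on $\mathrm{supp}\,\pi^*$. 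Integrating this identity against $\pi^*$ gives equality of the primal and dual values.
\end{itemize}

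I expect the integrability of the constructed $\psi$ and $\psi^c$ to be the \textbf{main obstacle}. The potential is only defined as a supremum, and it must be shown to be finite $\mu$-a.e.\ and to lie in $L^1$. Here I would use the two-sided bounds:
\begin{itemize}
\item the lower bound $c \ge a+b$ controls $\psi$ from one side;
\item the hypothesis $c \le c_{\mathcal{X}} + c_{\mathcal{Y}}$ with finite $C(\mu,\nu)$ controls it from the other, following Villani's truncation argument.
\end{itemize}

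\textbf{General lower semicontinuous costs.} Write $c$ as an increasing limit of bounded continuous costs $c_k \uparrow c$, for example by Lipschitz inf-convolutions truncated below by $a+b$. Then:
\begin{itemize}
\item By tightness of $\Pi(\mu,\nu)$ and Prokhorov, optimal plans $\pi_k$ for $c_k$ have a weak limit point $\pi$.
\item Monotone convergence together with lower semicontinuity gives $\int c\,d\pi \le \liminf_k C_k(\mu,\nu)$.
\item Each $C_k$ equals a dual value for $c_k$, which is at most the dual value for $c$ because $c_k \le c$ implies $\psi^{c_k} \le \psi^c$.
\end{itemize}
Combined with weak duality, this forces equality throughout.

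\textbf{Characterization of optimality.} Once a dual maximizer $\psi$ exists, $\pi$ is optimal if and only if
\[
\int \bigl(c - \psi^c(y) + \psi(x)\bigr)\,d\pi = 0 .
\]
The integrand is nonnegative, so this holds exactly when $\psi^c(y) - \psi(x) = c(x,y)$ for $\pi$-almost every $(x,y)$. This gives both directions of the final claim.
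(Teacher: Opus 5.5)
Your outline follows the route of Villani's Theorem~5.10, which is all the paper offers (it only cites that theorem). Your weak duality, the bounded-continuous case, and the monotone approximation do give the sup-form duality. One small repair: $a+b$ is only upper semicontinuous, so do not truncate $c_k$ below by it; instead pass first to $c-a-b\ge 0$ and $\psi+a$, which shifts both sides by $\int a\,d\mu+\int b\,d\nu$.

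The genuine gap is dual attainment for a merely lower semicontinuous $c$. Your constructive route uses $\mathrm{supp}\,\pi^*$ as the $c$-cyclically monotone set. The perturbation argument behind that claim needs continuity of $c$, and for lsc costs the claim is false. Take $\mu=\nu$ Lebesgue on $[0,1]$ and $c(x,y)=|x-y|^2$ except $c(0.2,0.8)=-1$. This cost is lsc and bounded, and the identity coupling is still optimal. Yet its support (the diagonal) is not $c$-cyclically monotone, and your Rockafellar formula over it gives $\psi\equiv+\infty$: each loop $(0.8,0.8)\to(0.2,0.2)\to(0.8,0.8)$ gains $0.64$.

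Your approximation step passes only the supremum to the limit and supplies no compactness for maximizers. So nothing in the proposal produces a maximizer in the lsc case. Meanwhile your integrability ``obstacle'' sits in the bounded case, where it is vacuous ($|\psi|\le 2\sup|c|$).

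The missing idea is the following. Once sup-duality holds, take $\psi_k$ with dual values tending to $C(\mu,\nu)$ and set $h_k(x,y)\coloneqq c(x,y)-\psi_k^c(y)+\psi_k(x)\ge 0$. Then $\int h_k\,d\pi^*\to 0$, so along a subsequence $h_k\to 0$ on a set $\Gamma$ with $\pi^*(\Gamma)=1$. This $\Gamma$ is $c$-cyclically monotone, because $\sum_i c(x_i,y_{i+1})\ge\sum_i\bigl[\psi_k^c(y_i)-\psi_k(x_i)\bigr]=\sum_i\bigl[c(x_i,y_i)-h_k(x_i,y_i)\bigr]$. Run Rockafellar on $\Gamma$, not on the support.

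The ``if'' half of the characterization is also not covered. You argue only with the particular maximizer, but the statement allows an arbitrary $c$-convex $\psi$. Such a $\psi$ need not be integrable, so $\int(\psi^c(y)-\psi(x))\,d\tilde\pi$ cannot be split for a competitor $\tilde\pi$. This is exactly where your two-sided bounds belong.

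Pick $(x_0,y_0)$ with $\psi^c(y_0)-\psi(x_0)=c(x_0,y_0)$ and $c_{\mathcal{X}}(x_0),c_{\mathcal{Y}}(y_0)<\infty$. Set $K_1\coloneqq\psi^c(y_0)-c_{\mathcal{Y}}(y_0)$ and $K_2\coloneqq\psi(x_0)+c_{\mathcal{X}}(x_0)$. Then $\psi\ge K_1-c_{\mathcal{X}}$ and $\psi^c\le K_2+c_{\mathcal{Y}}$. Hence $u\coloneqq\psi^c-c_{\mathcal{Y}}-K_2\le 0$ and $v\coloneqq\psi+c_{\mathcal{X}}-K_1\ge 0$ satisfy $u(y)-v(x)=c(x,y)-c_{\mathcal{X}}(x)-c_{\mathcal{Y}}(y)+K_1-K_2$ $\pi$-a.e. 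The right side is $\pi$-integrable because $a+b\le c\le c_{\mathcal{X}}+c_{\mathcal{Y}}$. By the signs of $u$ and $v$, both $\int u\,d\nu$ and $\int v\,d\mu$ are finite.

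So $\psi\in L^1(\mu)$, $\psi^c\in L^1(\nu)$, and $\int c\,d\pi=\int\psi^c\,d\nu-\int\psi\,d\mu\le C(\mu,\nu)$ by weak duality, i.e.\ $\pi$ is optimal. Applied to the Rockafellar potential on $\Gamma$, the same computation yields the integrable $c$-convex maximizer that your ``only if'' argument presupposes.
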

\begin{proof}
	The details can be found in Theorem 5.10 of \citet{villani2008optimal}.
\end{proof}
The Kantorovich duality transforms the primal minimization problem over the space of probability measures into a dual maximization problem. The optimality condition $\psi^c(y) - \psi(x) = c(x,y)$ enables the characterization of the closed form of optimal transport plans in the semi-discrete problem.

\subsection{Semi-Discrete Optimal Transport}
\begin{lemma}[Laguerre lemma]
	\label{lem:laguerre}
	Let $(\mathcal{X},d_{\mathcal{X}})$ be a compact metric space, $\mathbb{P}_X$ be a Borel probability measure on $\mathcal{X}$ that satisfies Assumptions~\ref{ass:non-atomic} and \ref{ass:no-tie}, and $\hat{\mathbb{P}}_X$ be the empirical probability measure of $\mathbb{P}_X$ supported on $n$ fixed distinct data points, i.e.,
	\begin{align*}
		\hat{\mathbb{P}}_X = \frac{1}{n} \sum_{i=1}^{n} \delta_{X_i} .
	\end{align*}
	Let $\hat{\mathcal{X}} \coloneqq \{X_1,...,X_n\} \subset \mathcal{X}$. Then $(\hat{\mathcal{X}},\hat{\mathbb{P}}_X)$ is clearly a Polish probability space. Let $(\pi^*,\psi^*)$ be primal and dual optimal solutions for the semi-discrete problem:
	\begin{align*}
		\min_{\pi \in \Pi(\mathbb{P}_X,\hat{\mathbb{P}}_X)}\left(\int_{\mathcal{X} \times \hat{\mathcal{X}}} d_{\mathcal{X}}(x,\hat{x}) \pi(dx,d\hat{x})\right) = \max_{\psi \in L^1(\hat{\mathcal{X}},\hat{\mathbb{P}}_X)} \left( \int_{\mathcal{X}} \psi^{d_{\mathcal{X}}}(x) \mathbb{P}_X(dx) - \frac{1}{n}\sum_{i=1}^{n} \psi(X_i) \right) ,
	\end{align*}
	where the ${d_{\mathcal{X}}}$-transform is taken as $\psi^{d_{\mathcal{X}}}(x) = \min_{1 \le j \le n} (d_{\mathcal{X}}(x,X_j) + \psi(X_j))$. Note that all elements in $L^1(\hat{\mathcal{X}},\hat{\mathbb{P}}_X)$ can be identified with $n$-dimensional real-valued vectors.
	
	For each $i= 1,...,n$, define the Laguerre-like cell $V_i$ by
	\begin{align*}
		V_i \coloneqq \left\{x \in \mathcal{X} : d_{\mathcal{X}}(x,X_i) + \psi^*(X_i) \leq d_{\mathcal{X}}(x,X_j) + \psi^*(X_j) , \, \forall j\right\} \cup \{X_i\} .	
	\end{align*}
	Then $\{V_i\}_{i=1}^n$ forms a $\mathbb{P}_X$-a.e. partition of $\mathcal{X}$, satisfies $\mathbb{P}_X(V_i)=1/n$ for all $i$, and
	\begin{align*}
		\pi^* = \sum_{i=1}^{n} \mathbb{P}_X\lfloor_{V_i} \otimes \delta_{X_i} , \quad \mathbb{P}_X\lfloor_{V_i}(dx) = \mathbb{I}_{V_i}(x)\mathbb{P}_X(dx) ,
	\end{align*}
	where $\mathbb{I}_{V_i}$ is the indicator function of $V_i$.
\end{lemma}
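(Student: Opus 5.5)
We prove the Laguerre lemma by combining the existence results with the complementary-slackness characterization of optimal plans in Lemma~\ref{lem:kantorovich-dual}, specialized to the cost $c=d_{\mathcal{X}}$ with the discrete target $\hat{\mathbb{P}}_X$.

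\textbf{Step 1: existence and complementary slackness.} Both spaces are Polish and the cost is continuous and bounded by $\mathrm{diam}(\mathcal{X})=1$. Hence Lemma~\ref{lem:existence-of-optimal-coupling-general-wasserstein} gives a primal optimizer $\pi^*$. Lemma~\ref{lem:kantorovich-dual}, applied with constant bounds $a,b,c_{\mathcal{X}},c_{\mathcal{Y}}$, gives attainment of the dual by a $d_{\mathcal{X}}$-convex $\psi^*$, identified with a vector in $\mathbb{R}^n$. The optimality condition reads
\[
\psi^{*\,d_{\mathcal{X}}}(x)-\psi^*(X_i)=d_{\mathcal{X}}(x,X_i)\quad\text{for $\pi^*$-a.e. } (x,X_i).
\]
(The sign convention must be matched to the displayed dual.) Since $\psi^{*\,d_{\mathcal{X}}}(x)=\min_j\big(d_{\mathcal{X}}(x,X_j)+\psi^*(X_j)\big)$, this says: whenever $(x,X_i)$ lies in the support of $\pi^*$ up to null sets, index $i$ attains the minimum, that is, $x\in V_i$. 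Consequently $\pi^*\big(\{(x,X_i): x\notin V_i\}\big)=0$.

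\textbf{Step 2: the cells form an a.e. partition.} Every $x$ attains the minimum for some $j$, so $\bigcup_i V_i=\mathcal{X}$. For $i\neq j$, the overlap satisfies
\[
V_i\cap V_j\subset\{x: d_{\mathcal{X}}(x,X_i)-d_{\mathcal{X}}(x,X_j)=\psi^*(X_j)-\psi^*(X_i)\}\cup\{X_i,X_j\}.
\]
The first set is $\mathbb{P}_X$-null by Assumption~\ref{ass:no-tie}, since $X_i\neq X_j$. The two points are null by Assumption~\ref{ass:non-atomic}. A finite union of such overlaps is therefore null, and $\{V_i\}$ is a $\mathbb{P}_X$-a.e. partition.

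\textbf{Step 3: identify $\pi^*$ and the cell masses.} Disintegrate $\pi^*$ over $\mathbb{P}_X$ using Lemma~\ref{lem:disintegration}, obtaining $\pi^*(\cdot\mid x)=\sum_i w_i(x)\delta_{X_i}$. By Step 1, $w_i(x)=0$ for a.e. $x\notin V_i$. By Step 2, a.e. $x$ lies in exactly one cell, so $w_i(x)=\mathbb{I}_{V_i}(x)$ almost everywhere. Hence
\[
\pi^*=\sum_i \mathbb{P}_X\lfloor_{V_i}\otimes\delta_{X_i}.
\]
The second marginal constraint $\pi^*(\mathcal{X}\times\{X_i\})=1/n$ then gives $\mathbb{P}_X(V_i)=1/n$.

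\textbf{Main obstacle.} The delicate step is Step 1: the dual must be attained, and the slackness condition must be correctly translated into cell membership under the paper's sign convention. This is where I would carefully rewrite the dual in Villani's form with $\psi\mapsto-\psi$ before invoking the characterization. Adding $\{X_i\}$ to $V_i$ is harmless because that set is null.
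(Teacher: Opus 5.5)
Your proposal is correct and follows essentially the same route as the paper. Complementary slackness from Kantorovich duality forces the disintegration weights of $\pi^*$ to vanish off the minimizing cell, Assumptions~\ref{ass:no-tie} and \ref{ass:non-atomic} make the cells a $\mathbb{P}_X$-a.e.\ partition, and the second-marginal constraint gives $\mathbb{P}_X(V_i)=1/n$; the only cosmetic difference is that the paper uses the cells without $X_i$ and adds those points at the end, whereas you absorb them into the overlap estimate. The sign issue you flag is not a real obstacle: the displayed dual is already Villani's form with source and target exchanged, and in any case slackness for the given pair follows directly from the pointwise bound $(\psi^*)^{d_{\mathcal{X}}}(x)-\psi^*(\hat{x})\le d_{\mathcal{X}}(x,\hat{x})$ together with equality of the primal and dual values.
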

\begin{proof}
	Denote by $\psi^{*}_j \coloneqq \psi^*(X_j)$. Then, we have the following:
	\begin{align*}
		(\psi^*)^{d_{\mathcal{X}}}(x) = \min_{1 \leq j \leq n} \bigl(d_{\mathcal{X}}(x,X_j) + \psi^*_j\bigr).		
	\end{align*}
	
	Now define, for each $i=1,...,n$,	
	\begin{align*}		
		\tilde{V}_i \coloneqq \left\{x \in \mathcal{X} : d_{\mathcal{X}}(x,X_i) + \psi^{*}_i \leq d_{\mathcal{X}}(x,X_j) + \psi^{*}_j , \, \forall j\right\}.	
	\end{align*}	
	By the no-ties assumption (Assumption~\ref{ass:no-tie}), the sets $\{\tilde{V}_i\}_{i=1}^n$ form a $\mathbb{P}_X$-a.e. partition of $\mathcal{X}$, and moreover for $\mathbb{P}_X$-a.e. $x$ there exists a unique index $i(x)$ such that $x \in \tilde{V}_{i(x)}$.
	
	Let $\pi^* \in \Pi(\mathbb{P}_X,\hat{\mathbb{P}}_X)$ be a primal optimal plan. Since the second marginal of $\pi^*$ is supported on the finite set $\{X_1,...,X_n\}$, there exist Borel-measurable functions $p_i : \mathcal{X} \to [0,1]$ such that	
	\begin{align*}		
		\sum_{i=1}^n p_i(x)=1 \quad \text{for }\mathbb{P}_X\text{-a.e. }x,		
	\end{align*}	
	and the disintegration of $\pi^*$ with respect to the first marginal $\mathbb{P}_X$ takes the form	
	\begin{align*}		
		\pi^*(dx,d\hat{x}) = \sum_{i=1}^n \underbrace{p_i(x) \, \delta_{X_i}(d\hat{x})}_{= \pi^*(d\hat{x} \mid x)} \, \mathbb{P}_X(dx) .		
	\end{align*}
	
	We next use complementary slackness. By the Kantorovich duality lemma, the following holds:	
	\begin{align*}		
		(\psi^*)^{d_{\mathcal{X}}}(x) - \psi^*(\hat{x}) = d_{\mathcal{X}}(x,\hat{x}) , \quad\text{for }\pi^*\text{-a.e. }(x,\hat{x}).		
	\end{align*}	
	In particular, for each $i$ and for $\mathbb{P}_X$-a.e. $x$ such that $p_i(x)>0$, we must have	
	\begin{align*}		
		(\psi^*)^{d_{\mathcal{X}}}(x) - \psi^*(X_i) = d_{\mathcal{X}}(x,X_i).		
	\end{align*}	
	Equivalently,	
	\begin{align*}	
		(\psi^*)^{d_{\mathcal{X}}}(x) = d_{\mathcal{X}}(x,X_i) + \psi_i^*.		
	\end{align*}	
	But we already have	
	\begin{align*}		
		(\psi^*)^{d_{\mathcal{X}}}(x)=\min_{1 \leq j \leq n}\bigl(d_{\mathcal{X}}(x,X_j)+\psi_j^*\bigr),		
	\end{align*}	
	so the equality implies that $i$ attains the minimum, i.e.,	
	\begin{align*}		
		p_i(x)>0 \quad\Longrightarrow\quad x\in \tilde{V}_i \quad \text{for }\mathbb{P}_X\text{-a.e. }x.		
	\end{align*}	
	By the no-ties assumption (Assumption~\ref{ass:no-tie}), the minimizer is unique for $\mathbb{P}_X$-a.e. $x$, hence for $\mathbb{P}_X$-a.e. $x$ there exists a unique index $i(x)$ such that	
	\begin{align*}		
		p_{i(x)}(x)=1,\quad p_j(x)=0 , \quad (j\neq i(x)),		
	\end{align*}	
	and necessarily $x\in \tilde{V}_{i(x)}$. Therefore,	
	\begin{align*}		
		\pi^*(dx,d\hat{x}) = \sum_{i=1}^n \mathbb{I}_{\tilde{V}_i}(x)\,\mathbb{P}_X(dx)\,\delta_{X_i}(d\hat{x}) = \sum_{i=1}^n \mathbb{P}_X\lfloor_ {\tilde{V}_i}(dx)\,\delta_{X_i}(d\hat{x}) .		
	\end{align*}
	
	Now we prove the mass constraint $\mathbb{P}_X(\tilde{V}_i)=1/n$. Since the second marginal of $\pi^*$ is $\hat{\mathbb{P}}_X$, we have for each $i$,	
	\begin{align*}		
		\frac{1}{n} = \hat{\mathbb{P}}_X(\{X_i\}) = \pi^*(\mathcal{X}\times\{X_i\}) = \pi^*(\tilde{V}_i\times\{X_i\}) = \mathbb{P}_X(\tilde{V}_i) .		
	\end{align*}
	
	Finally, define	
	\begin{align*}		
		V_i \coloneqq \tilde{V}_i \cup \{X_i\}.		
	\end{align*}	
	Since $\mathbb{P}_X$ is non-atomic, $\mathbb{P}_X(\{X_i\})=0$, hence $\mathbb{P}_X(V_i)=\mathbb{P}_X(\tilde{V}_i)=1/n$. Moreover, the representation of $\pi^*$ remains valid when $\tilde{V}_i$ is replaced by $V_i$, because $\mathbb{P}_X \lfloor_{V_i}$ and $\mathbb{P}_X \lfloor_{\tilde{V}_i}$ are equivalent as they differ only on a $\mathbb{P}_X$-null set. This completes the proof.
\end{proof}

Lemma~\ref{lem:laguerre} characterizes the optimal plan via a partition of $\mathcal{X}$ into $n$ Laguerre cells $V_i$. Unlike standard Voronoi tessellations, $V_i$ incorporates the optimal dual potential $\psi^*(X_i)$ as an additive weight. Optimality ensures these weights adjust the cell boundaries to be perfectly balanced, capturing exactly $1/n$ of the mass $\mathbb{P}_X$ per cell.

\begin{figure*}[t]
	\centering
	\begin{subfigure}{0.49\textwidth}
		\centering
		\includegraphics[width=\textwidth]{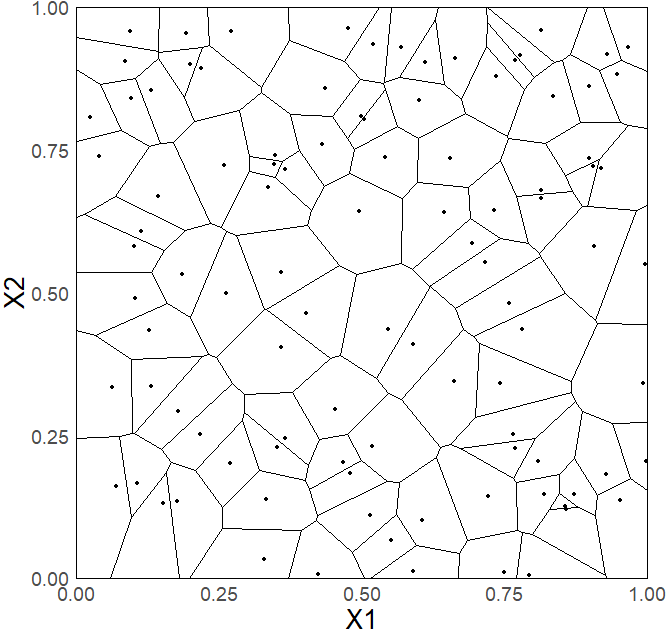}
		\caption{$n=100$ samples.}\label{fig:voronoi-100}
	\end{subfigure}
	\begin{subfigure}{0.49\textwidth}
		\centering
		\includegraphics[width=\textwidth]{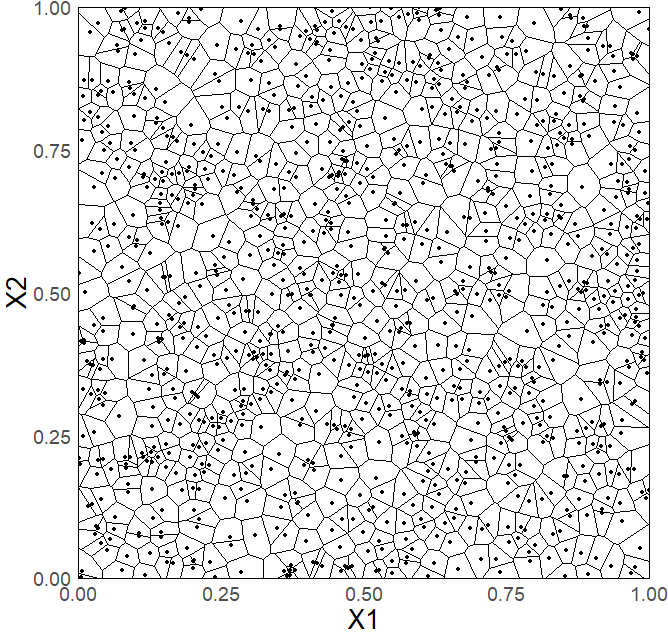}
		\caption{$n=1000$ samples.}\label{fig:voronoi-1000}
	\end{subfigure}
	\caption{Voronoi tessellations generated from samples drawn from the uniform distribution $U([0,1]^2)$.}\label{fig:voronoi}
\end{figure*}

Figure~\ref{fig:voronoi} illustrates the geometric intuition of this result. Here, the generator points and their associated polygonal regions correspond to the samples $X_i$ and the Laguerre cells $V_i$, respectively. Note that the figure depicts standard Voronoi cells for brevity, which corresponds to the simplified assumption of constant dual potentials (i.e., $\psi^*(X_i) = c$). As $n$ increases from $100$ to $1000$, the spatial extent of each cell shrinks significantly (mass $10^{-2} \to 10^{-3}$). Since the optimal coupling $Q_X$ maps the entire region $V_i$ to a single point $X_i$, this shrinkage forces the conditional measure $Q_X(\cdot | \hat{x})$ to concentrate tightly around the identity mapping $x = \hat{x}$. Consequently, the probability mass accumulates along the diagonal, asymptotically eliminating the blurring effect.

Formalizing this via Markov's inequality on the $W_1^{d_{\mathcal{X}}}$-optimal coupling $Q_X \in \Pi(\mathbb{P}_X,\hat{\mathbb{P}}_X)$, we have
\begin{align*}
	\mathbb{P}_{(X,\hat X)\sim Q_X}\!\left(d_{\mathcal{X}}(X,\hat{X}) > \varepsilon\right) \leq \frac{W_1^{d_{\mathcal{X}}}(\mathbb{P}_X,\hat{\mathbb{P}}_X)}{\varepsilon}.
\end{align*}
Therefore, provided that the empirical measures converge in the Wasserstein-1 metric, the blurring effect introduced by lifting $\hat{\pi}$ to the continuous space $\Pi(\mathbb{P}_X,\mathbb{P}_Y)$ vanishes asymptotically as the sample size $n$ increases.

\subsubsection{Dual Form of the Wasserstein-$1$ Distance}
\begin{definition}[Wasserstein-$p$ distance]
	Let $(\mathcal{S},d_{\mathcal{S}})$ be a compact metric space. For Borel probability measures $\mu$ and $\nu$ on $\mathcal{S}$, the Wasserstein-$p$ distance ($1 \leq p < \infty$) is
	\begin{align*}
		W_p^{d_{\mathcal{S}}}(\mu,\nu) \coloneqq \left(\min_{\pi \in \Pi(\mu,\nu)} \int_{\mathcal{S} \times \mathcal{S}} d_{\mathcal{S}}(x,y)^p \, \pi(dx,dy)\right)^{1/p}.
	\end{align*}
\end{definition}
We recall the definition of the Wasserstein-$p$ distance from the main text. Since $d_{\mathcal{S}}^p$ is continuous and $\mathcal{S}$ is compact, the infimum is always attained by Lemma~\ref{lem:existence-of-optimal-coupling-general-wasserstein}.

\begin{lemma}[Kantorovich--Rubinstein formula]
	\label{lem:kantorovic-rubinstein}
	Let $(\mathcal{X},d_{\mathcal{X}})$ be a compact metric space. Then, for any two Borel probability measures $\mu,\nu$ on $\mathcal{X}$,
	\begin{align*}
		W_1^{d_{\mathcal{X}}}(\mu,\nu) = \sup_{\|\psi\|_{\mathrm{Lip}} \leq 1} \left(\int_{\mathcal{X}} \psi d\mu - \int_{\mathcal{X}} \psi d\nu \right) ,
	\end{align*}
	where $\|\cdot\|_{\mathrm{Lip}}$ is defined by
	\begin{align*}
		\| \psi \|_{\mathrm{Lip}} = \sup_{x \neq y} \frac{| \psi(x) - \psi(y) |}{d_{\mathcal{X}}(x,y)} .
	\end{align*}
\end{lemma}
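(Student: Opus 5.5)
The plan is to obtain the formula from the general Kantorovich duality (Lemma~\ref{lem:kantorovich-dual}) with cost $c = d_{\mathcal{X}}$. The key step is to show that, for a metric cost, the dual problem may be restricted to $1$-Lipschitz potentials. We check the hypotheses first. Since $\mathcal{X}$ is compact, it is Polish. The cost $d_{\mathcal{X}}$ is continuous, real-valued and bounded by $\mathrm{diam}(\mathcal{X}) = 1$. The lower bound $c \geq a + b$ holds with $a \equiv b \equiv 0$, and the upper bound holds with $c_{\mathcal{X}} \equiv 1$, $c_{\mathcal{Y}} \equiv 0$. So the lemma applies and gives $W_1^{d_{\mathcal{X}}}(\mu,\nu) = \sup_{\psi} \{ \int \psi^{c}\, d\nu - \int \psi \, d\mu \}$.

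\emph{Inequality $\geq$.} Take any $\psi$ with $\|\psi\|_{\mathrm{Lip}} \leq 1$ and any coupling $\pi \in \Pi(\mu,\nu)$. Then $\psi$ is continuous and bounded on the compact space, hence integrable. Integrating against $\pi$ gives
\begin{align*}
\int \psi \, d\mu - \int \psi \, d\nu = \int \bigl(\psi(x) - \psi(y)\bigr)\, \pi(dx,dy) \leq \int d_{\mathcal{X}}(x,y)\, \pi(dx,dy).
\end{align*}
Taking the infimum over $\pi$ and then the supremum over $\psi$ yields $\sup_{\|\psi\|_{\mathrm{Lip}}\le 1}(\cdots) \leq W_1^{d_{\mathcal{X}}}(\mu,\nu)$.

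\emph{Inequality $\leq$.} Fix a dual-feasible $\psi \in L^1(\mu)$, and set $\varphi(y) \coloneqq \psi^{c}(y) = \inf_{x}\bigl(\psi(x) + d_{\mathcal{X}}(x,y)\bigr)$.
\begin{itemize}
\item $\varphi$ is $1$-Lipschitz wherever it is finite, because it is an infimum of $1$-Lipschitz functions of $y$.
\item $\varphi(x) \leq \psi(x)$ for every $x$, by choosing the point $x$ itself in the infimum.
\end{itemize}
Hence $\int \psi^c \, d\nu - \int \psi\, d\mu \leq \int \varphi\, d\nu - \int \varphi \, d\mu$, which is at most the supremum over $1$-Lipschitz functions of $\int \phi\, d\nu - \int \phi\, d\mu$. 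That supremum is invariant under $\phi \mapsto -\phi$, so it equals the right-hand side of the lemma. Taking the supremum over $\psi$ gives the reverse inequality, and the two inequalities together prove the formula.

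The main obstacle is the degenerate case in the $\leq$ step, where $\varphi \equiv -\infty$, i.e.\ $\psi$ is unbounded below. Then $\int \psi^c\, d\nu = -\infty$ and such $\psi$ do not contribute to the supremum, so they can be discarded. Otherwise $\varphi$ is finite everywhere, since a $1$-Lipschitz function that is finite at one point of a space of diameter $1$ is finite at all points. It is then continuous and bounded on the compact $\mathcal{X}$, hence Borel and integrable against both $\mu$ and $\nu$. With this dealt with, every integral in the chain above is well defined.
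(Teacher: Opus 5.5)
Your proof is correct. Both inequalities are sound, you handle properly the dichotomy that $\psi^c$ is either identically $-\infty$ or a real-valued $1$-Lipschitz function, and the two observations $\psi^c \le \psi$ and ``$\psi^c$ is $1$-Lipschitz'' are exactly what reduce the $c$-transform constraint to a Lipschitz constraint. The paper states this lemma without proof and only remarks that it specializes Lemma~\ref{lem:kantorovich-dual} to a metric cost, where the $c$-transform constraint becomes a global $1$-Lipschitz constraint; your argument is that reduction, carried out in the standard textbook way.
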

Lemma~\ref{lem:kantorovic-rubinstein} provides a crucial simplification of Lemma~\ref{lem:kantorovich-dual} when the cost function is a metric $d_{\mathcal{X}}$. In this case, the complex constraint involving the $c$-transform is relaxed into a global 1-Lipschitz constraint. This duality plays a pivotal role in the proof of Theorem~\ref{thm:consistency}. By establishing that the feature cost function $c_f$ is Lipschitz continuous (Assumption~\ref{ass:lipschitz-feature}), we invoke this lemma to directly upper-bound the difference in expected costs by the Wasserstein-1 distance between the empirical and population couplings.

\subsection{Weak Topology}\label{App:weak-topology}
In this subsection, we introduce the fundamental concepts of weak topology and the weak operator topology (WOT). These topological structures are essential for establishing the lower semicontinuity of the proposed CDOT objective and the pseudometric property of the CDOT discrepancy.

\subsubsection{Weak Topology}
\begin{definition}[Weak topology]
	Let $(\mathcal{S},\mathcal{B}(\mathcal{S}))$ be a Polish space and $\mathcal{P}(\mathcal{S})$ be the set of all Borel probability measures on $(\mathcal{S},\mathcal{B}(\mathcal{S}))$. For any $f \in C_b(\mathcal{S})$, consider the functional
	\begin{align*}
		\phi_f : \mathcal{P}(\mathcal{S}) \to \mathbb{R}, \quad \phi_f(\mu) = \int_{\mathcal{S}} f(s)\mu(ds).
	\end{align*}
	Let $\mathcal{F} = \{\phi_f: f \in C_b(\mathcal{S})\}$. The weak topology on $\mathcal{P}(\mathcal{S})$ is defined as the coarsest topology making every functional in $\mathcal{F}$ continuous. Consequently, a net $(\mu_{\alpha})$ converges weakly to $\mu$ if and only if
	\begin{align*}
		\int_{\mathcal{S}} f(s) \mu_{\alpha}(ds) \to \int_{\mathcal{S}} f(s) \mu(ds) , \quad \forall f \in C_b(\mathcal{S}).
	\end{align*}
	We write $\mu_{\alpha} \overset{w}{\to} \mu$ if $(\mu_{\alpha})$ converges weakly to $\mu$.
\end{definition}

Since the underlying space $\mathcal{S}$ is Polish, the space $\mathcal{P}(\mathcal{S})$ equipped with the weak topology is also a Polish space (i.e., separable and completely metrizable). This topological structure allows us to utilize powerful convergence results, most notably Prokhorov's theorem, which states that a subset of probability measures is relatively compact if and only if it is tight.

This property is particularly important when analyzing couplings. Let $\mathcal{X}$ and $\mathcal{Y}$ be Polish spaces and consider $\Pi(\mu, \nu)$ for two fixed Borel probability measures $\mu \in \mathcal{P}(\mathcal{X})$ and $\nu \in \mathcal{P}(\mathcal{Y})$. Since $\mathcal{X}$ and $\mathcal{Y}$ are Polish, the measures $\mu$ and $\nu$ are tight by Ulam's theorem. Consequently, the collection $\Pi(\mu, \nu)$ is tight in $\mathcal{P}(\mathcal{X} \times \mathcal{Y})$ because a set of measures on a product space is tight if and only if its marginals are tight. Furthermore, since the constraint of having fixed marginals is closed under weak convergence, $\Pi(\mu, \nu)$ is a closed subset of a compact set. Therefore, $\Pi(\mu, \nu)$ is compact in the weak topology.

\subsubsection{Weak Operator Topology}
Now we introduce the weak operator topology. Unless stated otherwise, we assume all Hilbert spaces to be real and separable.
\begin{definition}[Weak operator topology]
	Let $\mathcal{H}_X$ and $\mathcal{H}_Y$ be Hilbert spaces, and let $B(\mathcal{H}_X,\mathcal{H}_Y)$ denote the set of bounded linear operators from $\mathcal{H}_X$ to $\mathcal{H}_Y$. For $x \in \mathcal{H}_X$ and $y \in \mathcal{H}_Y$, define the linear functional
	\begin{align*}
		\phi_{x,y} : B(\mathcal{H}_X,\mathcal{H}_Y) \to \mathbb{R} , \quad \phi_{x,y}(T) = \langle Tx, y \rangle_{\mathcal{H}_Y} .
	\end{align*}
	Let $\mathcal{F} = \{\phi_{x,y} : x \in \mathcal{H}_X , y \in \mathcal{H}_Y \}$. The \emph{weak operator topology (WOT)} on $B(\mathcal{H}_X,\mathcal{H}_Y)$ is the coarsest topology making every functional in $\mathcal{F}$ continuous. Equivalently, a net $(T_\alpha)$ converges to $T$ in WOT if and only if
	\begin{align*}
		\langle T_{\alpha}x, y \rangle_{\mathcal{H}_Y} \to \langle Tx, y \rangle_{\mathcal{H}_Y} , \quad \forall x \in \mathcal{H}_X, y \in \mathcal{H}_Y .
	\end{align*}
\end{definition}

The WOT plays a role analogous to the weak topology of measures but in the context of linear operators. In infinite-dimensional Hilbert spaces, the closed unit ball is not compact in the operator norm topology (a result known as Riesz's lemma). However, the unit ball is compact in the WOT (a consequence of the Banach--Alaoglu theorem). This compactness is pivotal for our analysis: it ensures that any uniformly bounded sequence of operators---such as the conditional expectations $\{T_{\pi_n}\}$ induced by a sequence of couplings---possesses a convergent subsequence in WOT. This property allows us to define limits for operators even when they do not converge in the stronger norm topology.

\subsection{Lower Semicontinuity}
This subsection establishes the lower semicontinuity properties required to guarantee the existence of an optimal solution to the CDOT problem.
We recall the definition of lower semicontinuity and the generalized Weierstrass theorem, and show that the squared Hilbert--Schmidt (HS) norm is lower semicontinuous with respect to the weak operator topology.

\begin{definition}[Lower semicontinuity]
	Let $(\mathcal{X},\tau)$ be a topological space. A function $f:\mathcal{X} \to \mathbb{R}\cup\{\pm\infty\}$ is said to be \emph{lower semicontinuous at $x_0\in\mathcal{X}$} if for every net $(x_\alpha)_{\alpha\in A}$ in $\mathcal{X}$ such that $x_\alpha\to x_0$ w.r.t. $\tau$,
	\begin{align*}
		f(x_0)\leq \liminf_{\alpha} f(x_\alpha), \quad\text{where}\quad \liminf_{\alpha} f(x_\alpha)\coloneqq \sup_{\alpha_0\in A}\;\inf_{\alpha\succeq \alpha_0} f(x_\alpha) .
	\end{align*}
	We say that $f$ is \emph{lower semicontinuous} if it is lower semicontinuous at every $x_0\in\mathcal{X}$.
\end{definition}

\begin{lemma}[Generalized Weierstrass theorem]
	\label{lem:extreme-value-theorem}
	Let $\mathcal{X}$ be a topological space and $\mathcal{K} \subset \mathcal{X}$ be a nonempty compact set. If $f: \mathcal{K} \to \mathbb{R}$ is a lower semicontinuous function, then $f$ is bounded below and there exists $x^* \in \mathcal{K}$ such that $f(x^*) = \inf_{x \in \mathcal{K}} f(x)$. 
\end{lemma}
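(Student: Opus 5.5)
This is the standard Weierstrass extreme value theorem, stated for lower semicontinuous functions on a compact subset of an arbitrary topological space. Since the paper's definition of lower semicontinuity is phrased with nets, I would not assume the space is metrizable; the argument will use open covers and the finite intersection property, which work in full generality.

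\textbf{Step 1: boundedness below.} For each $t\in\mathbb{R}$, the set $U_t\coloneqq\{x\in\mathcal{K}: f(x)>t\}$ is relatively open in $\mathcal{K}$. To see this, suppose $\mathcal{K}\setminus U_t=\{f\le t\}$ were not closed. Then some net in $\{f\le t\}$ would converge to a point $x_0$ with $f(x_0)>t$, giving $\liminf_\alpha f(x_\alpha)\le t<f(x_0)$. This contradicts lower semicontinuity at $x_0$. Since $f$ is real-valued, the family $\{U_{-m}\}_{m\in\mathbb{N}}$ covers $\mathcal{K}$. Compactness yields a finite subcover, and because the sets are nested, a single $U_{-M}$ already equals $\mathcal{K}$. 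Hence $f>-M$ on $\mathcal{K}$, and $m^\ast\coloneqq\inf_{\mathcal{K}}f$ is finite.

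\textbf{Step 2: attainment.} For $k\ge1$, set $F_k\coloneqq\{x\in\mathcal{K}: f(x)\le m^\ast+1/k\}$. By Step 1 each $F_k$ is closed in $\mathcal{K}$. Each $F_k$ is nonempty by the definition of the infimum, and the sets are nested. A decreasing sequence of nonempty closed subsets of a compact space has the finite intersection property, so $\bigcap_k F_k\neq\emptyset$. Any $x^\ast$ in this intersection satisfies $f(x^\ast)\le m^\ast$, and therefore $f(x^\ast)=m^\ast$.

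The only delicate point is Step 1's equivalence between net-based lower semicontinuity and closedness of the sublevel sets. Everything there stays within $\mathcal{K}$ with the subspace topology, so no Hausdorff or metrizability assumption is needed.
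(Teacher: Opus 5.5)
Your proof is correct and complete. Step 1's net-based argument that the sublevel sets are relatively closed in $\mathcal{K}$, the nested-cover argument for boundedness below, and the finite-intersection step all hold in an arbitrary topological space, with no Hausdorff or metrizability assumption.

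The paper gives no argument of its own and simply cites Theorem 2.43 of Aliprantis and Border. That result rests on the same reasoning: sublevel sets are closed, and compactness does the rest. One streamlining is available. The sets $\{x\in\mathcal{K}: f(x)\le f(y)\}$, $y\in\mathcal{K}$, already have the finite intersection property, so intersecting them gives a minimizer directly. Boundedness below then follows from attainment, which makes the open-cover half of your Step 1 unnecessary.
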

\begin{proof}
	The proof can be found in Theorem 2.43 of \citet{aliprantis2006infinite}.
\end{proof}

This theorem generalizes the classical Weierstrass extreme value theorem (which requires continuity) to lower semicontinuous functions. It serves as the primary tool for establishing the existence of optimal solutions in variational problems. Since we have already established that the set of couplings $\Pi(\mu, \nu)$ is compact in the weak topology, proving the lower semicontinuity of our loss function is sufficient to guarantee the existence of an optimal transport plan.

\begin{lemma}
	\label{lem:HS-lower-semicontinuous}
	Let $\mathcal{H}_X$ and $\mathcal{H}_Y$ be Hilbert spaces. Denote by $\mathrm{HS}(\mathcal{H}_X,\mathcal{H}_Y) \subset B(\mathcal{H}_X,\mathcal{H}_Y)$ the set of Hilbert--Schmidt operators from $\mathcal{H}_X$ to $\mathcal{H}_Y$ equipped with the weak operator topology. Then, $\| \cdot \|_{\mathrm{HS}}^2$ is lower semicontinuous in WOT: for any sequence $\{T_n\}_{n=1}^{\infty} \subset \mathrm{HS}(\mathcal{H}_X,\mathcal{H}_Y) $ that converges to $T \in \mathrm{HS}(\mathcal{H}_X,\mathcal{H}_Y)$ in WOT, it holds that
	\begin{align*}
		\| T \|_{\mathrm{HS}}^2 \leq \liminf_{n \to \infty} \| T_n \|_{\mathrm{HS}}^2 .
	\end{align*}
\end{lemma}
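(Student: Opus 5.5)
The plan is to write $\|\cdot\|_{\mathrm{HS}}^2$ as a supremum of WOT-continuous functionals; lower semicontinuity then follows because a supremum of continuous functions is lower semicontinuous. The tool is an orthonormal basis expansion.

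Fix orthonormal bases $\{e_i\}_{i\ge1}$ of $\mathcal{H}_X$ and $\{u_j\}_{j\ge1}$ of $\mathcal{H}_Y$; these are countable since the Hilbert spaces are assumed separable. For any bounded operator $S$, Parseval's identity applied to each $Se_i$ in the basis $\{u_j\}$ gives
\begin{align*}
	\|S\|_{\mathrm{HS}}^2 = \sum_{i\ge1}\|Se_i\|_{\mathcal{H}_Y}^2 = \sum_{i\ge1}\sum_{j\ge1} \langle Se_i,u_j\rangle_{\mathcal{H}_Y}^2 = \sup_{N\ge1} \sum_{i,j\le N}\langle Se_i,u_j\rangle_{\mathcal{H}_Y}^2 .
\end{align*}
The last equality holds because the double series has nonnegative terms, so it equals the supremum of its square partial sums. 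This also covers the value $+\infty$, although here all operators are Hilbert--Schmidt.

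Now let $T_n\to T$ in WOT. For each fixed pair $(i,j)$, the definition of WOT gives $\langle T_ne_i,u_j\rangle\to\langle Te_i,u_j\rangle$. Hence, for each fixed $N$, the finite sum $\sum_{i,j\le N}\langle T_ne_i,u_j\rangle^2$ converges to $\sum_{i,j\le N}\langle Te_i,u_j\rangle^2$. Since each finite sum is bounded by the full double series, we get for every $N$
\begin{align*}
	\sum_{i,j\le N}\langle Te_i,u_j\rangle^2 = \lim_{n\to\infty}\sum_{i,j\le N}\langle T_ne_i,u_j\rangle^2 \le \liminf_{n\to\infty}\|T_n\|_{\mathrm{HS}}^2 .
\end{align*}
Taking the supremum over $N$ on the left-hand side yields $\|T\|_{\mathrm{HS}}^2\le\liminf_n\|T_n\|_{\mathrm{HS}}^2$. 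The same argument works verbatim for nets, which is what the general definition of lower semicontinuity requires.

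The only step needing care is the interchange of the limit in $n$ with the infinite sum. It is handled by truncating to finite $N$ and then using monotone convergence of the nonnegative partial sums, in the spirit of Fatou's lemma for counting measure on $\mathbb{N}^2$. I do not expect a genuine obstacle beyond this. The one thing to check is that the HS norm does not depend on the choice of bases, which is the standard fact that $\sum_i\|Se_i\|^2$ is basis-independent.
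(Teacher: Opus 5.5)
Your proposal is correct and follows essentially the same route as the paper: both expand $\|\cdot\|_{\mathrm{HS}}^2$ in orthonormal bases, use WOT convergence to get convergence of each matrix entry $\langle T_n e_i, u_j\rangle$, and then pass to the double sum. The paper cites Fatou's lemma for counting measure at that last step, while your finite-$N$ truncation followed by a supremum over $N$ is simply that argument written out explicitly.
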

\begin{proof}
	Let $\{e_i\}_{i=1}^{\infty}$ and $\{\varphi_j\}_{j=1}^{\infty}$ be orthonormal bases for $\mathcal{H}_X$ and $\mathcal{H}_Y$, respectively. By the definition of the Hilbert--Schmidt norm, we express the norm of $T$ as
	\begin{align*}
		\| T \|_{\mathrm{HS}}^2 = \sum_{i=1}^{\infty} \| T e_i \|_{\mathcal{H}_Y}^2 = \sum_{i=1}^{\infty} \sum_{j=1}^{\infty} | \langle Te_i, \varphi_j \rangle_{\mathcal{H}_Y} |^2 .
	\end{align*}
	By the assumption that $T_n \to T$ in WOT, for any fixed basis vectors $e_i$ and $\varphi_j$, we have
	\begin{align*}
		\lim_{n \to \infty} \langle T_n e_i, \varphi_j \rangle_{\mathcal{H}_Y} = \langle T e_i, \varphi_j \rangle_{\mathcal{H}_Y} \Longrightarrow \lim_{n \to \infty} \left\vert \langle T_n e_i, \varphi_j \rangle_{\mathcal{H}_Y} \right\vert = \left\vert \langle T e_i, \varphi_j \rangle_{\mathcal{H}_Y} \right\vert .
	\end{align*}
	Therefore, by Fatou's lemma,
	\begin{align*}
		\| T \|_{\mathrm{HS}}^2 = \sum_{i,j} | \langle Te_i, \varphi_j \rangle_{\mathcal{H}_Y} |^2 \leq \liminf_{n \to \infty} \sum_{i,j} | \langle T_n e_i, \varphi_j \rangle_{\mathcal{H}_Y} |^2 = \liminf_{n \to \infty} \| T_n \|_{\mathrm{HS}}^2 .
	\end{align*}
	This concludes the proof.
\end{proof}

Lemma~\ref{lem:HS-lower-semicontinuous} is useful to prove the lower semicontinuity of our structural regularization term, which in turn guarantees the existence of an optimal solution for our CDOT problem.

\section{Proofs}\label{App:proofs}
This section provides the proofs for our main theoretical results.

\subsection{Derivation of Matrix Representations}\label{pf:matrix-representation}
We show that under the empirical measures $\hat{\mathbb{P}}_X = \sum_{i=1}^{n}\delta_{X_i}/n$ and $\hat{\mathbb{P}}_Y = \sum_{j=1}^{n}\delta_{Y_j}/n$, the distance and conditional expectation operators (Definitions~\ref{def:distance-operator} and \ref{def:conditional-expectation-operator}) admit the following matrix representations:
\begin{align*}
	[D_{\hat{\mathbb{P}}_X}]_{ij} = \frac{d_{\mathcal{X}}(X_i,X_j)}{n}, \quad [D_{\hat{\mathbb{P}}_Y}]_{ij} = \frac{d_{\mathcal{Y}}(Y_i,Y_j)}{n}, \quad [T_{\pi}]_{ij} = n\pi_{ij}.
\end{align*}
We rigorously derive the representation for the distance operator; the other results follow analogously.

Let $\hat{\mathcal{X}} = \{X_1, ..., X_n\}$ be the discrete support equipped with a metric $d_{\mathcal{X}}$ and the uniform probability measure $\hat{\mathbb{P}}_X$. The function space $L^2(\hat{\mathcal{X}}, \hat{\mathbb{P}}_X)$ is an $n$-dimensional Hilbert space isomorphic to the Euclidean space $\mathbb{R}^n$. Specifically, we consider the canonical bijection $\varphi : L^2(\hat{\mathcal{X}}, \hat{\mathbb{P}}_X) \to \mathbb{R}^n$ defined by evaluation at the support points:
\begin{align*}
	f \longmapsto \mathbf{f} \coloneqq \begin{pmatrix} f(X_1) \\ \vdots \\ f(X_n) \end{pmatrix} \in \mathbb{R}^n.
\end{align*}

Since $D_{\hat{\mathbb{P}}_X}$ is a linear operator on a finite-dimensional space, it admits a unique matrix representation $\mathbf{D}_X \in \mathbb{R}^{n \times n}$ satisfying $\varphi(D_{\hat{\mathbb{P}}_X} f) = \mathbf{D}_X \mathbf{f}$ for all $f$.
For any $i \in \{1, ..., n\}$, the $i$-th component of the transformed vector is given by the definition of the operator:
\begin{align*}
	(D_{\hat{\mathbb{P}}_X}f)(X_i) 
	&= \int_{\hat{x} \in \hat{\mathcal{X}}} d_{\mathcal{X}}(X_i, \hat{x}) f(\hat{x}) \hat{\mathbb{P}}_X(d\hat{x}) \\
	&= \sum_{j=1}^n d_{\mathcal{X}}(X_i, X_j) f(X_j) \hat{\mathbb{P}}_X(\{X_j\}) \\
	&= \sum_{j=1}^n \frac{d_{\mathcal{X}}(X_i, X_j)}{n} f(X_j).
\end{align*}
On the other hand, from the definition of matrix-vector multiplication, the $i$-th component is:
\begin{align*}
	(\mathbf{D}_X \mathbf{f})_i = \sum_{j=1}^n [\mathbf{D}_X]_{ij} f(X_j).
\end{align*}
Since this equality holds for any arbitrary function $f$ (and consequently any vector $\mathbf{f}$), comparing the coefficients of $f(X_j)$ yields the desired entry-wise equality:
\begin{align*}
	[\mathbf{D}_X]_{ij} = \frac{d_{\mathcal{X}}(X_i, X_j)}{n}.
\end{align*}

\subsection{Hilbert--Schmidt Kernel Representation of $\mathcal{R}(\pi)$}\label{pf:HS-operator}
In this subsection, we rigorously derive the integral representation of $\mathcal{R}(\pi)$ in \eqref{eq:structural-penalty}. For $g \in L^2(\mathcal{Y},\mathbb{P}_Y)$, observe that
\begin{align*}
	&(D_{\mathbb{P}_X}T_{\pi}g)(x) = \mathbb{E}\Big[d_{\mathcal{X}}(x,X)\mathbb{E}\left[g(Y) \mid X\right]\Big] = \int_{\mathcal{Y}} \Gamma_{\pi}^{(1)}(x,y)g(y)\mathbb{P}_Y(dy) , \\
	&(T_{\pi}D_{\mathbb{P}_Y}g)(x) = \mathbb{E}\Big[(D_{\mathbb{P}_Y}g)(Y) \mid X = x\Big] = \int_{\mathcal{Y}} \Gamma_{\pi}^{(2)}(x,y)g(y) \mathbb{P}_Y(dy) .
\end{align*}
Since $d_{\mathcal{X}}$ and $d_{\mathcal{Y}}$ are bounded by $1$, $D_{\mathbb{P}_X}T_{\pi} - T_{\pi}D_{\mathbb{P}_Y}$ is a well-defined Hilbert--Schmidt operator with a kernel $\Gamma_\pi(x,y)$:
\begin{align*}
	\| D_{\mathbb{P}_X}T_{\pi} - T_{\pi}D_{\mathbb{P}_Y} \|_{\mathrm{HS}}^2 = \int_{\mathcal{Y}}\int_{\mathcal{X}} \Gamma_\pi(x,y)^2\mathbb{P}_X(dx)\mathbb{P}_Y(dy) \leq 1 .
\end{align*}

\subsection{Proof for Theorem~\ref{thm:wellposed-convex}}\label{pf:thm:wellposed-convex}
\begin{proof}
	The existence follows from the generalized Weierstrass theorem (Lemma~\ref{lem:extreme-value-theorem}), which requires two ingredients: (i) the transport polytope $\Pi(\mathbb{P}_X,\mathbb{P}_Y)$ is nonempty and compact in the weak topology, as discussed in Appendix~\ref{App:weak-topology}; and (ii) the CDOT objective $\mathcal{L}_{\alpha}(\,\cdot\,; \mathfrak{X},\mathfrak{Y})$ is lower semicontinuous on $\Pi(\mathbb{P}_X,\mathbb{P}_Y)$, as established in the following lemma.
	\begin{lemma}
		\label{lem:lower-semicontinuous}
		Let $\mathcal{L}_{\alpha}(\pi; \mathfrak{X},\mathfrak{Y})$ be the CDOT objective as defined in Definition~\ref{def:foot-problem}:
		\begin{align*}
			\mathcal{L}_{\alpha}(\pi; \mathfrak{X},\mathfrak{Y}) = (1-\alpha)\underbrace{\mathbb{E}_{\pi}\big[\|f_{\mathcal{X}}(X)-f_{\mathcal{Y}}(Y)\|_2^2\big]}_{\eqqcolon \mathcal{F}(\pi; \mathfrak{X},\mathfrak{Y})} + \frac{\alpha}{2} \underbrace{\| D_{\mathbb{P}_X}T_\pi - T_\pi D_{\mathbb{P}_Y} \|_{\mathrm{HS}}^2}_{\eqqcolon \mathcal{R}(\pi; \mathfrak{X},\mathfrak{Y})} .
		\end{align*}
		Then, $\mathcal{L}_{\alpha}(\pi; \mathfrak{X},\mathfrak{Y})$ is lower semicontinuous.
	\end{lemma}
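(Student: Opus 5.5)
The plan is to treat the two terms separately and use that a sum of lower semicontinuous functions with nonnegative weights $(1-\alpha)$ and $\alpha/2$ is lower semicontinuous. Lower semicontinuity is with respect to the weak topology on $\Pi(\mathbb{P}_X,\mathbb{P}_Y)$. Because $\mathcal{X}\times\mathcal{Y}$ is compact and metrizable, the weak topology on $\mathcal{P}(\mathcal{X}\times\mathcal{Y})$ is metrizable, so sequences $\pi_n \overset{w}{\to}\pi$ suffice.

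\textbf{Feature term.} $\mathcal{F}(\pi)=\int c_f\,d\pi$ with $c_f(x,y)=\|f_{\mathcal{X}}(x)-f_{\mathcal{Y}}(y)\|_2^2$. This integrand is continuous, since $f_{\mathcal{X}}$ and $f_{\mathcal{Y}}$ are continuous, and it is bounded by $\mathrm{diam}(\mathcal{M})^2=1$. Hence $c_f\in C_b(\mathcal{X}\times\mathcal{Y})$, and by the definition of weak convergence $\mathcal{F}$ is in fact continuous.

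\textbf{Structural term.} Set $A_\pi\coloneqq D_{\mathbb{P}_X}T_\pi-T_\pi D_{\mathbb{P}_Y}$, a Hilbert--Schmidt operator by Appendix~\ref{pf:HS-operator}. By Lemma~\ref{lem:HS-lower-semicontinuous}, it is enough to show that $\pi_n\overset{w}{\to}\pi$ implies $A_{\pi_n}\to A_\pi$ in WOT.

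For $g\in L^2(\mathbb{P}_Y)$ and $h\in L^2(\mathbb{P}_X)$, the disintegration in Lemma~\ref{lem:disintegration} gives
\begin{align*}
\langle T_\pi g,h\rangle_{L^2(\mathbb{P}_X)}=\int h(x)g(y)\,\pi(dx,dy).
\end{align*}
Using the symmetry of $d_{\mathcal{X}}$ and Fubini, this yields
\begin{align*}
\langle A_\pi g,h\rangle=\int \big[(D_{\mathbb{P}_X}h)(x)g(y)-h(x)(D_{\mathbb{P}_Y}g)(y)\big]\,\pi(dx,dy).
\end{align*}
Because the distances are continuous and bounded, $D_{\mathbb{P}_X}h$ and $D_{\mathbb{P}_Y}g$ are continuous functions, with $|D_{\mathbb{P}_X}h(x)-D_{\mathbb{P}_X}h(x')|\le d_{\mathcal{X}}(x,x')\|h\|_{L^1}$.

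\textbf{Main obstacle.} $g$ and $h$ are only $L^2$, not continuous, so weak convergence cannot be applied to these integrands directly. I handle this with a density argument that uses the fixed marginals.

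First, $\|T_\pi\|_{op}\le 1$ (Jensen) and $\|D\|_{op}\le1$, so $\|A_\pi\|_{op}\le 2$ uniformly in $\pi$. Next, approximate $g,h$ in $L^2$ by continuous $g_\varepsilon,h_\varepsilon$; $C(\mathcal{X})$ is dense in $L^2$ of a finite Borel measure on a compact metric space. The error is controlled uniformly in $\pi$ by Cauchy--Schwarz with respect to $\pi$, because the marginals are fixed:
\begin{align*}
\Big|\int (h-h_\varepsilon)(x)\,\phi(y)\,d\pi\Big|\le \|h-h_\varepsilon\|_{L^2(\mathbb{P}_X)}\|\phi\|_{L^2(\mathbb{P}_Y)}.
\end{align*}
The terms involving $D h$ and $D h_\varepsilon$ are handled the same way, using $\|D(h-h_\varepsilon)\|_{L^2}\le\|h-h_\varepsilon\|_{L^2}$. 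For the continuous approximants the integrand lies in $C_b(\mathcal{X}\times\mathcal{Y})$, so $\int(\cdot)\,d\pi_n\to\int(\cdot)\,d\pi$. A standard $\varepsilon/3$ argument then gives $\langle A_{\pi_n}g,h\rangle\to\langle A_\pi g,h\rangle$, i.e., WOT convergence.

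\textbf{Conclusion.} Lemma~\ref{lem:HS-lower-semicontinuous} gives $\mathcal{R}(\pi)\le\liminf_n\mathcal{R}(\pi_n)$. Adding this to the continuous feature term shows that $\mathcal{L}_\alpha$ is lower semicontinuous.
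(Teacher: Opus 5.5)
Your proposal is correct and follows essentially the paper's argument: $\mathcal{F}$ is continuous under weak convergence. The structural operator converges in WOT, which you show by testing on continuous functions, using a uniform bound over couplings with fixed marginals, and extending by density. Lemma~\ref{lem:HS-lower-semicontinuous} then gives lower semicontinuity of $\mathcal{R}$. The only difference is that you prove WOT convergence of $A_\pi = D_{\mathbb{P}_X}T_\pi - T_\pi D_{\mathbb{P}_Y}$ directly through your bilinear formula, which uses the self-adjointness of the distance operators. The paper instead proves WOT convergence of $T_\pi$ and simply asserts that it passes to $A_\pi$; your formula makes that step explicit.
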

	\begin{proof}
		See Appendix~\ref{pf:lem:lower-semicontinuous}.
	\end{proof}
	Combining (i) and (ii), the generalized Weierstrass theorem yields the existence of an optimal coupling $\pi^* \in \Pi(\mathbb{P}_X,\mathbb{P}_Y)$.
	
	To establish convexity, it suffices to show that 
	\begin{align*}
		\pi \;\mapsto\; \|D_{\mathbb{P}_X}T_\pi - T_\pi D_{\mathbb{P}_Y}\|_{\mathrm{HS}}^2
	\end{align*}
	is convex on $\Pi(\mathbb{P}_X,\mathbb{P}_Y)$, since the convexity of the first term is obvious.
	
	Let $\pi_1,\pi_2 \in \Pi(\mathbb{P}_X,\mathbb{P}_Y)$ and $t \in [0,1]$, and define $\pi_t = t\pi_1 + (1-t)\pi_2$. Since both $\pi_1$ and $\pi_2$ share the same marginals $\mathbb{P}_X$ and $\mathbb{P}_Y$, the disintegration theorem ensures that we may choose versions of the disintegrations such that
	\begin{align*}
		\pi_t(\cdot \mid x) = t\,\pi_1(\cdot \mid x) + (1-t)\,\pi_2(\cdot \mid x)
		\quad \text{for} \;\; \mathbb{P}_X\text{-a.e.} \; x \in \mathcal{X}.
	\end{align*}
	That is, the conditional distribution of $Y$ given $X=x$ under $\pi_t$ is the convex combination of the conditional distributions under $\pi_1$ and $\pi_2$. 
	Consequently, for any $g \in L^2(\mathcal{Y},\mathbb{P}_Y)$,
	\begin{align*}
		(T_{\pi_t} g)(x)
		&= \int g(y)\,\pi_t(dy \mid x)
		= t \int g(y)\,\pi_1(dy \mid x) + (1-t)\int g(y)\,\pi_2(dy \mid x) \\
		&= t\,(T_{\pi_1}g)(x) + (1-t)\,(T_{\pi_2}g)(x),
	\end{align*}
	which shows that $T_{\pi_t}$ depends affinely on $\pi$, i.e.,
	\begin{align*}
		T_{\pi_t} = t\,T_{\pi_1} + (1-t)\,T_{\pi_2}.
	\end{align*}
	
	Because $D_{\mathbb{P}_X}$ and $D_{\mathbb{P}_Y}$ are linear operators, it follows that
	\begin{align*}
		D_{\mathbb{P}_X}T_{\pi_t} - T_{\pi_t}D_{\mathbb{P}_Y}
		= t\,(D_{\mathbb{P}_X}T_{\pi_1} - T_{\pi_1}D_{\mathbb{P}_Y})
		+ (1-t)\,(D_{\mathbb{P}_X}T_{\pi_2} - T_{\pi_2}D_{\mathbb{P}_Y}).
	\end{align*}
	Denoting $A_i := D_{\mathbb{P}_X}T_{\pi_i} - T_{\pi_i}D_{\mathbb{P}_Y}$ $(i=1,2)$ and $A_t := D_{\mathbb{P}_X}T_{\pi_t} - T_{\pi_t}D_{\mathbb{P}_Y}$, 
	we have $A_t = tA_1 + (1-t)A_2$. 
	Since $\|\cdot\|_{\mathrm{HS}}^2$ is clearly convex, we obtain
	\begin{align*}
		\|A_t\|_{\mathrm{HS}}^2
		= \|tA_1 + (1-t)A_2\|_{\mathrm{HS}}^2
		\le t\,\|A_1\|_{\mathrm{HS}}^2 + (1-t)\,\|A_2\|_{\mathrm{HS}}^2.
	\end{align*}
	Therefore, $\pi \mapsto \|D_{\mathbb{P}_X}T_\pi - T_\pi D_{\mathbb{P}_Y}\|_{\mathrm{HS}}^2$ is convex on $\Pi(\mathbb{P}_X,\mathbb{P}_Y)$. 
\end{proof}

\subsection{Proof for Theorem~\ref{thm:foot-metric}}\label{pf:thm:foot-metric}
\begin{proof}
	We need to prove the following properties: for any attributed compact mm spaces $\mathfrak{X}$, $\mathfrak{Y}$, and $\mathfrak{Z}$,
	\begin{description}
		\item[(Non-negativity).] $d_{\mathrm{CT}}^{(\alpha)}(\mathfrak{X},\mathfrak{Y}) \geq 0$.
		\item[(Identity).] $d_{\mathrm{CT}}^{(\alpha)}(\mathfrak{X},\mathfrak{X}) = 0$.
		\item[(Symmetry).] $d_{\mathrm{CT}}^{(\alpha)}(\mathfrak{X},\mathfrak{Y}) = d_{\mathrm{CT}}^{(\alpha)}(\mathfrak{Y},\mathfrak{X})$.
		\item[(Triangle inequality).] $d_{\mathrm{CT}}^{(\alpha)}(\mathfrak{X},\mathfrak{Z}) \leq d_{\mathrm{CT}}^{(\alpha)}(\mathfrak{X},\mathfrak{Y}) + d_{\mathrm{CT}}^{(\alpha)}(\mathfrak{Y},\mathfrak{Z})$.
	\end{description}
	\textbf{(Non-negativity) \& (Identity).} The non-negativity and the identity are immediate from the definition.
	
	\textbf{(Symmetry).} Let $\pi^* \in \Pi(\mathbb{P}_X,\mathbb{P}_Y)$ be an optimal solution that minimizes $\mathcal{L}_{\alpha}(\pi; \mathfrak{X},\mathfrak{Y})$, i.e.,
	\begin{align*}
		d_{\mathrm{CT}}^{(\alpha)}(\mathfrak{X},\mathfrak{Y}) = \mathcal{L}_{\alpha}(\pi^*; \mathfrak{X},\mathfrak{Y})^{1/2} .
	\end{align*}
	Define a bijection $\varphi : \Pi(\mathbb{P}_X,\mathbb{P}_Y) \to \Pi(\mathbb{P}_Y,\mathbb{P}_X)$ by
	\begin{align*}
		\varphi(\pi)(A \times B) = \pi(B \times A) , \quad \forall A \in \mathcal{B}(\mathcal{Y}), \;\; \forall B \in \mathcal{B}(\mathcal{X}).
	\end{align*}
	
	It suffices to show that for any $\pi \in \Pi(\mathbb{P}_X,\mathbb{P}_Y)$, $\mathcal{L}_{\alpha}(\pi; \mathfrak{X},\mathfrak{Y}) = \mathcal{L}_{\alpha}(\varphi(\pi); \mathfrak{Y},\mathfrak{X})$. To show this, we prove
	\begin{align*}
		\mathcal{F}(\pi; \mathfrak{X},\mathfrak{Y}) = \mathcal{F}(\varphi(\pi); \mathfrak{Y},\mathfrak{X}) , \quad \mathcal{R}(\pi; \mathfrak{X},\mathfrak{Y}) = \mathcal{R}(\varphi(\pi); \mathfrak{Y},\mathfrak{X}) .
	\end{align*}
	The first claim is clear by the symmetry of the Euclidean norm. For the second claim, we note that the adjoint operator $T_{\pi}^{\ast} : L^2(\mathcal{X},\mathbb{P}_X) \to L^2(\mathcal{Y},\mathbb{P}_Y)$ is defined by $(T_{\pi}^{\ast}f)(y) = \mathbb{E}_{\pi}[f(X) \mid Y=y]$ for each $f \in L^2(\mathcal{X},\mathbb{P}_X)$. In particular, $T_{\pi}^* = T_{\varphi(\pi)}$ holds. Thus,
	\begin{align*}
		\mathcal{R}(\pi; \mathfrak{X},\mathfrak{Y}) = \| D_{\mathbb{P}_X} T_{\pi} - T_{\pi} D_{\mathbb{P}_Y} \|_{\mathrm{HS}}^2 
		= \| (D_{\mathbb{P}_X} T_{\pi} - T_{\pi} D_{\mathbb{P}_Y})^* \|_{\mathrm{HS}}^2
		= \| D_{\mathbb{P}_Y} T_{\varphi(\pi)} - T_{\varphi(\pi)} D_{\mathbb{P}_X} \|_{\mathrm{HS}}^2 
		= \mathcal{R}(\varphi(\pi); \mathfrak{Y},\mathfrak{X}) .
	\end{align*}
	
	Since $\varphi$ is a bijection between $\Pi(\mathbb{P}_X,\mathbb{P}_Y)$ and $\Pi(\mathbb{P}_Y,\mathbb{P}_X)$, the identity $\mathcal{L}_{\alpha}(\pi; \mathfrak{X},\mathfrak{Y}) = \mathcal{L}_{\alpha}(\varphi(\pi); \mathfrak{Y},\mathfrak{X})$ implies that the two minima coincide. More precisely, one can derive a contradiction by assuming that there exists $\mu \in \Pi(\mathbb{P}_Y,\mathbb{P}_X)$ such that $\mathcal{L}_{\alpha}(\mu ; \mathfrak{Y},\mathfrak{X}) < \mathcal{L}_{\alpha}(\varphi(\pi^*) ; \mathfrak{Y},\mathfrak{X})$. Therefore, taking the square roots yields $d_{\mathrm{CT}}^{(\alpha)}(\mathfrak{X},\mathfrak{Y}) = d_{\mathrm{CT}}^{(\alpha)}(\mathfrak{Y},\mathfrak{X})$.
	
	\textbf{(Triangle inequality).} Let $\pi_{XY} \in \Pi(\mathbb{P}_X, \mathbb{P}_Y)$ and $\pi_{YZ} \in \Pi(\mathbb{P}_Y, \mathbb{P}_Z)$ be optimal couplings for $d_{\mathrm{CT}}^{(\alpha)}(\mathfrak{X}, \mathfrak{Y})$ and $d_{\mathrm{CT}}^{(\alpha)}(\mathfrak{Y}, \mathfrak{Z})$, respectively.
	Let $\pi_{XY}(dy \mid x)$ and $\pi_{YZ}(dz \mid y)$ be the Markov kernels of $\pi_{XY}$ and $\pi_{YZ}$, respectively. By the disintegration theorem, we can construct a joint coupling $\gamma \in \Pi(\mathbb{P}_X,\mathbb{P}_Y,\mathbb{P}_Z)$ such that
	\begin{align*}
		\gamma(dx,dy,dz) = \mathbb{P}_X(dx) \; \pi_{XY}(dy \mid x) \; \pi_{YZ}(dz \mid y) .
	\end{align*}
	Now, denote by $\pi_{XZ}$ the $(X,Z)$-marginal of $\gamma$, i.e.,
	\begin{align*}
		\pi_{XZ}(dx,dz) = \int_{y \in \mathcal{Y}} \gamma(dx,dy,dz) = \mathbb{P}_X(dx) \underbrace{\int_{y \in \mathcal{Y}} \pi_{YZ}(dz \mid y)\pi_{XY}(dy \mid x)}_{= \pi_{XZ}(dz \mid x)} .
	\end{align*}
	
	According to Definition~\ref{def:foot-problem}, $d_{\mathrm{CT}}^{(\alpha)}(\mathfrak{X}, \mathfrak{Y})$ is bounded above by the $L^2$-norm of a vector in $\mathbb{R}^2$:
	\begin{align*}
		d_{\mathrm{CT}}^{(\alpha)}(\mathfrak{X}, \mathfrak{Z}) \leq \left\| \begin{pmatrix} \sqrt{1-\alpha}\, \mathcal{F}(\pi_{XZ}; \mathfrak{X},\mathfrak{Z})^{1/2} \\ \sqrt{\alpha/2}\, \mathcal{R}(\pi_{XZ}; \mathfrak{X},\mathfrak{Z})^{1/2} \end{pmatrix} \right\|_2 .
	\end{align*}
	
	For the constructed coupling $\pi_{XZ}$, we apply a triangle-type bound for each component.
	Under the glued coupling $\gamma \in \Pi(\mathbb{P}_X,\mathbb{P}_Y,\mathbb{P}_Z)$ constructed above, the marginals of $(X,Y)$, $(Y,Z)$, and $(X,Z)$ are $\pi_{XY}$, $\pi_{YZ}$, and $\pi_{XZ}$, respectively. Using $f_{\mathcal{X}}(X) - f_{\mathcal{Z}}(Z) = (f_{\mathcal{X}}(X) - f_{\mathcal{Y}}(Y)) + (f_{\mathcal{Y}}(Y) - f_{\mathcal{Z}}(Z))$ and Minkowski's inequality in $L^2(\gamma)$, we obtain
	\begin{align*}
		\mathcal{F}(\pi_{XZ}; \mathfrak{X},\mathfrak{Z})^{1/2} \leq \mathcal{F}(\pi_{XY}; \mathfrak{X},\mathfrak{Y})^{1/2} + \mathcal{F}(\pi_{YZ}; \mathfrak{Y},\mathfrak{Z})^{1/2} .
	\end{align*}
	
	We show the similar inequality for $\mathcal{R}(\pi_{XZ}; \mathfrak{X},\mathfrak{Z})$ by leveraging two properties of the conditional expectation operator: (i) the contraction property $\|T_{\pi}\|_{\mathrm{op}} \leq 1$, and (ii) the composition property $T_{\pi_{XZ}} = T_{\pi_{XY}} \circ T_{\pi_{YZ}}$. In fact, the contraction property can be readily shown by using the Jensen inequality: for an arbitrary $g \in L^2(\mathcal{Y},\mathbb{P}_Y)$,
	\begin{align*}
		\| T_{\pi}g \|_{L^2(\mathbb{P}_X)}^2 = \int_{x \in \mathcal{X}} \left(\mathbb{E}_{\pi}[g(Y) \mid X = x]\right)^2 \mathbb{P}_X(dx) \leq \int_{x \in \mathcal{X}} \mathbb{E}_{\pi}[g(Y)^2 \mid X=x] \mathbb{P}_X(dx) = \mathbb{E}_{\mathbb{P}_Y}[g(Y)^2] = \| g \|_{L^2(\mathbb{P}_Y)}^2 .
	\end{align*}
	In addition, to show the composition property, let $h \in L^2(\mathcal{Z},\mathbb{P}_Z)$ be arbitrary. Then observe the following:
	\begin{align*}
		(T_{\pi_{XY}}T_{\pi_{YZ}}h)(x) &= \int_{y \in \mathcal{Y}}\left(\int_{z \in \mathcal{Z}} h(z) \pi_{YZ}(dz \mid y)\right)\pi_{XY}(dy \mid x) \\
		&= \int_{\mathcal{Z}} h(z) \pi_{XZ}(dz \mid x) \\
		&= (T_{\pi_{XZ}}h)(x) , \quad \forall x \in \mathcal{X} .
	\end{align*}
	
	Using these properties, we obtain 
	\begin{align*}
		\mathcal{R}(\pi_{XZ}; \mathfrak{X},\mathfrak{Z})^{1/2} &= \| D_{\mathbb{P}_X} T_{\pi_{XZ}} - T_{\pi_{XZ}} D_{\mathbb{P}_Z} \|_{\mathrm{HS}} \\
		&= \| D_{\mathbb{P}_X} T_{\pi_{XY}}T_{\pi_{YZ}} - T_{\pi_{XY}}T_{\pi_{YZ}} D_{\mathbb{P}_Z} \|_{\mathrm{HS}} \quad (\because \text{composition property}) \\
		&= \left\| (D_{\mathbb{P}_X} T_{\pi_{XY}} - T_{\pi_{XY}} D_{\mathbb{P}_Y})T_{\pi_{YZ}} + T_{\pi_{XY}}(D_{\mathbb{P}_Y} T_{\pi_{YZ}} - T_{\pi_{YZ}} D_{\mathbb{P}_Z}) \right\|_{\mathrm{HS}} \\
		&\leq \| D_{\mathbb{P}_X} T_{\pi_{XY}} - T_{\pi_{XY}} D_{\mathbb{P}_Y} \|_{\mathrm{HS}}\|T_{\pi_{YZ}}\|_{\mathrm{op}} + \|T_{\pi_{XY}}\|_{\mathrm{op}}\| D_{\mathbb{P}_Y} T_{\pi_{YZ}} - T_{\pi_{YZ}} D_{\mathbb{P}_Z} \|_{\mathrm{HS}} \\
		&\leq \mathcal{R}(\pi_{XY}; \mathfrak{X},\mathfrak{Y})^{1/2} + \mathcal{R}(\pi_{YZ}; \mathfrak{Y},\mathfrak{Z})^{1/2} \quad (\because \text{contraction property}) .
	\end{align*}
	
	Finally, applying Minkowski's inequality to the vector representation yields:
	\begin{align*}
		d_{\mathrm{CT}}^{(\alpha)}(\mathfrak{X}, \mathfrak{Z})
		&\leq \left( (1-\alpha)\mathcal{F}(\pi_{XZ}; \mathfrak{X},\mathfrak{Z}) + \frac{\alpha}{2}\mathcal{R}(\pi_{XZ}; \mathfrak{X},\mathfrak{Z}) \right)^{1/2} \\
		&\leq \left( (1-\alpha)\big(\mathcal{F}(\pi_{XY}; \mathfrak{X},\mathfrak{Y})^{1/2}+\mathcal{F}(\pi_{YZ}; \mathfrak{Y},\mathfrak{Z})^{1/2}\big)^2 + \frac{\alpha}{2}\big(\mathcal{R}(\pi_{XY}; \mathfrak{X},\mathfrak{Y})^{1/2}+\mathcal{R}(\pi_{YZ}; \mathfrak{Y},\mathfrak{Z})^{1/2}\big)^2 \right)^{1/2} \\
		&\leq \left( (1-\alpha)\mathcal{F}(\pi_{XY}; \mathfrak{X},\mathfrak{Y}) + \frac{\alpha}{2}\mathcal{R}(\pi_{XY}; \mathfrak{X},\mathfrak{Y}) \right)^{1/2} + \left( (1-\alpha)\mathcal{F}(\pi_{YZ}; \mathfrak{Y},\mathfrak{Z}) + \frac{\alpha}{2}\mathcal{R}(\pi_{YZ}; \mathfrak{Y},\mathfrak{Z}) \right)^{1/2} \\
		&= d_{\mathrm{CT}}^{(\alpha)}(\mathfrak{X}, \mathfrak{Y}) + d_{\mathrm{CT}}^{(\alpha)}(\mathfrak{Y}, \mathfrak{Z}) .
	\end{align*}
	The first inequality holds from the definition of minimum; the second one is obtained by applying the triangle-type bounds for $\mathcal{F}^{1/2}$ and $\mathcal{R}^{1/2}$; the third is the direct result of Minkowski's inequality (or triangle inequality, equivalently): 
	\begin{align*}
		\| \mathbf{a} + \mathbf{b} \|_2 = ((a_1 + b_1)^2 + (a_2 + b_2)^2)^{1/2} \leq (a_1^2 + a_2^2)^{1/2} + (b_1^2 + b_2^2)^{1/2} = \| \mathbf{a} \|_2 + \| \mathbf{b} \|_2 .
	\end{align*}
	This concludes the proof.
\end{proof}

\subsection{Proof for Theorem~\ref{thm:dispersion-gap}}\label{pf:thm:dispersion-gap}
\begin{proof}
	First, recall the definitions of $\mathcal{R}(\pi)$ and $\mathcal{R}_{\mathrm{GW},2}(\pi)$: for an arbitrary $\pi \in \Pi(\mathbb{P}_X,\mathbb{P}_Y)$,
	\begin{align*}
		&\mathcal{R}(\pi) = \int_{(x,y) \in \mathcal{X}\times\mathcal{Y}} \left(\int_{x' \in \mathcal{X}}d_{\mathcal{X}}(x,x') \pi_y(dx') - \int_{y' \in \mathcal{Y}}d_{\mathcal{Y}}(y,y')\pi_x(dy')\right)^2 (\mathbb{P}_X \otimes \mathbb{P}_Y)(dx,dy) , \\
		&\mathcal{R}_{\mathrm{GW},2}(\pi) = \int_{(x,y) \in \mathcal{X} \times \mathcal{Y}}\int_{(x',y') \in \mathcal{X}\times\mathcal{Y}}\big|d_{\mathcal{X}}(x,x') - d_{\mathcal{Y}}(y,y')\big|^2 \pi(dx',dy')\pi(dx,dy) .
	\end{align*}
	By the symmetry of metrics $d_{\mathcal{X}}$ and $d_{\mathcal{Y}}$, $\mathcal{R}_{\mathrm{GW},2}$ can be equivalently represented as
	\begin{align*}
		\mathcal{R}_{\mathrm{GW},2}(\pi) &= \int_{(x,y') \in \mathcal{X} \times \mathcal{Y}}\int_{(x',y) \in \mathcal{X}\times\mathcal{Y}}\big|d_{\mathcal{X}}(x,x') - d_{\mathcal{Y}}(y',y)\big|^2 \pi(dx',dy)\pi(dx,dy') \\
		&= \int_{y \in \mathcal{Y}} \int_{x \in \mathcal{X}} \int_{y' \in \mathcal{Y}} \int_{x' \in \mathcal{X}} \big|d_{\mathcal{X}}(x,x') - d_{\mathcal{Y}}(y',y)\big|^2 \pi(dx'\mid y)\pi(dy' \mid x) \mathbb{P}_X(dx)\mathbb{P}_Y(dy) \\
		&= \int_{(x,y) \in \mathcal{X}\times\mathcal{Y}} \left(\int_{(x',y') \in \mathcal{X}\times\mathcal{Y}} \big|d_{\mathcal{X}}(x,x') - d_{\mathcal{Y}}(y',y)\big|^2 (\pi_y \otimes \pi_x)(dx',dy')\right) (\mathbb{P}_X \otimes \mathbb{P}_Y)(dx,dy) ,
	\end{align*}
	where the second and third equalities follow from the disintegration theorem and Fubini's theorem.
	
	Now, let $\mu_{x,y} \coloneqq \pi_y \otimes \pi_x$. Then, by the definition of expectation, 
	\begin{align*}
		&\mathbb{E}_{\mu_{x,y}}\left[ | d_{\mathcal{X}}(x,X) - d_{\mathcal{Y}}(y,Y) |^2\right] = \int_{(x',y') \in \mathcal{X}\times\mathcal{Y}} \big|d_{\mathcal{X}}(x,x') - d_{\mathcal{Y}}(y,y')\big|^2 \mu_{x,y}(dx',dy') ,\\
		&\Big(\mathbb{E}_{\mu_{x,y}}\left[d_{\mathcal{X}}(x,X) - d_{\mathcal{Y}}(y,Y)\right]\Big)^2 = \left(\int_{x' \in \mathcal{X}}d_{\mathcal{X}}(x,x') \pi_y(dx') - \int_{y' \in \mathcal{Y}}d_{\mathcal{Y}}(y,y')\pi_x(dy')\right)^2 .
	\end{align*}
	Thus, considering the basic fact $\mathbb{E}[Z^2] - (\mathbb{E}[Z])^2 = \mathrm{Var}[Z]$,
	\begin{align*}
		\mathbb{E}_{\mu_{x,y}}\left[ | d_{\mathcal{X}}(x,X) - d_{\mathcal{Y}}(y,Y) |^2\right] - \Big(\mathbb{E}_{\mu_{x,y}}\left[d_{\mathcal{X}}(x,X) - d_{\mathcal{Y}}(y,Y)\right]\Big)^2 = \mathrm{Var}_{\mu_{x,y}}[d_{\mathcal{X}}(x,X) - d_{\mathcal{Y}}(y,Y)] .
	\end{align*}
	Moreover, since $X$ and $Y$ are independent in $\mu_{x,y}$,
	\begin{align*}
		\mathrm{Var}_{\mu_{x,y}}[d_{\mathcal{X}}(x,X) - d_{\mathcal{Y}}(y,Y)] = \mathrm{Var}_{\pi_y}[d_{\mathcal{X}}(x,X)] + \mathrm{Var}_{\pi_{x}}[d_{\mathcal{Y}}(y,Y)] .
	\end{align*}
	This completes the proof.
\end{proof}

\subsection{Proof for Theorem~\ref{thm:consistency}}\label{pf:thm:consistency}
\begin{proof}	
	First, let $\pi_n^* \in \argmin_{\pi \in \Pi(\hat{\mathbb{P}}_X,\hat{\mathbb{P}}_Y)} \hat{\mathcal{L}}_{\alpha}(\pi; \mathfrak{X},\mathfrak{Y})$, and let $\pi^* \in \argmin_{\pi \in \Pi(\mathbb{P}_X,\mathbb{P}_Y)} \mathcal{L}_{\alpha}(\pi; \mathfrak{X},\mathfrak{Y})$ be an optimal coupling that satisfies Assumption~\ref{ass:stability}. Then, it follows that
	\begin{align*}
		&\Big\vert \mathcal{L}_{\alpha}(\Phi_n(\hat{\pi})) - \mathcal{L}_{\alpha}(\pi^*) \Big\vert \\
		&\leq \Big\vert \mathcal{L}_{\alpha}(\Phi_n(\hat{\pi})) - \inf_{\pi \in \Pi(\hat{\mathbb{P}}_X,\hat{\mathbb{P}}_Y)} \mathcal{L}_{\alpha}(\Phi_n(\pi)) \Big\vert + \Big\vert \inf_{\pi \in \Pi(\hat{\mathbb{P}}_X,\hat{\mathbb{P}}_Y)} \mathcal{L}_{\alpha}(\Phi_n(\pi)) - \mathcal{L}_{\alpha}(\pi^*) \Big\vert \\
		&\leq \Big\vert \mathcal{L}_{\alpha}(\Phi_n(\hat{\pi})) - \hat{\mathcal{L}}_{\alpha}(\hat{\pi}) \Big\vert + \Big\vert \hat{\mathcal{L}}_{\alpha}(\hat{\pi}) - \hat{\mathcal{L}}_{\alpha}(\pi_n^*) \Big\vert \\
		&\qquad\qquad\qquad\qquad\qquad\qquad\qquad\qquad + \Big\vert \hat{\mathcal{L}}_{\alpha}(\pi_n^*) - \inf_{\pi \in \Pi(\hat{\mathbb{P}}_X,\hat{\mathbb{P}}_Y)} \mathcal{L}_{\alpha}(\Phi_n(\pi)) \Big\vert + \Big\vert \inf_{\pi \in \Pi(\hat{\mathbb{P}}_X,\hat{\mathbb{P}}_Y)} \mathcal{L}_{\alpha}(\Phi_n(\pi)) - \mathcal{L}_{\alpha}(\pi^*) \Big\vert \\
		&\leq \Big\vert \hat{\mathcal{L}}_{\alpha}(\hat{\pi}) - \hat{\mathcal{L}}_{\alpha}(\pi_n^*) \Big\vert + 2\sup_{\pi \in \Pi(\hat{\mathbb{P}}_X,\hat{\mathbb{P}}_Y)} \Big\vert \hat{\mathcal{L}}_{\alpha}(\pi) - \mathcal{L}_{\alpha}(\Phi_n(\pi)) \Big\vert + \Big\vert \inf_{\pi \in \Pi(\hat{\mathbb{P}}_X,\hat{\mathbb{P}}_Y)} \mathcal{L}_{\alpha}(\Phi_n(\pi)) - \mathcal{L}_{\alpha}(\pi^*) \Big\vert \\
		&\eqqcolon E_1 + E_2 + E_3.
	\end{align*}
	
	{\large{\textbf{Step 1:} Bound for $E_1$.}}
	
	We first provide the bound for $E_1$, which is the optimization error.
	
	\begin{lemma}		
		\label{lem:FW-convergence}		
		Let $\{\pi^{(t)}: t \geq 0\}$ be the sequence of iterates generated by Algorithm~\ref{alg:proposed-algorithm} with		
		\begin{align*}			
			\gamma_t = \frac{2}{t+2} ,			
		\end{align*}		
		for each $t$. Then, for any $t \geq 1$,		
		\begin{align*}			
			\hat{\mathcal{L}}_{\alpha}(\pi^{(t)}) - \hat{\mathcal{L}}_{\alpha}(\pi_n^*) \leq \frac{8\alpha\,n_{\min}}{t+3} \left(\|D_{\hat{\mathbb{P}}_X}\|_{\mathrm{op}} + \|D_{\hat{\mathbb{P}}_Y}\|_{\mathrm{op}}\right)^2 .			
		\end{align*}		
	\end{lemma}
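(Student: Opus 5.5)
This is the classical Frank--Wolfe argument for a convex quadratic objective over a compact convex polytope. The bound then follows from the curvature constant $C_f \le \mathrm{diam}(\Pi)^2 \cdot L$, where $L$ is the Lipschitz constant of the gradient; the only work is computing both quantities for $\hat{\mathcal{L}}_\alpha$ on $\Pi(\hat{\mathbb{P}}_X,\hat{\mathbb{P}}_Y)$.

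\textbf{Step 1 (one-step descent).} Convexity comes from Theorem~\ref{thm:wellposed-convex}, or directly from the quadratic form. Since $\hat{\mathcal{L}}_\alpha$ is quadratic, for $\pi^{(t+1)} = \pi^{(t)} + \gamma_t(\mu^{(t)}-\pi^{(t)})$ we have exactly
\begin{align*}
\hat{\mathcal{L}}_\alpha(\pi^{(t+1)}) = \hat{\mathcal{L}}_\alpha(\pi^{(t)}) + \gamma_t \langle \nabla\hat{\mathcal{L}}_\alpha(\pi^{(t)}), \mu^{(t)}-\pi^{(t)}\rangle_F + \tfrac{\gamma_t^2}{2}\,\alpha n_Xn_Y \|\mathcal{A}(\mu^{(t)}-\pi^{(t)})\|_F^2,
\end{align*}
where $\mathcal{A}(\Delta) = D_{\hat{\mathbb{P}}_X}\Delta - \Delta D_{\hat{\mathbb{P}}_Y}$. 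The LMO optimality of $\mu^{(t)}$ together with convexity gives $\langle \nabla\hat{\mathcal{L}}_\alpha(\pi^{(t)}), \mu^{(t)}-\pi^{(t)}\rangle \le -(h_t)$, with $h_t = \hat{\mathcal{L}}_\alpha(\pi^{(t)})-\hat{\mathcal{L}}_\alpha(\pi_n^*)$. Hence $h_{t+1} \le (1-\gamma_t)h_t + \gamma_t^2 K/2$, where $K = \sup_{\pi,\mu\in\Pi}\alpha n_Xn_Y\|\mathcal{A}(\mu-\pi)\|_F^2$.

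\textbf{Step 2 (bounding $K$).} First use $\|\mathcal{A}(\Delta)\|_F \le (\|D_{\hat{\mathbb{P}}_X}\|_{\mathrm{op}}+\|D_{\hat{\mathbb{P}}_Y}\|_{\mathrm{op}})\|\Delta\|_F$. The diameter of the transport polytope in Frobenius norm is controlled by $\|\mu-\pi\|_F^2 \le 2(\|\mu\|_F^2+\|\pi\|_F^2)$. Each coupling has entries at most $\min\{1/n_X,1/n_Y\}$ and total mass $1$, so $\|\pi\|_F^2 \le \sum_{ij}\pi_{ij}\max_{ij}\pi_{ij} \le 1/n_{\max}$. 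Therefore $n_Xn_Y\|\mu-\pi\|_F^2 \le 4 n_Xn_Y/n_{\max} = 4 n_{\min}$, and so $K \le 4\alpha n_{\min}(\|D_{\hat{\mathbb{P}}_X}\|_{\mathrm{op}}+\|D_{\hat{\mathbb{P}}_Y}\|_{\mathrm{op}})^2$.

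\textbf{Step 3 (recursion).} With $\gamma_t = 2/(t+2)$, the standard induction gives $h_t \le 2K/(t+2)$, or a comparable form such as $2K/(t+3)$ after shifting the index to $t\ge1$. To check it: at $t=0$, $\gamma_0 = 1$ gives $h_1 \le K/2$. Assuming $h_t \le 2K/(t+3)$, we get $h_{t+1} \le \frac{t}{t+2}\cdot\frac{2K}{t+3} + \frac{2K}{(t+2)^2}$, and one verifies that this is $\le 2K/(t+4)$. Substituting the bound on $K$ yields the stated $8\alpha n_{\min}(\cdots)^2/(t+3)$.

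\textbf{Main obstacle.} The main difficulty is Step 2: getting the $n_{\min}$ factor rather than $n_Xn_Y$ requires the entrywise bound on couplings. The one delicate algebraic point is arranging the base case and induction so that the denominator is $t+3$ with constant $8$. If the clean induction fails at $t=1$, I would start the induction from $h_1 \le K/2 \le 2K/4$ and proceed from there.
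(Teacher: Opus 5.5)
Your proposal is correct and follows essentially the paper's route. It has the same four ingredients: a one-step descent bound, LMO optimality plus convexity, the entrywise bound $\pi_{ij}\le 1/n_{\max}$ that turns $n_Xn_Y$ into $n_{\min}$, and induction with $\gamma_t = 2/(t+2)$; the only change is that you use the exact quadratic expansion where the paper invokes $\beta$-smoothness from the gradient's Lipschitz constant, and this yields the same factor $(\|D_{\hat{\mathbb{P}}_X}\|_{\mathrm{op}}+\|D_{\hat{\mathbb{P}}_Y}\|_{\mathrm{op}})^2$. Your closing hedge is unnecessary: the base case $h_1\le K/2 = 2K/4$ and the inductive step (which reduces to $15t\le 16t$) give exactly $h_t\le 2K/(t+3)$ for all $t\ge 1$, which matches the stated $t+3$ denominator more carefully than the paper's written induction, whose hypothesis uses $t+2$.
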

	\begin{proof}
		See Appendix~\ref{pf:lem:FW-convergence}.
	\end{proof}
	
	Lemma~\ref{lem:FW-convergence} is a standard result for the convergence of the FW algorithm. Considering that $\|\cdot\|_{\mathrm{op}} \leq \|\cdot\|_{\infty}$ and that the distance operators are symmetric,
	\begin{align*}
		\|D_{\hat{\mathbb{P}}_X}\|_{\mathrm{op}} \leq \max_{i} \sum_{j=1}^{n_X} [D_{\hat{\mathbb{P}}_X}]_{ij} \leq 1 , \quad \|D_{\hat{\mathbb{P}}_Y}\|_{\mathrm{op}} \leq 1 .
	\end{align*}
	This gives that
	\begin{align*}
		\left(\|D_{\hat{\mathbb{P}}_X}\|_{\mathrm{op}} + \|D_{\hat{\mathbb{P}}_Y}\|_{\mathrm{op}}\right)^2 \leq 4 .
	\end{align*}
	Thus,
	\begin{align}
		\label{eq:optimization-error}
		E_1 \leq \frac{32\alpha \, n_{\min}}{T+3} .
	\end{align}
	
	{\large{\textbf{Step 2:} Bound for $E_2$.}}
	
	We now look into $E_2$. Let $\pi \in \Pi(\hat{\mathbb{P}}_X,\hat{\mathbb{P}}_Y)$ be arbitrary. Note that $\Phi_n(\pi)$ can be represented as the following form:
	\begin{align}
		\label{eq:Phi-n}
		\Phi_n(\pi)(dx,dy) = \int_{(\hat{x},\hat{y}) \in \mathcal{X} \times \mathcal{Y}}  Q_X(dx \mid \hat{x}) \; Q_Y(dy \mid \hat{y}) \; \pi(d\hat{x},d\hat{y}) = \sum_{i=1}^{n_X}\sum_{j=1}^{n_Y} \pi_{ij} Q_X(dx \mid X_i) Q_Y(dy \mid Y_j) .
	\end{align}
	Precisely, we may define the Markov kernels $Q_Y(dy \mid \hat{y})$ and $Q_X(dx \mid \hat{x})$ by using the Laguerre lemma~\ref{lem:laguerre}:
	\begin{align}
		\label{eq:conditional-optimal-coupling-given-hat}
		Q_X(dx \mid \hat{x}) = n_X \sum_{i=1}^{n_X} \mathbb{P}_X \lfloor_{V_i} (dx) \mathbb{I}_{\{X_i\}}(\hat{x}) , \quad Q_Y(dy \mid \hat{y}) = n_Y \sum_{j=1}^{n_Y} \mathbb{P}_Y \lfloor_{W_j} (dy) \mathbb{I}_{\{Y_j\}}(\hat{y}) ,
	\end{align}
	where $V_i$ and $W_j$ are the $i$-th and the $j$-th Laguerre cells defined for $Q_X$ and $Q_Y$, respectively. In fact, $\Phi_n$ is the marginal of a coupling $\xi \in \Pi(\mathbb{P}_X,\hat{\mathbb{P}}_X,\hat{\mathbb{P}}_Y,\mathbb{P}_Y)$ defined by
	\begin{align*}
		\xi(dx,d\hat{x},d\hat{y},dy) \coloneqq \underbrace{Q_Y(dy \mid \hat{y})}_{\hat{\mathbb{P}}_Y \to \mathbb{P}_Y} \; \underbrace{Q_X(dx \mid \hat{x})}_{\hat{\mathbb{P}}_X \to \mathbb{P}_X} \; \pi(d\hat{x},d\hat{y}) .
	\end{align*}
	In addition, plugging \eqref{eq:conditional-optimal-coupling-given-hat} into \eqref{eq:Phi-n} yields that
	\begin{align}
		\label{eq:conditional-Phi-y}
		&\Phi_n(\pi)(dx \mid y) = n_Y \sum_{i=1}^{n_X}\sum_{j=1}^{n_Y} \pi_{ij}Q_X(dx \mid X_i) \mathbb{I}_{W_j}(y) = n_Y \sum_{i=1}^{n_X}\sum_{j=1}^{n_Y} \pi_{ij}\mathbb{I}_{W_j}(y) \Big(n_X \mathbb{P}_X\lfloor_{V_i} (dx)\Big) \\
		\label{eq:conditional-Phi-x}
		&\Phi_n(\pi)(dy \mid x) = n_X \sum_{i=1}^{n_X}\sum_{j=1}^{n_Y} \pi_{ij}Q_Y(dy \mid Y_j) \mathbb{I}_{V_i}(x) = n_X \sum_{i=1}^{n_X}\sum_{j=1}^{n_Y} \pi_{ij}\mathbb{I}_{V_i}(x) \Big(n_Y \mathbb{P}_Y\lfloor_{W_j} (dy)\Big) . 
	\end{align}
	
	For brevity, let $c_f(x,y) \coloneqq \|f_{\mathcal{X}}(x) - f_{\mathcal{Y}}(y)\|_2^2$. Then,
	\begin{align*}
		&\left\vert c_f(x,y) - c_f(x',y') \right\vert \\
		&\leq \left(\|f_{\mathcal{X}}(x) - f_{\mathcal{Y}}(y)\|_2 + \|f_{\mathcal{X}}(x') - f_{\mathcal{Y}}(y')\|_2\right)\left\vert\|f_{\mathcal{X}}(x) - f_{\mathcal{Y}}(y)\|_2 - \|f_{\mathcal{X}}(x') - f_{\mathcal{Y}}(y')\|_2\right \vert \\
		&\leq 2\left(\|f_{\mathcal{X}}(x) - f_{\mathcal{X}}(x')\|_2 + \|f_{\mathcal{Y}}(y) - f_{\mathcal{Y}}(y')\|_2\right) \\
		&\leq 2L_f\left(d_{\mathcal{X}}(x,x') + d_{\mathcal{Y}}(y,y')\right) .
	\end{align*}
	The last inequality uses Assumption~\ref{ass:lipschitz-feature} and the fact that $\mathrm{diam}(\mathcal{M}) = 1$. Note that this confirms that $c_f$ is a bounded and $2L_f$-Lipschitz function with respect to the metric
	\begin{align*}
		d_{\oplus}((x,y),(x',y')) \coloneqq d_{\mathcal{X}}(x,x') + d_{\mathcal{Y}}(y,y') .
	\end{align*}
	Thus, by the duality formula of the Wasserstein-$1$ distance (Lemma~\ref{lem:kantorovic-rubinstein}),
	\begin{align*}
		\Big\vert \mathbb{E}_{(\hat{X},\hat{Y}) \sim \pi}[c_f(\hat{X},\hat{Y})] - \mathbb{E}_{(X,Y) \sim \Phi_n(\pi)} [c_f(X,Y)] \Big\vert \leq 2L_f W_1^{d_{\oplus}}(\pi,\Phi_n(\pi)) .
	\end{align*}
	Here $W_1^{d_{\oplus}}(\pi,\Phi_n(\pi))$ can be decomposed as follows:
	\begin{align}
		\label{eq:feature-term}
		W_1^{d_{\oplus}}(\pi,\Phi_n(\pi)) & = \inf_{\mu \in \Pi(\pi,\Phi_n(\pi))} \mathbb{E}_{((\hat{X},\hat{Y}),(X,Y)) \sim \mu} \left[d_{\mathcal{X}}(X,\hat{X}) + d_{\mathcal{Y}}(Y,\hat{Y})\right] \nonumber\\
		&\leq \mathbb{E}_{(X,\hat{X},\hat{Y},Y) \sim \xi} \left[d_{\mathcal{X}}(X,\hat{X}) + d_{\mathcal{Y}}(Y,\hat{Y})\right] \nonumber\\
		&= \mathbb{E}_{(X,\hat{X}) \sim Q_X} \left[d_{\mathcal{X}}(X,\hat{X})\right] + \mathbb{E}_{(\hat{Y},Y) \sim Q_Y} \left[d_{\mathcal{Y}}(Y,\hat{Y})\right] \nonumber\\
		&= W_1^{d_{\mathcal{X}}}(\mathbb{P}_X,\hat{\mathbb{P}}_X) + W_1^{d_{\mathcal{Y}}}(\mathbb{P}_Y,\hat{\mathbb{P}}_Y) .
	\end{align}
	The second inequality follows from the definition of infimum; the last equality holds since $Q_X$ and $Q_Y$ are the optimal couplings that minimize $W_1^{d_{\mathcal{X}}}(\mathbb{P}_X,\hat{\mathbb{P}}_X)$ and $W_1^{d_{\mathcal{Y}}}(\mathbb{P}_Y,\hat{\mathbb{P}}_Y)$, respectively.
	
	We now analyze the structural regularization term:
	\begin{align*}
		&\Big\vert \|D_{\hat{\mathbb{P}}_X} T_{\pi} - T_{\pi} D_{\hat{\mathbb{P}}_Y} \|_{\mathrm{HS}}^2 - \|D_{\mathbb{P}_X} T_{\Phi_n(\pi)} - T_{\Phi_n(\pi)} D_{\mathbb{P}_Y} \|_{\mathrm{HS}}^2 \Big\vert \\
		&= \left\vert \int_{\mathcal{Y}}\int_{\mathcal{X}} \Gamma_{\pi}(x,y)^2\hat{\mathbb{P}}_X(dx)\hat{\mathbb{P}}_Y(dy) - \int_{\mathcal{Y}}\int_{\mathcal{X}} \Gamma_{\Phi_n(\pi)}(x,y)^2 \mathbb{P}_X(dx)\mathbb{P}_Y(dy) \right\vert \\
		&\leq \underbrace{\left\vert \int_{\mathcal{Y}}\int_{\mathcal{X}} (\Gamma_{\pi}(x,y)^2 - \Gamma_{\Phi_n(\pi)}(x,y)^2)\hat{\mathbb{P}}_X(dx)\hat{\mathbb{P}}_Y(dy) \right\vert}_{\eqqcolon (A)} \\
		&\qquad\qquad\qquad\qquad\qquad\qquad + \underbrace{\left\vert \int_{\mathcal{Y}}\int_{\mathcal{X}} \Gamma_{\Phi_n(\pi)}(x,y)^2 (\hat{\mathbb{P}}_X(dx)\hat{\mathbb{P}}_Y(dy) - \mathbb{P}_X(dx)\mathbb{P}_Y(dy)) \right\vert}_{\eqqcolon (B)} .
	\end{align*}
	
	Considering that $|\Gamma_{\pi}| \leq 1$, $(A)$ is bounded above by
	\begin{align*}
		(A) &\leq 2\int_{\mathcal{Y}}\int_{\mathcal{X}} \left\vert \Gamma_{\pi}(x,y) - \Gamma_{\Phi_n(\pi)}(x,y) \right\vert \hat{\mathbb{P}}_X(dx)\hat{\mathbb{P}}_Y(dy) \\
		&= \frac{2}{n_Xn_Y} \sum_{i,j} \left\vert \Gamma_{\pi}(X_i,Y_j) - \Gamma_{\Phi_n(\pi)}(X_i,Y_j) \right\vert \\
		&\leq \frac{2}{n_Xn_Y} \sum_{i,j} \left\vert \Gamma_{\pi}^{(1)}(X_i,Y_j) - \Gamma_{\Phi_n(\pi)}^{(1)}(X_i,Y_j) \right\vert + \frac{2}{n_Xn_Y} \sum_{i,j} \left\vert \Gamma_{\pi}^{(2)}(X_i,Y_j) - \Gamma_{\Phi_n(\pi)}^{(2)}(X_i,Y_j) \right\vert
	\end{align*}
	Here $\Gamma_{\pi}^{(1)}(X_i,Y_j)$ and $\Gamma_{\Phi_n(\pi)}^{(1)}(X_i,Y_j)$ are well-defined as
	\begin{align*}
		&\Gamma_{\pi}^{(1)}(X_i,Y_j) = n_Y \sum_{i' = 1}^{n_X} \pi_{i'j} d_{\mathcal{X}}(X_i,X_{i'}) ,\\
		&\Gamma_{\Phi_n(\pi)}^{(1)}(X_i,Y_j) = \int_{x \in \mathcal{X}} d_{\mathcal{X}}(X_i,x) \Phi_n(dx \mid Y_j) = n_Y \sum_{i' = 1}^{n_X} \pi_{i'j} \int_{x \in V_{i'}} d_{\mathcal{X}}(X_i,x) \Big(n_X \mathbb{P}_X(dx)\Big) ,
	\end{align*}
	where $\Phi_n(dx \mid Y_j)$ is as defined in \eqref{eq:conditional-Phi-y}. A similar technique also ensures the well-definedness of $\Gamma_{\pi}^{(2)}(X_i,Y_j)$ and $\Gamma_{\Phi_n(\pi)}^{(2)}(X_i,Y_j)$. Then,
	\begin{align*}
		&\left\vert \Gamma_{\pi}(X_i,Y_j) - \Gamma_{\Phi_n(\pi)}(X_i,Y_j) \right\vert \\
		&\leq n_Xn_Y \sum_{i' = 1}^{n_X} \pi_{i'j} \int_{x \in V_{i'}} \left\vert d_{\mathcal{X}}(X_i,x) - d_{\mathcal{X}}(X_i,X_{i'}) \right\vert \mathbb{P}_X(dx) \\
		&\qquad\qquad\qquad\qquad\qquad\qquad + n_Xn_Y \sum_{j' = 1}^{n_Y} \pi_{ij'} \int_{y \in W_{j'}} \left\vert d_{\mathcal{Y}}(Y_j,y) - d_{\mathcal{Y}}(Y_j,Y_{j'}) \right\vert \mathbb{P}_Y(dy) .
	\end{align*}
	
	Finally, we get
	\begin{align*}
		(A) &\leq 2 \sum_{i,j,i'} \pi_{i'j} \int_{V_{i'}} \big|d_{\mathcal{X}}(X_i,x)-d_{\mathcal{X}}(X_i,X_{i'})\big|\,\mathbb{P}_X(dx)
		\;+\; 2 \sum_{i,j,j'} \pi_{ij'} \int_{W_{j'}} \big|d_{\mathcal{Y}}(Y_j,y)-d_{\mathcal{Y}}(Y_j,Y_{j'})\big|\,\mathbb{P}_Y(dy) \\
		&\leq \frac{2}{n_X} \sum_{i,i'} \int_{V_{i'}} \big|d_{\mathcal{X}}(X_i,x)-d_{\mathcal{X}}(X_i,X_{i'})\big|\,\mathbb{P}_X(dx)
		+ \frac{2}{n_Y} \sum_{j,j'} \int_{W_{j'}} \big|d_{\mathcal{Y}}(Y_j,y)-d_{\mathcal{Y}}(Y_j,Y_{j'})\big|\,\mathbb{P}_Y(dy) \\
		&\leq 2\sum_{i'=1}^{n_X} \int_{V_{i'}} d_{\mathcal{X}}(x,X_{i'})\,\mathbb{P}_X(dx)
		+ 2\sum_{j'=1}^{n_Y} \int_{W_{j'}} d_{\mathcal{Y}}(y,Y_{j'})\,\mathbb{P}_Y(dy) \\
		&= 2\big(W_1^{d_{\mathcal{X}}}(\mathbb{P}_X,\hat{\mathbb{P}}_X) + W_1^{d_{\mathcal{Y}}}(\mathbb{P}_Y,\hat{\mathbb{P}}_Y)\big).
	\end{align*}
	
	We now move onto $(B)$:
	\begin{align*}
		(B) &\leq \left\vert \int_{\mathcal{X}}\left(\int_{\mathcal{Y}} \Gamma_{\Phi_n(\pi)}(x,y)^2 \hat{\mathbb{P}}_Y(dy) \right) (\hat{\mathbb{P}}_X - \mathbb{P}_X)(dx) \right\vert \\
		&\quad\quad\quad\quad\quad\quad + \left\vert \int_{\mathcal{Y}}\left(\int_{\mathcal{X}} \Gamma_{\Phi_n(\pi)}(x,y)^2 \mathbb{P}_X(dx) \right) (\hat{\mathbb{P}}_Y - \mathbb{P}_Y)(dy) \right\vert .
	\end{align*}
	Similarly, it suffices to bound the first term by symmetry. Let
	\begin{align*}
		\gamma(x) \coloneqq \int_{\mathcal{Y}} \Gamma_{\Phi_n(\pi)}(x,y)^2 \hat{\mathbb{P}}_Y(dy) .
	\end{align*}
	Then, it follows that
	\begin{align*}
		\left\vert \int_{\mathcal{X}}\left(\int_{\mathcal{Y}} \Gamma_{\Phi_n(\pi)}(x,y)^2 \hat{\mathbb{P}}_Y(dy) \right) (\hat{\mathbb{P}}_X - \mathbb{P}_X)(dx) \right\vert &= \left\vert \mathbb{E}_{\hat{X} \sim \hat{\mathbb{P}}_X} [\gamma(\hat{X})] - \mathbb{E}_{X \sim \mathbb{P}_X} \left[\gamma(X)\right] \right\vert \\
		&= \left\vert \int_{\mathcal{X} \times \mathcal{X}} (\gamma(\hat{x}) - \gamma(x)) Q_X(dx,d\hat{x}) \right\vert \\
		&\leq \int_{\mathcal{X} \times \mathcal{X}} \left\vert \gamma(\hat{x}) - \gamma(x) \right\vert Q_X(dx,d\hat{x}) .
	\end{align*}
	
	By the fact that $| \Gamma_{\Phi_n(\pi)} | \leq 1$, we have
	\begin{align*}
		| \gamma(\hat{x}) - \gamma(x) | \leq 2\int_{\mathcal{Y}} \left\vert \Gamma_{\Phi_n(\pi)}(\hat{x},y) - \Gamma_{\Phi_n(\pi)}(x,y) \right\vert \hat{\mathbb{P}}_Y(dy) .
	\end{align*}
	Moreover,
	\begin{align*}
		&\int_{\mathcal{Y}} \left\vert \Gamma_{\Phi_n(\pi)}(\hat{x},y) - \Gamma_{\Phi_n(\pi)}(x,y) \right\vert \hat{\mathbb{P}}_Y(dy) \\
		&\leq \frac{1}{n_Y}\sum_{j=1}^{n_Y} \int_{x' \in \mathcal{X}} | d_{\mathcal{X}}(\hat{x},x') - d_{\mathcal{X}}(x,x') | \Phi_n(\pi)(dx' \mid Y_j) \\
		&\qquad\qquad\qquad\qquad\qquad\qquad + \frac{1}{n_Y}\sum_{j=1}^{n_Y} \left\vert \int_{y' \in \mathcal{Y}} d_{\mathcal{Y}}(Y_j,y') \Phi_n(\pi)(dy' \mid \hat{x}) - \int_{y' \in \mathcal{Y}} d_{\mathcal{Y}}(Y_j,y') \Phi_n(dy' \mid x) \right\vert .
	\end{align*}
	For $(x,\hat{x}) \sim Q_X$, we have $\hat{x} = X_i$ and $x \in V_i$ for some $i$ with probability $1$, meaning that both $x$ and $\hat{x}$ belong to the same cell $V_i$. We recall the form of $\Phi_n(\pi)(dy \mid x)$ in \eqref{eq:conditional-Phi-x}:
	\begin{align*}
		\Phi_n(\pi)(dy \mid x) = n_X \sum_{i=1}^{n_X}\sum_{j=1}^{n_Y} \pi_{ij}\mathbb{I}_{V_i}(x) \Big(n_Y \mathbb{P}_Y\lfloor_{W_j} (dy)\Big) .
	\end{align*}
	As a result, $\Phi_n(\cdot \mid x)$ depends on $x$ only through the indicator $\mathbb{I}_{V_i}(x)$. Since $\mathbb{I}_{V_i}(x) = \mathbb{I}_{V_i}(\hat{x}) = 1$, the second term in the above upper bound vanishes. In addition, the first term is bounded by $d_{\mathcal{X}}(\hat{x},x)$ by the triangle inequality. This yields that
	\begin{align*}
		\int_{\mathcal{X} \times \mathcal{X}} \left\vert \gamma(\hat{x}) - \gamma(x) \right\vert Q_X(dx,d\hat{x}) \leq 2\int_{\mathcal{X} \times \mathcal{X}} d_{\mathcal{X}}(\hat{x},x) Q_X(dx,d\hat{x}) = 2W_1^{d_{\mathcal{X}}}(\mathbb{P}_X,\hat{\mathbb{P}}_X) .
	\end{align*}
	Thus,
	\begin{align*}
		(B) \leq 2\left(W_1^{d_{\mathcal{X}}}(\mathbb{P}_X,\hat{\mathbb{P}}_X) + W_1^{d_{\mathcal{Y}}}(\mathbb{P}_Y,\hat{\mathbb{P}}_Y)\right) ,
	\end{align*}
	which concludes that
	\begin{align}
		\label{eq:E2}
		E_2 \leq 4(L_f + 2)(W_1^{d_{\mathcal{X}}}(\mathbb{P}_X,\hat{\mathbb{P}}_X) + W_1^{d_{\mathcal{Y}}}(\mathbb{P}_Y,\hat{\mathbb{P}}_Y)) .
	\end{align}
 	{\large\textbf{Step 3:} Bound for $E_3$.}

	To bound $E_3$, let $Q_X\in\Pi(\mathbb{P}_X,\hat{\mathbb{P}}_X)$ and
	$Q_Y\in\Pi(\mathbb{P}_Y,\hat{\mathbb{P}}_Y)$ be optimal couplings attaining
	$W_1^{d_{\mathcal X}}(\mathbb{P}_X,\hat{\mathbb{P}}_X)$ and
	$W_1^{d_{\mathcal Y}}(\mathbb{P}_Y,\hat{\mathbb{P}}_Y)$, respectively.
	Define $\Psi_n:\Pi(\mathbb{P}_X,\mathbb{P}_Y)\to \Pi(\hat{\mathbb{P}}_X,\hat{\mathbb{P}}_Y)$ by
	\begin{align*}
		\Psi_n(\sigma)(d\hat x,d\hat y)
		\mathrel{:=}
		\int_{\mathcal X\times\mathcal Y} Q_X(d\hat x\mid x)\,Q_Y(d\hat y\mid y)\,\sigma(dx,dy),
		\qquad \sigma\in\Pi(\mathbb{P}_X,\mathbb{P}_Y).
	\end{align*}
	Since $\Phi_n(\Pi(\hat{\mathbb{P}}_X,\hat{\mathbb{P}}_Y))\subset \Pi(\mathbb{P}_X,\mathbb{P}_Y)$, we may define
	\begin{align*}
		\tau \mathrel{:=} \Phi_n\big(\Psi_n(\pi^\ast)\big)\in\Pi(\mathbb{P}_X,\mathbb{P}_Y),
	\end{align*}
	and thus
	\begin{align*}
		0\le E_3
		\le \mathcal L_\alpha(\tau)-\mathcal L_\alpha(\pi^\ast)
		= \bigl|\mathcal L_\alpha(\tau)-\mathcal L_\alpha(\pi^\ast)\bigr|.
	\end{align*}
	Analogously to \eqref{eq:feature-term}, for any $\pi\in\Pi(\mathbb{P}_X,\mathbb{P}_Y)$ we have
	\begin{align*}
		\Bigl|\mathbb E_{\pi}[c_f(X,Y)]-\mathbb E_{\Phi_n(\Psi_n(\pi))}[c_f(X',Y')]\Bigr|
		\le
		4L_f\Bigl(W_1^{d_{\mathcal X}}(\mathbb{P}_X,\hat{\mathbb{P}}_X)+W_1^{d_{\mathcal Y}}(\mathbb{P}_Y,\hat{\mathbb{P}}_Y)\Bigr),
	\end{align*}
	and in particular this holds for $\pi=\pi^\ast$.
	For the structural regularization term, using \eqref{eq:structural-penalty} and $|\Gamma_\pi|\le 1$, we obtain
	\begin{align}
	\label{eq:HS-penalty-term}
		&\Bigl|
		\|D_{\mathbb{P}_X}T_{\pi^\ast}-T_{\pi^\ast}D_{\mathbb{P}_Y}\|_{\mathrm{HS}}^2
		-
		\|D_{\mathbb{P}_X}T_{\tau}-T_{\tau}D_{\mathbb{P}_Y}\|_{\mathrm{HS}}^2
		\Bigr|
		\nonumber\\
		&\qquad\le
		\int_{\mathcal Y}\int_{\mathcal X}\bigl|\Gamma_{\pi^\ast}(x,y)^2-\Gamma_{\tau}(x,y)^2\bigr|\,\mathbb{P}_X(dx)\,\mathbb{P}_Y(dy)
		\nonumber\\
		&\qquad\le
		2\int_{\mathcal Y}\int_{\mathcal X}\bigl|\Gamma_{\pi^\ast}(x,y)-\Gamma_{\tau}(x,y)\bigr|\,\mathbb{P}_X(dx)\,\mathbb{P}_Y(dy).
	\end{align}
	We bound the integrand by splitting $\Gamma=\Gamma^{(1)}-\Gamma^{(2)}$: for $\mathbb{P}_X\otimes\mathbb{P}_Y$-a.e.\ $(x,y)$,
	\begin{align*}
		\bigl|\Gamma_{\pi^\ast}(x,y)-\Gamma_{\tau}(x,y)\bigr|
		\le
		\bigl|\Gamma_{\pi^\ast}^{(1)}(x,y)-\Gamma_{\tau}^{(1)}(x,y)\bigr|
		+
		\bigl|\Gamma_{\pi^\ast}^{(2)}(x,y)-\Gamma_{\tau}^{(2)}(x,y)\bigr|.
	\end{align*}
	We only bound the $\Gamma^{(1)}$-part; the $\Gamma^{(2)}$-part follows by symmetry.

	To bound $\bigl|\Gamma_{\pi^\ast}^{(1)}(x,y)-\Gamma_{\tau}^{(1)}(x,y)\bigr|$, we introduce an intermediate term by
	\emph{cell-averaging} $\Gamma_{\pi^\ast}^{(1)}(x,\cdot)$ along the Laguerre partition induced by the $W_1$-optimal coupling $Q_Y$.
	By the Laguerre lemma, we may choose a partition $\{W_j\}_{j=1}^{n_Y}$ such that
	\begin{align*}
		Q_Y(d\hat y\mid y)=\sum_{j=1}^{n_Y}\mathbf 1_{W_j}(y)\,\delta_{Y_j}(d\hat y)
		\quad (\mathbb P_Y\text{-a.e. }y),
		\quad
		\mathbb P_Y(W_j)=\frac{1}{n_Y}.
	\end{align*}
	Define
	\begin{align*}
		\bar\Gamma_{\pi^\ast}^{(1)}(x,y)
		\mathrel{:=}
		\sum_{j=1}^{n_Y}\mathbf 1_{W_j}(y)\,
		\Big(\frac{1}{\mathbb P_Y(W_j)}\int_{W_j}\Gamma_{\pi^\ast}^{(1)}(x,u)\,\mathbb P_Y(du)\Big)
		=
		\sum_{j=1}^{n_Y}\mathbf 1_{W_j}(y)\,
		\Big(n_Y\int_{W_j}\Gamma_{\pi^\ast}^{(1)}(x,u)\,\mathbb P_Y(du)\Big).
	\end{align*}
   
	\paragraph{(I) Oscillation induced by $Q_Y$.}
	Recall that $Q_Y\in\Pi(\mathbb{P}_Y,\hat{\mathbb{P}}_Y)$ is an optimal coupling for $W_1^{d_{\mathcal{Y}}}$.
    
	Then, for $\mathbb{P}_Y$-a.e. $y\in W_j$, using Assumption~\ref{ass:stability}, we have
	\begin{align*}
		\big|\Gamma_{\pi^*}^{(1)}(x,y)-\bar{\Gamma}_{\pi^*}^{(1)}(x,y)\big|
		&= \Big|\Gamma_{\pi^*}^{(1)}(x,y)- n_Y\int_{W_j}\Gamma_{\pi^*}^{(1)}(x,u)\,\mathbb{P}_Y(du)\Big| \\
		&\le n_Y\int_{W_j}\big|\Gamma_{\pi^*}^{(1)}(x,y)-\Gamma_{\pi^*}^{(1)}(x,u)\big|\,\mathbb{P}_Y(du) \\
		&\le L_W\, n_Y\int_{W_j} d_{\mathcal{Y}}(y,u)\,\mathbb{P}_Y(du).
	\end{align*}
	Integrating over $(x,y)\sim\mathbb{P}_X\otimes\mathbb{P}_Y$ and summing over the cells yields
	\begin{align}
		\label{eq:osc-Gamma1}
		\int_{\mathcal{X}\times\mathcal{Y}}
		\big|\Gamma_{\pi^*}^{(1)}(x,y)-\bar{\Gamma}_{\pi^*}^{(1)}(x,y)\big|\,
		(\mathbb{P}_X\otimes\mathbb{P}_Y)(dx,dy)
		&\le L_W \sum_{j=1}^{n_Y}\int_{\mathcal{X}}\int_{W_j} n_Y\int_{W_j} d_{\mathcal{Y}}(y,u)\,\mathbb{P}_Y(du)\,\mathbb{P}_Y(dy)\,\mathbb{P}_X(dx) \nonumber \\
		&= L_W\, n_Y \sum_{j=1}^{n_Y}\int_{W_j\times W_j} d_{\mathcal{Y}}(y,u)\,\mathbb{P}_Y(dy)\mathbb{P}_Y(du) \nonumber \\
		&= 2L_W\, \mathbb{E}_{(Y,\hat{Y})\sim Q_Y}\!\big[d_{\mathcal{Y}}(Y,\hat{Y})\big] \nonumber \\
		&= 2L_W\, W_1^{d_{\mathcal{Y}}}(\mathbb{P}_Y,\hat{\mathbb{P}}_Y).
	\end{align}
    The third equality follows by rewriting the within-cell double integral as an expectation under the coupling $Q_Y$, using the representation
	\begin{align*}
		Q_Y(dy,d\hat y)=\sum_{j=1}^{n_Y}\mathbb{P}_Y\lfloor_{W_j}(dy)\,\delta_{Y_j}(d\hat y),
	\end{align*}
	together with the inequality
	\begin{align*}
		d_{\mathcal Y}(y,u)\le d_{\mathcal Y}(y,Y_j)+d_{\mathcal Y}(Y_j,u),
		\quad y,u\in W_j.
	\end{align*}
	
	\paragraph{(II) Bias induced by $Q_X$ inside each $W_j$.}
	Recall that $Q_X\in\Pi(\mathbb{P}_X,\hat{\mathbb{P}}_X)$ is an optimal coupling for $W_1^{d_{\mathcal{X}}}$. Similarly, a Markov kernel $Q_X(d\hat{x}\mid x)$ can be chosen as:
	\begin{align*}
		Q_X(d\hat{x} \mid x) = \sum_{i=1}^{n_X} \mathbb{I}_{V_i}(x) \delta_{X_i}(d\hat{x}) .
	\end{align*}
	Define the composed Markov kernel
	\begin{align*}
		K_X(dx'\mid x)\coloneqq \int_{\hat{x}\in\mathcal{X}} Q_X(dx'\mid \hat{x})\,Q_X(d\hat{x}\mid x).
	\end{align*}
	Let $\tau\coloneqq \Phi_n(\Psi_n(\pi^*))$.
	For each $j$, define the normalized $X$-marginals on the strip $\mathcal{X}\times W_j$:
	\begin{align*}
		\pi^{*,X}_j(dx)\coloneqq n_Y\,\pi^*(dx,W_j),
		\qquad
		\tau^X_j(dx)\coloneqq n_Y\,\tau(dx,W_j).
	\end{align*}
	Then for any Borel set $A\subset\mathcal{X}$,
	\begin{align*}
		\tau^X_j(A)
		&= n_Y\,\tau(A\times W_j) \\
		&= n_Y\int_{(\hat{x},\hat{y}) \in \mathcal{X}\times\mathcal{Y}} Q_X(A \mid \hat{x})Q_Y(W_j \mid \hat{y})\Psi_n(\pi^*)(d\hat{x},d\hat{y}) \\
		&= n_Y\sum_{i=1}^{n_X}\sum_{k=1}^{n_Y} \Psi_n(\pi^*)(\{X_i\} \times \{Y_k\}) Q_X(A \mid X_i) Q_Y(W_j \mid Y_k) \\
		&= n_Y\sum_{i=1}^{n_X}\Psi_n(\pi^*)(\{X_i\} \times \{Y_j\})\,Q_X(A\mid X_i) \\
		&= n_Y\sum_{i=1}^{n_X}\pi^*(V_i\times W_j)\,Q_X(A\mid X_i)
		= \int_{\mathcal{X}} K_X(A\mid x)\,\pi^{*,X}_j(dx),
	\end{align*}
	where the last identity uses that $K_X(\cdot\mid x)=Q_X(\cdot\mid X_i)$ for $x\in V_i$ under the Laguerre representation of $Q_X$.
	Hence $\tau^X_j = \pi^{*,X}_j K_X$.
	
	Consequently, for $\mathbb{P}_Y$-a.e. $y\in W_j$,
	\begin{align*}
		\bar{\Gamma}_{\pi^*}^{(1)}(x,y)
		= \int_{\mathcal{X}} d_{\mathcal{X}}(x,u)\,\pi^{*,X}_j(du),
		\qquad
		\Gamma_{\tau}^{(1)}(x,y)
		= \int_{\mathcal{X}} d_{\mathcal{X}}(x,u)\,\tau^X_j(du).
	\end{align*}
	Using $|d_{\mathcal{X}}(x,u)-d_{\mathcal{X}}(x,v)|\le d_{\mathcal{X}}(u,v)$ and the coupling
	$\eta_j(du,dv)\coloneqq \pi^{*,X}_j(du)\,K_X(dv\mid u)\in\Pi(\pi^{*,X}_j,\tau^X_j)$,
	we obtain
	\begin{align*}
		\big|\bar{\Gamma}_{\pi^*}^{(1)}(x,y)-\Gamma_{\tau}^{(1)}(x,y)\big|
		\le \int_{\mathcal{X}\times\mathcal{X}} d_{\mathcal{X}}(u,v)\,\eta_j(du,dv).
	\end{align*}
	Integrating over $(x,y)\sim \mathbb{P}_X\otimes\mathbb{P}_Y$ and averaging over $j$ yields
	\begin{align*}
		\int_{\mathcal{X}\times\mathcal{Y}}
		\big|\bar{\Gamma}_{\pi^*}^{(1)}(x,y)-\Gamma_{\tau}^{(1)}(x,y)\big|\,
		(\mathbb{P}_X\otimes\mathbb{P}_Y)(dx,dy)
		\le \mathbb{E}_{K_X\mathbb{P}_X}\!\big[d_{\mathcal{X}}(X,X')\big] ,
	\end{align*}
	where $K_X\mathbb{P}_X$ denotes the joint distribution $K_X(dx'\mid x)\mathbb{P}_X(dx)$. Construct a joint coupling $\nu \in \Pi(\mathbb{P}_X,\hat{\mathbb{P}}_X,\mathbb{P}_X)$ by
	\begin{align*}
		\nu(dx,d\hat{x},dx') \coloneqq Q_X(dx,d\hat{x}) Q_X(dx' \mid \hat{x}) = Q_X(d\hat{x} \mid x)Q_X(dx' \mid \hat{x})\mathbb{P}_X(dx) . 
	\end{align*}
	Then, for $(X,\hat{X},X') \sim \nu$, the marginals of $(X,\hat{X})$, $(X,X')$, and $(X',\hat{X})$ are $Q_X(dx,d\hat{x})$, $K_X(dx' \mid x)\mathbb{P}_X(dx)$, and $Q_X(dx',d\hat{x})$,
	respectively. By applying the triangle inequality,
	\begin{align}
		\label{eq:bias-Gamma1}
		\mathbb{E}_{K_X\mathbb{P}_X}\!\big[d_{\mathcal{X}}(X,X')\big] \leq \mathbb{E}_{\nu}\!\big[d_{\mathcal{X}}(X,\hat{X})\big] + \mathbb{E}_{\nu}\!\big[d_{\mathcal{X}}(\hat{X},X')\big] = 2W_1^{d_{\mathcal{X}}}(\mathbb{P}_X,\hat{\mathbb{P}}_X) .
	\end{align}
	
	Combining \eqref{eq:osc-Gamma1} and \eqref{eq:bias-Gamma1}, we conclude
	\begin{align*}
		\int_{\mathcal{X}\times\mathcal{Y}}
		\big|\Gamma_{\pi^*}^{(1)}(x,y)-\Gamma_{\tau}^{(1)}(x,y)\big|\,
		(\mathbb{P}_X\otimes\mathbb{P}_Y)(dx,dy)
		\le 2W_1^{d_{\mathcal{X}}}(\mathbb{P}_X,\hat{\mathbb{P}}_X)
		+ L_W\,W_1^{d_{\mathcal{Y}}}(\mathbb{P}_Y,\hat{\mathbb{P}}_Y).
	\end{align*}
	
	Consequently, we arrive at
	\begin{align}
		\label{eq:E3}
		E_3 \leq4(L_f + 2L_W + 2)\left(W_1^{d_{\mathcal{X}}}(\mathbb{P}_X,\hat{\mathbb{P}}_X) + W_1^{d_{\mathcal{Y}}}(\mathbb{P}_Y,\hat{\mathbb{P}}_Y)\right) .
	\end{align}
	
	Combining \eqref{eq:optimization-error},~\eqref{eq:E2}, and \eqref{eq:E3}, we obtain
	\begin{align*}
		\Big\vert \mathcal{L}_{\alpha}(\Phi_n(\hat{\pi})) - \mathcal{L}_{\alpha}(\pi^*) \Big\vert \leq \frac{32\alpha \, n_{\min}}{T+3} + 4(2L_f + 2L_W + 4)(W_1^{d_{\mathcal{X}}}(\mathbb{P}_X,\hat{\mathbb{P}}_X) + W_1^{d_{\mathcal{Y}}}(\mathbb{P}_Y,\hat{\mathbb{P}}_Y)) .
	\end{align*}
	This completes the proof.
\end{proof}

\subsection{Proof for Lemma~\ref{lem:lower-semicontinuous}}\label{pf:lem:lower-semicontinuous}
\begin{proof}
	The first term is clearly continuous w.r.t. $\pi$: since $\|f_{\mathcal{X}}(x)-f_{\mathcal{Y}}(y)\|_2^2 \in C_b(\mathcal{X} \times \mathcal{Y})$, $\pi_n \overset{w}{\to} \pi$ implies
	\begin{align*}
		\mathbb{E}_{\pi_n}\big[\|f_{\mathcal{X}}(X)-f_{\mathcal{Y}}(Y)\|_2^2\big] = \int_{\mathcal{X} \times \mathcal{Y}} \| f_{\mathcal{X}}(x) - f_{\mathcal{Y}}(y) \|_2^2 \pi_n(dx,dy) \to \mathbb{E}_{\pi}\big[\|f_{\mathcal{X}}(X)-f_{\mathcal{Y}}(Y)\|_2^2\big] .
	\end{align*}
	
	To show that $\mathcal{R}(\pi)$ is lower semicontinuous, we first show that $\pi \mapsto T_{\pi}$ is continuous. Let $\{\pi_n\}_{n=1}^{\infty}$ be a sequence that weakly converges to some $\pi$.
	
	Note that $C(\mathcal{X}) \subset L^2(\mathcal{X},\mathbb{P}_X)$ and $C(\mathcal{Y}) \subset L^2(\mathcal{Y},\mathbb{P}_Y)$ are both dense due to the compactness of $\mathcal{X}$ and $\mathcal{Y}$. In particular, $C(\mathcal{X}) = C_b(\mathcal{X})$ and $C(\mathcal{Y}) = C_b(\mathcal{Y})$. Thus, for $f \in C(\mathcal{X})$ and $g \in C(\mathcal{Y})$,
	\begin{align}
		\label{eq*:equality-in-continuous-functions}
		\langle T_{\pi_n} g, f \rangle_{L^2(\mathbb{P}_X)} = \int f(x)g(y) \pi_{n} (dx,dy) \to \int f(x)g(y) \pi (dx,dy) = \langle T_{\pi}g, f \rangle_{L^2(\mathbb{P}_X)} , \quad n \to \infty ,
	\end{align}
	which follows from the definition of weak convergence. Since the conditional expectation operators are contractions (i.e., $\|T_{\pi_n}\|_{\mathrm{op}} \le 1$ for all $n$), the sequence is uniformly bounded. By the density of continuous functions in $L^2$ and the uniform boundedness of $\{T_{\pi_n}\}$, \eqref{eq*:equality-in-continuous-functions} extends to all $f \in L^2(\mathcal{X},\mathbb{P}_X)$ and $g \in L^2(\mathcal{Y},\mathbb{P}_Y)$. Thus, $\pi_n \overset{w}{\to} \pi$ implies $T_{\pi_n} \to T_{\pi}$ in WOT, which also entails
	\begin{align*}
		D_{\mathbb{P}_X}T_{\pi_n} - T_{\pi_n} D_{\mathbb{P}_Y} \to D_{\mathbb{P}_X}T_\pi - T_\pi D_{\mathbb{P}_Y} , \quad \text{in WOT} .
	\end{align*}
	Therefore, by Lemma~\ref{lem:HS-lower-semicontinuous}, we conclude that $\mathcal{R}(\pi)$ is lower semicontinuous.
\end{proof}

\subsection{Proof for Lemma~\ref{lem:FW-convergence}}\label{pf:lem:FW-convergence}
\begin{proof}
	First, we obtain the smoothness of $\hat{\mathcal{L}}_{\alpha}$. Observe that $D_{\hat{\mathbb{P}}_X}$ and $D_{\hat{\mathbb{P}}_Y}$ are symmetric. Hence, the gradient of $\nabla \hat{\mathcal{L}}_{\alpha}(\pi)$ can be calculated as
	\begin{align*}
		(\alpha n_Xn_Y)^{-1}\nabla \hat{\mathcal{L}}_{\alpha}(\pi) = D_{\hat{\mathbb{P}}_X}^2\pi + \pi D_{\hat{\mathbb{P}}_Y}^2 - 2D_{\hat{\mathbb{P}}_X} \pi D_{\hat{\mathbb{P}}_Y} + \mathrm{const.} ,
	\end{align*}
	where the constant does not depend on $\pi$. Thus, the triangle inequality and the Cauchy--Schwarz inequality yield that
	\begin{align*}
		(\alpha n_Xn_Y)^{-1} \Big\| \nabla \hat{\mathcal{L}}_{\alpha}(\pi_1) - \nabla \hat{\mathcal{L}}_{\alpha}(\pi_2) \Big\|_F \leq \left(\|D_{\hat{\mathbb{P}}_X}\|_{\mathrm{op}} + \|D_{\hat{\mathbb{P}}_Y}\|_{\mathrm{op}}\right)^2\| \pi_1 - \pi_2 \|_F .
	\end{align*}
	Thus, we get
	\begin{align}
		\label{eq:beta-smooth}
		\Big\| \nabla \hat{\mathcal{L}}_{\alpha}(\pi_1) - \nabla \hat{\mathcal{L}}_{\alpha}(\pi_2) \Big\|_F \leq \alpha \cdot n_{\min}n_{\max} \left(\|D_{\hat{\mathbb{P}}_X}\|_{\mathrm{op}} + \|D_{\hat{\mathbb{P}}_Y}\|_{\mathrm{op}}\right)^2 \|\pi_1 - \pi_2\|_F ,
	\end{align}
	where $n_{\min} = \min\{n_X,n_Y\}$ and $n_{\max} = \max\{n_X,n_Y\}$.
	
	What remains is to show the convergence of the FW algorithm. Define
	\begin{align*}
		\beta \coloneqq \alpha \cdot n_{\min}n_{\max} \left(\|D_{\hat{\mathbb{P}}_X}\|_{\mathrm{op}} + \|D_{\hat{\mathbb{P}}_Y}\|_{\mathrm{op}}\right)^2 .
	\end{align*}
	Let $\pi_n^\ast \in \argmin_{\pi} \hat{\mathcal{L}}_{\alpha}(\pi)$, whose existence is guaranteed by the convexity. Then,
	\begin{align*}
		\hat{\mathcal{L}}_{\alpha}(\pi^{(t+1)}) - \hat{\mathcal{L}}_{\alpha}(\pi^{(t)}) &\leq \mathrm{Tr}\left(\nabla \hat{\mathcal{L}}_{\alpha}(\pi^{(t)})^\top (\pi^{(t+1)} - \pi^{(t)})\right) + \frac{\beta}{2} \|\pi^{(t+1)} - \pi^{(t)} \|_F^2 \\
		&= \gamma_t\mathrm{Tr}\left(\nabla \hat{\mathcal{L}}_{\alpha}(\pi^{(t)})^\top (\mu^{(t)} - \pi^{(t)})\right) + \frac{\beta}{2} \|\pi^{(t+1)} - \pi^{(t)} \|_F^2 \\
		&\leq \gamma_t \mathrm{Tr}\left(\nabla \hat{\mathcal{L}}_{\alpha}(\pi^{(t)})^\top (\pi_n^\ast - \pi^{(t)})\right) + \frac{\beta}{2} \|\pi^{(t+1)} - \pi^{(t)} \|_F^2 \\
		&\leq \gamma_t \left(\hat{\mathcal{L}}_{\alpha}(\pi_n^*) - \hat{\mathcal{L}}_{\alpha}(\pi^{(t)})\right) + \frac{\beta}{2} \|\pi^{(t+1)} - \pi^{(t)} \|_F^2 .
	\end{align*}
	The first inequality comes from \eqref{eq:beta-smooth}; the third inequality is due to the definition of $\mu^{(t)}$; and the last inequality is from the convexity of $\hat{\mathcal{L}}_{\alpha}$. 
	
	Considering that
	\begin{align*}
		\|\pi\|_F^2 = \sum_{ij} \pi_{ij}^2 \leq (\max_{i,j}\pi_{ij}) \sum_{ij} \pi_{ij} = \max_{i,j} \pi_{ij} \leq \frac{1}{n_{\max}} ,
	\end{align*}
	we obtain
	\begin{align*}
		\|\pi^{(t+1)} - \pi^{(t)} \|_F = \gamma_t \|\mu^{(t)} - \pi^{(t)} \|_F \leq 2\gamma_t \sqrt{\frac{1}{n_{\max}}} .
	\end{align*}
	Then, substituting $\gamma_t = 2/(t+2)$, it follows that
	\begin{align*}
		\hat{\mathcal{L}}_{\alpha}(\pi^{(t+1)}) - \hat{\mathcal{L}}_{\alpha}(\pi_n^*) &\leq (1 - \gamma_t)\left(\hat{\mathcal{L}}_{\alpha}(\pi^{(t)}) - \hat{\mathcal{L}}_{\alpha}(\pi_n^*)\right) + 2\beta\gamma_t^2\cdot\frac{1}{n_{\max}} \\
		&= \frac{t}{t+2}\left(\hat{\mathcal{L}}_{\alpha}(\pi^{(t)}) - \hat{\mathcal{L}}_{\alpha}(\pi_n^*)\right) + \frac{8\beta}{(t+2)^2}\cdot\frac{1}{n_{\max}} .
	\end{align*}
	By this inequality, for $t=0$ (where $\gamma_0=1$), we have
	\begin{align*}
		\hat{\mathcal{L}}_{\alpha}(\pi^{(1)}) - \hat{\mathcal{L}}_{\alpha}(\pi_n^*) \leq 2\beta \cdot\frac{1}{n_{\max}} \leq \frac{8\beta}{2}\cdot\frac{1}{n_{\max}} .
	\end{align*}
	Therefore, using mathematical induction, we arrive at the convergence rate. Assuming $\hat{\mathcal{L}}_{\alpha}(\pi^{(t)}) - \hat{\mathcal{L}}_{\alpha}(\pi_n^*) \leq 8\beta/((t+2)n_{\max})$, we obtain
	\begin{align*}
		\hat{\mathcal{L}}_{\alpha}(\pi^{(t+1)}) - \hat{\mathcal{L}}_{\alpha}(\pi_n^*) 
		&\leq \left(\frac{t}{t+2} \cdot \frac{8\beta}{t+2} + \frac{8\beta}{(t+2)^2}\right)\cdot\frac{1}{n_{\max}} \\
		&= \frac{8\beta(t+1)}{(t+2)^2}\cdot\frac{1}{n_{\max}} \\
		&\leq \frac{8\beta}{t+3}\cdot\frac{1}{n_{\max}} ,
	\end{align*}
	where the last inequality holds since $(t+1)(t+3) = t^2+4t+3 < t^2+4t+4 = (t+2)^2$.
\end{proof}

\section{Experimental Details and Additional Results}\label{App:additional-simulation}

This section provides supplementary details regarding the experimental setup and implementation specifics, the discussion of which is postponed in the main text. We also present the formal mathematical definitions of the baseline algorithms and evaluation metrics. Finally, we report comprehensive experimental results on both the synthetic 2D point clouds and the OASIS-3 dataset, including a full comparison of Entropic FGW (EFGW) across varying regularization parameters.

\subsection{Comparison Algorithms}

EFGW introduces an entropic regularization term to the original FGW objective, promoting smoother transport plans and enabling the use of Sinkhorn-based algorithms. Formally, for a regularization parameter $\varepsilon > 0$, the EFGW objective is defined by:
\begin{align*}
	\mathcal{L}^{\varepsilon}_{\mathrm{EFGW},\alpha}(\pi)
	\coloneqq
	(1-\alpha) \mathbb{E}_{\pi}\left[ \| f_{\mathcal{X}}(X) - f_{\mathcal{Y}}(Y) \|_2^2\right]
	+ \alpha \mathbb{E}_{\pi \otimes \pi}\left[|d_{\mathcal{X}}(X,X') - d_{\mathcal{Y}}(Y,Y')|^2\right]
	+ \varepsilon \cdot D_{\mathrm{KL}}(\pi \mid \mathbb{P}_X \otimes \mathbb{P}_Y),
\end{align*}
where $D_{\mathrm{KL}}$ denotes the Kullback--Leibler (KL) divergence.

Let $X_1,...,X_{n_X}$ be distinct points in $\mathcal{X}$, and $Y_1,...,Y_{n_Y}$ be distinct points in $\mathcal{Y}$. For each sample, we observe features $f_{\mathcal{X}}(X_i)$ and $f_{\mathcal{Y}}(Y_j)$. In addition, define the empirical probability measures as in the main text. Then, the empirical version of the EFGW objective corresponds to
\begin{align*}
	\hat{\mathcal{L}}^{\varepsilon}_{\mathrm{EFGW},\alpha}(\pi)
	\coloneqq (1-\alpha)\mathrm{Tr}\left(C_f^\top \pi\right)
	+ \alpha \sum_{i,k=1}^{n_X} \sum_{j,l=1}^{n_Y} \left| [D^X]_{ik} - [D^Y]_{jl} \right|^2 \pi_{ij} \pi_{kl}
	+ \varepsilon \sum_{i=1}^{n_X} \sum_{j=1}^{n_Y} \pi_{ij} \log \left( \frac{\pi_{ij}}{(1/n_X)(1/n_Y)} \right),
\end{align*}
where $\pi \in \Pi(\hat{\mathbb{P}}_X,\hat{\mathbb{P}}_Y)$ is a $n_X \times n_Y$ empirical coupling matrix, $[C_f]_{ij} = \|f_{\mathcal{X}}(X_i) - f_{\mathcal{Y}}(Y_j)\|_2^2$, $[D^X]_{ik} = d_{\mathcal{X}}(X_i,X_k)$, and $[D^Y]_{jl} = d_{\mathcal{Y}}(Y_j,Y_l)$.

COPT~\citep{dong2020copt} is a graph matching method that produces a vertex coupling by coordinating vertex alignment with graph signal alignment.
Let $\hat{\mathbb{P}}_X=\sum_{i=1}^{n_X}\delta_{X_i}/n_X$ and $\hat{\mathbb{P}}_Y=\sum_{j=1}^{n_Y}\delta_{Y_j}/n_Y$ be the uniform vertex measures.
COPT further equips each graph with Gaussian graph-signal measures $\mu_X$ on $\mathbb{R}^{n_X}$ and $\mu_Y$ on $\mathbb{R}^{n_Y}$ constructed from Laplacian pseudoinverses.
For a coupling $\pi\in\Pi(\hat{\mathbb{P}}_X,\hat{\mathbb{P}}_Y)$, define the COPT objective as
\begin{align*}
	\hat{\mathcal{L}}_{\mathrm{COPT}}(\pi)
	\;\coloneqq\;
	\inf_{T:\,T_{\#}\mu_X=\mu_Y}
	\int_{g \in \mathbb{R}^{n_X}}
	\sum_{i=1}^{n_X}\sum_{j=1}^{n_Y}\big(g_i-(Tg)_j\big)^2\,\pi_{ij}\,\mu_X(dg),
\end{align*}
where $g\in\mathbb{R}^{n_X}$ is a random graph signal on the source graph (with coordinates $g_i$ indexed by $X_i$).
The COPT problem is then
\begin{align*}
	\min_{\pi\in\Pi(\hat{\mathbb{P}}_X,\hat{\mathbb{P}}_Y)} \hat{\mathcal{L}}_{\mathrm{COPT}}(\pi).
\end{align*}
The inner infimum selects a signal transport map $T$ that aligns the graph-induced signal distributions, while the outer minimization learns a vertex coupling $\pi$ consistent with this global structural alignment.
Unlike CDOT or EFGW that incorporate node features through an explicit feature cost matrix $C_f$, COPT does not use node features in our setting and relies solely on structural information induced by the input graphs.
We use the resulting minimizer $\pi$ as the matching output.

Spectral Matching (Spectral)~\citep{leordeanu2005spectral} aligns graph structures by comparing the spectral embeddings of their similarity or distance matrices. Let $U_X$ and $U_Y$ be the matrices containing the leading $k$ eigenvectors of the source and target distance matrices, respectively. Spectral seeks a correspondence that minimizes the distance between these spectral embeddings:
\begin{align*}
	\min_{P \in \mathcal{P}} \| U_X - P U_Y \|_F^2 ,
\end{align*}
where $\mathcal{P}$ is the set of permutation matrices. This method relies solely on structural information and is non-iterative, making it computationally efficient but potentially less robust to geometric ambiguities compared to OT-based approaches. 

IsoRank~\citep{singh2008global} is a global network alignment algorithm based on the intuition that two nodes are similar if their neighbors are similar. It is formulated as an eigenvalue problem similar to PageRank. Formally, let $A_1$ and $A_2$ be the adjacency matrices of the two graphs (normalized to be column-stochastic). The similarity matrix $R$ is computed iteratively via:
\begin{align*}
	R = \alpha A_1 R A_2^\top + (1-\alpha) H,
\end{align*}
where $H$ is a prior similarity matrix (often uniform or based on node features) and $\alpha$ controls the fusion weight. (Here, $\alpha$ is the IsoRank damping parameter and is distinct from the OT fusion weight used in CDOT/FGW/EFGW.)

\subsection{Synthetic Data Experiments}

We utilize the same synthetic dataset described in Section~\ref{sec:Synthetic}, consisting of 2D point clouds with four distinct clusters. In this subsection, we provide implementation details and report comprehensive results on the synthetic 2D point clouds, including a full comparison of Entropic FGW (EFGW) across varying regularization parameters.

\subsubsection{Implementation Details}

The source and target distributions are sampled from isomorphic mm spaces defined on $[0,2]^2$, and the feature cost $C_f$ is given by the 0-1 penalty based on ground-truth cluster labels. We compare CDOT against FGW, EFGW, IsoRank, and Spectral Matching (Spectral).
\begin{itemize}
	\item \textbf{FGW \& EFGW:} We use the \texttt{POT} library~\citep{flamary2021pot}. For FGW, we adopt the \texttt{square\_loss} (GW-2) and the default conditional gradient solver (\texttt{ot.gromov.fused\_gromov\_wasserstein}). For EFGW, we use the Sinkhorn-based solver (\texttt{ot.gromov.entropic\_fused\_gromov\_wasserstein}) and vary the regularization parameter $\varepsilon \in \{10^{-4}, ... , 10^{-1}\}$.
	
	\item \textbf{IsoRank:} We construct the similarity matrices $A_{\mathcal{X}}$ and $A_{\mathcal{Y}}$ from the distance matrices using a Gaussian kernel with $\sigma=1$, such that $[A]_{ij} = \exp(-d_{ij}^2/2)$. These matrices are row-normalized to be stochastic. The prior affinity matrix $H$ is derived from the feature cost matrix $C_f$ by setting $H_{ij} = 1 - [C_f]_{ij}$, followed by normalization. The final alignment is obtained by applying the Hungarian algorithm (LAP) to the converged similarity matrix $R$.
	
	\item \textbf{Spectral:} We implement Spectral by performing eigen-decomposition on the normalized distance matrices $D^X$ and $D^Y$. Instead of selecting top $k$ eigenvectors, we use the full eigenmatrices. To mitigate the sign ambiguity of eigenvectors (i.e., $v$ and $-v$ represent the same eigenspace), we compute the cost matrix using the squared Euclidean distance between the \textit{absolute values} of the eigenvectors. The final permutation is obtained by solving the LAP on this cost matrix.
\end{itemize}
Consistent with the main experiment, we report the mean squared error (MSE) between the source $X$ and the transported target $\hat{Y} = N \hat{\pi} Y$ over 100 independent trials. All distance matrices are normalized by their maximum values.

\subsubsection{Results}

\begin{table*}[t]
	\centering
	\setlength{\tabcolsep}{11.5pt}
	\caption{Comparison of MSE (mean $\pm$ std) over 100 independent trials. CDOT is compared against Entropic FGW (EFGW) with varying regularization parameters. The best MSE performance for each configuration is highlighted in bold.}
	\label{tab:full-mse}
	\renewcommand{\arraystretch}{1.1}
	\begin{tabular}{ccccccc}
		\toprule
		& & \textbf{CDOT} & \textbf{EFGW ($10^{-4}$)} & \textbf{EFGW ($10^{-3}$)} & \textbf{EFGW ($10^{-2}$)} & \textbf{EFGW ($10^{-1}$)} \\
		\cmidrule(lr){3-3} \cmidrule(lr){4-4} \cmidrule(lr){5-5} \cmidrule(lr){6-6} \cmidrule(lr){7-7}
		$\alpha$ & $n$ & MSE & MSE & MSE & MSE & MSE \\
		\midrule
		\multirow{5}{*}{0.0}
		& 100 & 0.3327 $\pm$ 0.01 & \textbf{0.1686} $\pm$ 0.01 & \textbf{0.1686} $\pm$ 0.01 & \textbf{0.1686} $\pm$ 0.01 & \textbf{0.1686} $\pm$ 0.01 \\
		& 200 & 0.3355 $\pm$ 0.01 & \textbf{0.1677} $\pm$ 0.00 & \textbf{0.1677} $\pm$ 0.00 & \textbf{0.1677} $\pm$ 0.00 & \textbf{0.1677} $\pm$ 0.00 \\
		& 300 & 0.3326 $\pm$ 0.01 & \textbf{0.1676} $\pm$ 0.00 & \textbf{0.1676} $\pm$ 0.00 & \textbf{0.1676} $\pm$ 0.00 & \textbf{0.1676} $\pm$ 0.00 \\
		& 400 & 0.3345 $\pm$ 0.01 & \textbf{0.1674} $\pm$ 0.00 & \textbf{0.1674} $\pm$ 0.00 & \textbf{0.1674} $\pm$ 0.00 & \textbf{0.1674} $\pm$ 0.00 \\
		& 500 & 0.3329 $\pm$ 0.01 & \textbf{0.1671} $\pm$ 0.00 & \textbf{0.1671} $\pm$ 0.00 & \textbf{0.1671} $\pm$ 0.00 & \textbf{0.1671} $\pm$ 0.00 \\
		\midrule
		\multirow{5}{*}{0.5}
		& 100 & \textbf{0.0077} $\pm$ 0.00 & 2.6562 $\pm$ 0.05 & 0.0098 $\pm$ 0.00 & 0.0347 $\pm$ 0.00 & 0.1438 $\pm$ 0.01 \\
		& 200 & \textbf{0.0040} $\pm$ 0.00 & 2.6684 $\pm$ 0.03 & 0.0054 $\pm$ 0.00 & 0.0334 $\pm$ 0.00 & 0.1436 $\pm$ 0.00 \\
		& 300 & \textbf{0.0027} $\pm$ 0.00 & 2.6651 $\pm$ 0.03 & 0.0038 $\pm$ 0.00 & 0.0333 $\pm$ 0.00 & 0.1437 $\pm$ 0.00 \\
		& 400 & \textbf{0.0020} $\pm$ 0.00 & 2.6670 $\pm$ 0.02 & 0.0031 $\pm$ 0.00 & 0.0332 $\pm$ 0.00 & 0.1436 $\pm$ 0.00 \\
		& 500 & \textbf{0.0016} $\pm$ 0.00 & 2.6666 $\pm$ 0.02 & 2.6666 $\pm$ 0.02 & 0.0330 $\pm$ 0.00 & 0.1434 $\pm$ 0.00 \\
		\midrule
		\multirow{5}{*}{1.0}
		& 100 & 0.6743 $\pm$ 0.04 & 2.6562    $\pm$ 0.05 & 1.3286 $\pm$ 0.65 & 1.2439 $\pm$ 0.57 & \textbf{0.6665} $\pm$ 0.02 \\
		& 200 & 0.6712 $\pm$ 0.02 & 2.6684 $\pm$ 0.03 & 1.4212 $\pm$ 0.60 & 1.3157 $\pm$ 0.60 & \textbf{0.6691} $\pm$ 0.01 \\
		& 300 & 0.6682 $\pm$ 0.02 & 2.6651 $\pm$ 0.03 & 1.3781 $\pm$ 0.65 & 1.2735 $\pm$ 0.56 & \textbf{0.6671} $\pm$ 0.01 \\
		& 400 & 0.6688 $\pm$ 0.01 & 2.6670 $\pm$ 0.02 & 1.1405 $\pm$ 0.63 & 1.1384 $\pm$ 0.62 & \textbf{0.6682} $\pm$ 0.01 \\
		& 500 & 0.6673 $\pm$ 0.01 & 2.6666 $\pm$ 0.02 & 1.4051 $\pm$ 0.70 & 1.2483 $\pm$ 0.66 & \textbf{0.6670} $\pm$ 0.01 \\
		\bottomrule
	\end{tabular}
\end{table*}

Table~\ref{tab:full-mse} presents the comparative results between CDOT and EFGW across varying regularization parameters. When $\alpha = 0.0$, the problem reduces to matching points based solely on cluster labels. Since points within the same cluster are indistinguishable, CDOT (pure OT) yields a constant MSE of approximately $0.33$. In contrast, EFGW consistently achieves a lower MSE ($\approx 0.16$) regardless of the regularization parameter. This is due to the entropic smoothing effect: rather than selecting a single random point, EFGW distributes mass across all valid targets within the cluster. This maps each source point to the centroid (barycenter) of the corresponding target cluster, thereby minimizing the expected squared error.

In the fused setting where both feature and geometric constraints are active, CDOT consistently achieves the lowest MSE ($\approx 0.002$ for $n=400$), demonstrating the effectiveness of its convex formulation. EFGW exhibits significant sensitivity to the regularization parameter $\varepsilon$. At a low regularization level ($\varepsilon=10^{-4}$), EFGW yields a high MSE of $2.66$, suggesting numerical instability or entrapment in local stationary points, which is typical for non-convex solvers in complex landscapes. While increasing $\varepsilon$ to $10^{-3}$ significantly improves performance, it still does not match the precision of CDOT. Further increasing $\varepsilon$ leads to over-smoothing, degrading the MSE again. This highlights the difficulty of tuning $\varepsilon$ in EFGW, whereas CDOT provides robust performance without such hyperparameters.

In the pure geometric setting, the symmetry of the four identical clusters renders the true alignment unidentifiable. Under this regime, CDOT maintains a stable solution with an MSE of approximately $0.67$ and a remarkably low standard deviation. Interestingly, EFGW with high regularization ($\varepsilon=10^{-1}$) yields nearly identical performance. This is because both methods essentially converge toward the independent coupling due to the non-identifiability. However, EFGW with lower $\varepsilon$ shows high variance and instability, often failing to find a consistent solution.

\subsection{Real Data Experiments on the OASIS-3 Dataset}\label{App:oasis-3}

In this subsection, we provide supplementary details regarding the experimental setup, mathematical definitions of the metrics, and comprehensive results on the OASIS-3 dataset that are omitted from the main text.

\subsubsection{Geodesic and Diffusion Distances}

The geodesic distance between two nodes on a graph $G = (V,E)$ is defined as the length of the shortest path connecting them. In the context of weighted graphs (connectomes), edge weights typically represent connection strength (number of fibers). We convert these weights into costs by taking the reciprocal ($1/w_{ij}$) before computing the shortest paths using Dijkstra's algorithm. Since brain networks may contain disconnected components, the geodesic distance between disconnected nodes is mathematically infinite. To handle this numerically, we replace infinite values with a finite maximum distance, defined as:
\begin{align*}
	d_{ij} = \begin{cases}
		\text{shortest\_path}(i, j) & \text{if $i$ and $j$ are connected}, \\
		\max_{(u,v) \in E} d_{uv} & \text{otherwise}.
	\end{cases}
\end{align*}

Diffusion distance captures the connectivity between nodes based on the probability of transition in a random walk, rather than a single shortest path. It is defined using the heat kernel $H_t = \exp(-t \mathcal{L})$, where $\mathcal{L}$ is the graph Laplacian and $t$ is the diffusion scale parameter. The diffusion distance between nodes $i$ and $j$ at scale $t$ is the Euclidean distance between their corresponding rows in the heat kernel:
\begin{align*}
	D_t(i, j) = \| H_t(i, \cdot) - H_t(j, \cdot) \|_2.
\end{align*}
Unlike geodesic distance, diffusion distance integrates information from all possible paths of length proportional to $t$, resulting in a smoother metric that is robust to noise and topological perturbations. It evaluates whether two nodes share similar functional roles (i.e., similar diffusion profiles) within the global network structure. This metric is widely adopted in connectome analysis to capture global information flow and functional geometry~\citep{coifman2006diffusion, abdelnour2014network}. In our experiments, we set the scale $t=1$.

\subsubsection{Implementation Details}

For node features, we utilize 6 categorical labels derived from the OASIS-3 metadata: hemisphere (left, right, central) and cortical status (cortical, subcortical). These are compared using 0-1 penalty. For the optimization, we set the fusion weight $\alpha = 0.5$ for CDOT, FGW, EFGW, and IsoRank to ensure a balanced contribution. We use the \texttt{square\_loss} for the GW term. All matching algorithms are initialized with an independent coupling (product of marginals) and run for $T=200$ iterations. We compare CDOT against FGW, EFGW, Spectral, IsoRank, and COPT. For FGW and EFGW, we utilize the \texttt{POT} library. For EFGW, we perform a grid search for the regularization parameter $\varepsilon \in \{10^{-4}, ..., 10^{-1}\}$. SM is implemented by eigen-decomposing the graph Laplacian $L=D-A$ and computing node embeddings from the full set of eigenvectors. To mitigate the sign ambiguity of eigenvectors, we build the cost matrix using squared Euclidean distances between the elementwise absolute values of the eigenvectors. For IsoRank, we incorporate feature information by setting the prior similarity matrix $H=\mathbf{1}-C_f$, where $C_f$ is the feature cost matrix. COPT is implemented by constructing the combinatorial Laplacian $L=D-A$, computing its Moore--Penrose pseudoinverse $L^\dagger$ via eigen-decomposition with threshold $\varepsilon_{\mathrm{pinv}}=10^{-10}$, and optimizing a nonnegative coupling with projected gradient descent combined with Sinkhorn--Knopp marginal scaling.

\subsubsection{Results}

\begin{table}[t]
	\centering
	\setlength{\tabcolsep}{15pt}
	\caption{Full comparison of matching accuracy on the OASIS-3 dataset (mean $\pm$ std). For EFGW, results across varying regularization parameters ($\varepsilon$) are provided. The best value for each metric is highlighted in bold.}
	\label{tab:oasis-full-results}
	\renewcommand{\arraystretch}{1.3}
	\begin{tabular}{lccc}
		\toprule
		& \multicolumn{3}{c}{\textbf{Metric}} \\
		\cmidrule(lr){2-4}
		\textbf{Method} & \textbf{Diffusion} & \textbf{Geodesic} & \textbf{Topology} \\
		\midrule
		\textbf{CDOT (Ours)} & \textbf{0.6136} $\pm$ 0.11 & 0.4640 $\pm$ 0.14 & -- \\
		FGW & 0.1853 $\pm$ 0.03 & \textbf{0.5375} $\pm$ 0.16 & -- \\
		EFGW ($\varepsilon=10^{-4}$) & 0.4097 $\pm$ 0.14 & 0.0000 $\pm$ 0.00 & -- \\
		EFGW ($\varepsilon=10^{-3}$) & 0.3342 $\pm$ 0.06 & 0.4583 $\pm$ 0.12 & -- \\
		EFGW ($\varepsilon=10^{-2}$) & 0.2889 $\pm$ 0.05 & 0.4178 $\pm$ 0.10 & -- \\
		EFGW ($\varepsilon=10^{-1}$) & 0.2567 $\pm$ 0.05 & 0.3988 $\pm$ 0.09 & -- \\
		Spectral & -- & -- & 0.0737 $\pm$ 0.03 \\
		IsoRank & -- & -- & \textbf{0.4055} $\pm$ 0.09 \\
		COPT & -- & -- & 0.0253 $\pm$ 0.01 \\
		\bottomrule
	\end{tabular}
\end{table}

Table~\ref{tab:oasis-full-results} presents the detailed matching accuracy (mean $\pm$ standard deviation) across all subject pairs. The results highlight a clear trade-off depending on the geometric representation. FGW performs best under geodesic distance because the shortest-path metric creates a rigid, isometric structure that aligns well with FGW's quadratic objective.

The stark performance gap under diffusion distance (CDOT: 0.6136 vs. FGW: 0.1853) provides critical insight into the behavior of the proposed loss function. Diffusion distance, by definition, averages over all possible paths to capture global information flow. While this yields a robust metric insensitive to topological noise, it essentially acts as a low-pass filter, smoothing out local variations and reducing the variance of pairwise distances. For FGW, this smoothing is detrimental. The GW objective minimizes a quadratic discrepancy, $|d_{\mathcal{X}}(x,x') - d_{\mathcal{Y}}(y,y')|^2$, which relies heavily on distinct pairwise contrasts (high variance) to lock the alignment. When diffusion smooths the metric, the distance landscape becomes flat, causing the non-convex GW objective to lose discriminative power and get trapped in local stationary points. In contrast, CDOT overcomes this via its \textit{operator-based formulation} and \textit{convexity}.
First, the distance operator $D_{\mathbb{P}_X}$ transforms the diffusion metric into a global geometric profile (e.g., mapping a point to its average diffusion distance to the rest of the brain). Even if pairwise contrasts are diminished, these global profiles remain distinct and alignable. CDOT's structural penalty aligns these global functional geometries directly.
Second, and most importantly, the CDOT objective is convex (Theorem~\ref{thm:wellposed-convex}). Unlike FGW, which wanders in the flattened landscape, CDOT effectively utilizes the linear feature term to anchor the solution and guarantees convergence to the global optimum even when the geometric signal is smooth.

Regarding topology-based methods, SM shows poor performance (0.0737), implying that purely structural alignment is insufficient. IsoRank, when enhanced with feature information ($H = 1-C_f$), shows significantly improved performance (0.4055). However, it still falls short of OT-based methods (CDOT and FGW). This suggests that while feature information is critical, IsoRank's reliance on local neighborhood similarity is less effective for brain registration than OT frameworks, which align graphs based on detailed geometric structures (pairwise distance matrices).

\end{document}